\setlist[enumerate]{nosep}
\theoremstyle{plain}
\newtheorem{theorem}{Theorem}[section]
\newtheorem{proposition}[theorem]{Proposition}
\newtheorem{lemma}[theorem]{Lemma}
\newtheorem{claim}[theorem]{Claim}
\theoremstyle{definition}
\newtheorem{definition}[theorem]{Definition}
\theoremstyle{remark}
\newtheorem{remark}[theorem]{Remark}
\newcommand{\ours}{\textsc{Sea}}
\newcommand{\lto}{\leftarrow}
\newcommand{\RR}{\mathbb{R}}
\newcommand{\vsq}{N\times N}
\newcommand{\indep}{\perp \!\!\! \perp}
\newcommand\numberthis{\addtocounter{equation}{1}\tag{\theequation}}
\title{Sample, estimate, aggregate:\\
A recipe for causal discovery foundation models}
\author{\name Menghua Wu \email {rmwu\{at\}mit.edu} \\
\addr Department of Computer Science, Massachusetts Institute of Technology
\ANDD
\name Yujia Bao \email yujia.bao\{at\}accenture.com \\
\addr Center for Advanced AI, Accenture
\ANDD
\name Regina Barzilay \email regina\{at\}csail.mit.edu \\
\addr Department of Computer Science, Massachusetts Institute of Technology
\ANDD
\name Tommi S. Jaakkola \email tommi\{at\}csail.mit.edu \\
\addr Department of Computer Science, Massachusetts Institute of Technology
}
\begin{document}

\maketitle

\begin{abstract}
Causal discovery, the task of inferring causal structure from data, has the potential to uncover mechanistic insights from biological experiments, especially those involving perturbations.
However, causal discovery algorithms over larger sets of variables tend to be brittle against misspecification or when data are limited.
For example, single-cell transcriptomics measures thousands of genes, but the nature of their relationships is not known, and there may be as few as tens of cells per intervention setting.
To mitigate these challenges, we propose a foundation model-inspired approach: a supervised model trained on large-scale, synthetic data to predict causal graphs from summary statistics --- like the outputs of classical causal discovery algorithms run over subsets of variables and other statistical hints like inverse covariance.
Our approach is enabled by the observation that typical errors in the outputs of a discovery algorithm remain comparable across datasets.
Theoretically, we show that the model architecture is well-specified,
in the sense that it can recover a causal graph consistent with graphs over subsets.
Empirically, we train the model to be robust to misspecification and distribution shift using diverse datasets.
Experiments on biological and synthetic data confirm that this model generalizes well beyond its training set, runs on graphs with hundreds of variables in seconds, and can be easily adapted to different underlying data assumptions.\footnote{
Our code is available at \url{https://github.com/rmwu/sea}.
}
\end{abstract}

\section{Introduction}
\label{intro}

A fundamental aspect of scientific research is to discover and validate causal hypotheses involving variables of interest.
Given observations of these variables, the goal of causal discovery algorithms is to extract such hypotheses in the form of directed graphs, in which edges denote causal relationships~\citep{causation-prediction-search}.
There are several challenges to their widespread adoption in basic science.
The core issue is that the correctness of these algorithms is tied to their assumptions on the data-generating processes, which are unknown in real applications.
In principle, one could circumvent this issue by exhaustively running discovery algorithms with different assumptions and comparing their outputs with surrogate measures that reflect graph quality~\citep{compatibility}.
However, this search would be costly: current algorithms must be optimized from scratch each time,
and they scale poorly to the graph and dataset sizes present in modern scientific big data~\citep{gwps}.

Causal discovery algorithms follow two primary approaches that differ in their treatment of the
causal graph.
Discrete search algorithms explore the super-exponential space of graphs by proposing and evaluating changes to a working graph~\citep{10.3389/fgene.2019.00524}.
While these methods are fast on small graphs, the combinatorial space renders them intractable for exploring larger structures.
An alternative is to frame causal discovery as a continuous optimization over weighted adjacency matrices.
These algorithms either fit a generative model to the data and extract the causal graph as a parameter~\citep{notears},
or train a supervised learning model on simulated data~\citep{petersen_ramsey_ekstrøm_spirtes_2023}.
However, these methods may be less robust beyond simple settings, and their optimization can be nontrivial~\citep{ng2024structure}.

In this work, we present \ours{}: Sample, Estimate, Aggregate, a supervised causal discovery framework that aims to perform well even when data-generating processes are unknown, and to easily incorporate prior knowledge when it is available.
We train a deep learning model to predict causal graphs from two types of statistical descriptors: the estimates of classical discovery algorithms over small subsets, and graph-level statistics.
Each classical discovery algorithm outputs a representation of a graph's equivalency class, and the types of errors that it makes may be comparable across datasets.
On the other hand, statistics like correlation or inverse covariance are fast to compute, and strong indicators for a graph's overall connectivity.
Theoretically, we illustrate a simple algorithm for recovering larger causal graphs that are consistent with estimates over subsets, and we show that there exists a set of model parameters that can map sets of subgraph estimates to the correct global graph.
Empirically, our training procedure forces the model to predict causal graphs across diverse synthetic data, including on datasets that are misaligned with the discovery algorithms' assumptions, or when insufficient subsets are provided.

Our experiments probe three qualities that we view a foundation model should fulfill, with thorough comparison to three classical baselines and five deep learning approaches.
Specifically, we assess the framework's ability to generalize to unseen and out-of-distribution data; to steer predictions based on prior knowledge; and to perform well in low-data regimes.
\ours{} attains the state-of-the-art results on synthetic and real causal discovery tasks, while providing 10-1000x faster inference.
To incorporate domain knowledge, we show that it is possible to swap classic discovery algorithms at inference time, 
for significant improvements on datasets that match the assumptions of the new algorithm.
Our models can also be finetuned at a fraction of the training cost to accommodate new graph-level statistics that capture different (e.g. nonlinear) relationships.
We extensively analyze \ours{} in terms of low-data performance, scalability, causal identifiability, and other design choices.
To conclude, while our experiments focus on specific algorithms and architectures, this work presents a blueprint for designing causal discovery foundation models, in which sampling heuristics, classical causal discovery algorithms, and summary statistics are the fundamental building blocks.
\section{Background and related work}
\label{related}

\subsection{Causal structure learning}


A \textbf{causal graphical model} is a directed graph $G=(V,E)$, where each node $i\in V$ corresponds to a random variable $X_i\in X$
and each edge $(i,j)\in E$ represents a causal relationship from $X_i\to X_j$.
There are a number of assumptions that relate data distribution $P_X$ to $G$~\citep{causation-prediction-search,gies}, which determine whether $G$ is uniquely identifiable, or identifiable up to an equivalence class.
In this work, we assume causal sufficiency -- that is, $V$ contains all the parents $\pi_i$ of every node $i$.
Causal graphical models
can be used, along with other information, to compute the downstream consequences of interventions.
An intervention on node $i$ refers to setting conditional $P(X_i\mid X_{\pi_i})$ to a different distribution $\tilde{P}(X_i\mid X_{\pi_i})$.
Our experiments cover the observational case (no interventions) and the case with perfect interventions on each node, i.e. for all nodes $i$, we have access to data where we set $P(X_i\mid X_{\pi_i}) \leftarrow \tilde{P}(X_i)$.


Given a dataset $D\sim P_X$, the goal of \textbf{causal structure learning} (causal discovery) is to recover $G$.
There are two main challenges.
First, the number of possible graphs is super-exponential in
the number of nodes $N$, so algorithms
must navigate this combinatorial search space efficiently.
Second, depending on data availability and the underlying data generation process, causal discovery algorithms may or may not be able to recover $G$ in practice. In fact, many algorithms are only analyzed in the infinite-data
regime and require at least thousands of data samples for reasonable empirical performance~\citep{causation-prediction-search, dcdi}.

\textbf{Discrete search algorithms}
encompass diverse strategies for traversing the combinatorial space of possible graphs.
Constraint-based algorithms
are based on conditional independence tests, whose discrete results inform of the presence or absence of edges, and
whose statistical power depends directly on dataset size~\citep{10.3389/fgene.2019.00524}.
These include the observational FCI and PC algorithms~\citep{fci},
and the interventional JCI algorithm~\citep{mooij2020joint}.
Score-based methods define 
a continuous score that guides the search through the discrete space of valid graphs, where the true graph lies at the optimum.
Examples include GES~\citep{ges}, GIES~\citep{gies}, CAM~\citep{cam}, and IGSP~\citep{igsp}.
Finally, semi-parametric methods such as LiNGAM~\citep{lingam} or additive noise models~\citep{hoyer2008nonlinear} exploit asymmetries implied by the model class to identify graph connectivity and causal ordering.

\textbf{Continuous optimization approaches} recast the
combinatorial space of graphs into a continuous space
of weighted adjacency matrices.
Many of these works train a generative model to learn
the empirical data distribution,
which is parameterized through the adjacency matrix~\citep{notears, grandag, dcdi}.
Others focus on properties related to the empirical
data distribution, such as a relationship between
the underlying graph and the Jacobian of the learned model~\citep{jacobian}, or between the Hessian of the
data log-likelihood and the topological order~\citep{diffan}.
Finally, amortized inference approaches~\citep{ke2022learning,avici,petersen_ramsey_ekstrøm_spirtes_2023} frame causal discovery as a supervised learning problem,
where a neural network is trained to predict (synthetic) graphs from (synthetic) datasets.
To incorporate new information, current supervised methods must simulate new datasets and re-train.
In addition, these models that operate on raw observations scale poorly to larger datasets.
Since this direction is most similar to our own, we include further comparisons in \ref{subsec:baselines}.

\begin{figure*}[t]
\begin{center}
\centerline{\includegraphics[width=\linewidth, page=2]{figures/fig12}}
\vspace{-0.1in}
\caption{
Overview of our goals and approach.
(A) Criteria we aim to fulfill.
(B-C) Inference and training procedure.
Green: raw data. Blue: graph / features. Yellow: Learned. Gray: Stochastic, but not learned.}
\label{fig:overview}
\end{center}
\vspace{-0.4in}
\end{figure*}

\subsection{Foundation models}
\label{sec:fm}

The concept of foundation models
has revolutionized the machine learning workflow
in a variety of disciplines:
instead of training domain-specific models from scratch, we start from a pretrained, general-purpose model~\citep{bommasani2021opportunities}.
This work describes a blueprint for designing ``foundation models'' in the context of causal discovery.
The precise definition of a foundation model varies by application, but we aim to fulfill the following properties (Figure~\ref{fig:overview}A), enjoyed by modern text and image foundation models~\citep{clip, gpt}.
\begin{enumerate}
\item A foundation model should enable us to outperform domain-specific models trained from scratch, even if the former has never seen similar tasks during training~\citep{radford2019language}.
In the context of causal discovery, we would like to train a model that outperforms any individual algorithm on real, misspecified, and/or out-of-distribution datasets.
\item It should be possible to explicitly steer a foundation model's behavior towards better performance on new tasks, either directly at inference time, e.g. ``prompting''~\citep{reynolds2021prompt}, or at low cost compared to pretraining~\citep{ouyang2022training}.
Here, we would like to easily change our causal discovery algorithm's ``assumptions'' regarding the data, e.g. by incorporating the knowledge of non-linearity, non-Gaussianity.
\item Scaling up a foundation model should lead to improved performance in few-shot or data-poor regimes~\citep{gpt}.
This aspect we analyze empirically.
\end{enumerate}
In the following sections, we will revisit these desiderata from both the design and experimental perspectives.

\section{Methods}
\label{methods}

Sample, Estimate, Aggregate (\ours{}) is a supervised causal discovery framework built upon the intuition that summary statistics and marginal estimates (outputs of a classical causal discovery algorithm on subsets of nodes) provide useful hints towards global causal structure.
In the following sections, we describe how these statistics are efficiently estimated from data (Section~\ref{subsec:framework}), and how we train a neural network to predict causal graphs from these inputs (Section~\ref{subsec:training}).
We expand upon the model architecture in Section~\ref{subsec:model}, and conclude with theoretical motivation for this architecture in Section~\ref{subsec:theory}.

\subsection{Inference procedure}
\label{subsec:framework}

Given a new dataset $D\in\RR^{M\times N}$, we sample small batches of nodes and observations; estimate global summary statistics and local subgraphs; and aggregate these information with a trained neural network (Figure \ref{fig:overview}B).

\textbf{Sample:} takes as input dataset $D$; and outputs data batches $\{ D_0, D_1, \dots , D_T \}$ and node subsets $\{ S_1,\dots, S_T\}$.
\begin{enumerate}
    \item Sample $T+1$ batches of $b \ll M$ observations uniformly at random from $D$.
    \item Initialize selection scores $\alpha\in\RR^{\vsq}$ (e.g. correlation or inverse covariance, computed over $D_0$ or $D$).
    \item Sample $T$ node subsets of size $k$. Each subset $S_t$ is constructed iteratively as follows.
    \begin{enumerate}
    \item The initial node is sampled with probability proportional to $\sum_{j\in V} \alpha_{i,j}$.
    \item Each subsequent node is added with probability proportional to $\sum_{j\in S_t} \alpha_{i,j}$ (prioritizing connections to nodes that have already been sampled),
    until $\| S_t\| = k$.
    \item We update $\alpha$, down-weighting $\alpha_{i,j}$ proportional to the number of times $i,j$ have been selected.
    \end{enumerate}
\end{enumerate}
We include further details and analyze alternative strategies for sampling nodes in~\ref{subsec:sampling}.

\textbf{Estimate:} takes as inputs data batches, node subsets, and (optionally) intervention targets; and outputs global statistics $\rho$ and marginal estimates $\{E'_1,\dots, E'_T\}$.
\begin{enumerate}
    \item Compute global statistics $\rho\in\mathbb{R}^{\vsq}$ over $D_0$ (e.g. correlation or inverse covariance).
    \item Run discovery algorithm $f$ to obtain marginal estimates $f(D_t[S_t]) = E'_t$ for $t = 1\dots T$.
\end{enumerate}
We use $D_t[S_t]$ to denote the observations in $D_t$ that correspond only to the variables in $S_t$.
Each estimate $E'_t$ is a $k\times k$ adjacency matrix, corresponding to the $k$ nodes in $S_t$.

\textbf{Aggregate:} takes as inputs global statistics, marginal estimates, and node subsets.
A trained aggregator model outputs the predicted global causal graph $\hat E \in (0,1)^{N\times N}$ (Section~\ref{subsec:model}).

\subsection{Training procedure}
\label{subsec:training}

The training procedure mirrors the inference procedure (Figure~\ref{fig:overview}C).
We assume access to pairs of simulated datasets and graphs, $(D, G)$,
where each dataset $D$ is generated by a parametric model, whose dependencies are given by graph $G$ (Section~\ref{data-settings}).
Datasets $D$ are summarized into global statistics and marginal estimates via the sampling and estimation steps.
The resultant features are input to the aggregator (neural network), which is supervised by graphs $G$.

We trained two aggregator models, which employed the \textsc{Fci} algorithm with the Fisherz test and \textsc{Gies} algorithm with the Bayesian information criterion~\citep{bic}.
Both estimation algorithms were chosen for speed, but they differ in their assumptions, discovery strategies, and output formats.
Though we only trained two models, alternate estimation algorithms that produce the same output type\footnote{PC~\citep{causation-prediction-search} also predicts CPDAGs~\citep{cpdag}, so its outputs may be input to the \textsc{Gies} model, while FCI's PAG outputs cannot. 
} may be used at inference time (experiments in Section~\ref{subsec:finetune}).
The training dataset contains both data that are correctly and incorrectly specified (Section \ref{data-settings}), so the aggregator is forced to predict the correct graph regardless.
In addition, each training instance samples a random number of marginal estimates, which might not cover every edge.
As a result, the aggregator must extrapolate to unseen edges using the available estimates and the global statistics. For example, if two variables have low correlation, and they are located in different neighborhoods of the already-identified graph, it may be reasonable to assume that they are unconnected.

\begin{figure*}[t]
\begin{center}
\vskip -0.15in
\centerline{\includegraphics[width=\textwidth, page=1]{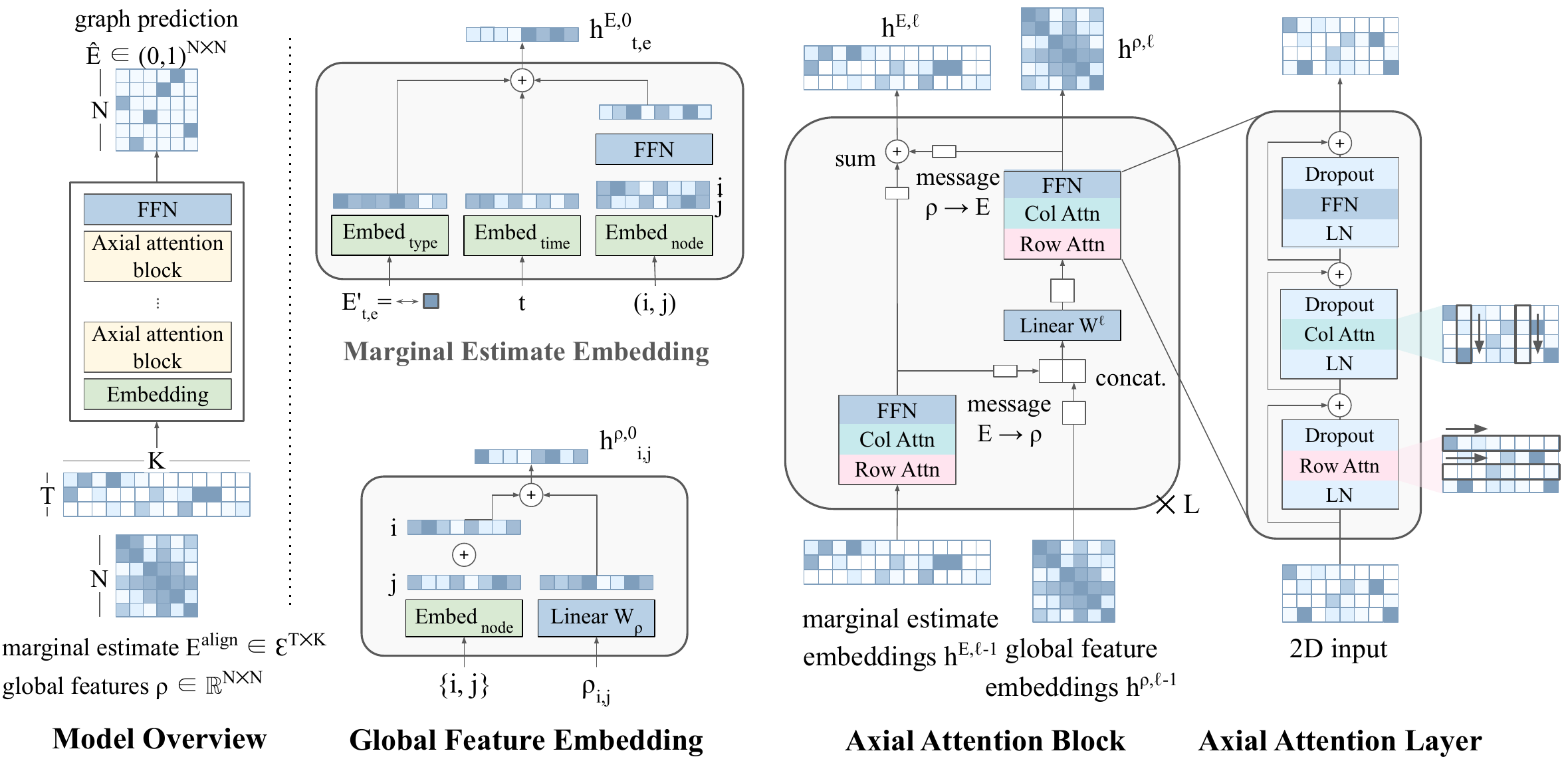}}
\vskip -0.1in
\caption{Aggregator architecture. Marginal graph estimates and global statistics are embedded into the model dimension. 1D positional embeddings are added along both rows and columns.
Embedded features pass through a series
of axial attention blocks, which attend to the marginal and global features.
Final layer global features pass through
a feedforward network to predict the causal graph.}
\label{model}
\end{center}
\vskip -0.35in
\end{figure*}

\subsection{Model architecture}
\label{subsec:model}

The aggregator is a neural network that takes as input: global statistics $\rho\in\RR^{\vsq}$,
marginal estimates $E'_{1\dots T} \in \mathcal{E}^{T\times k\times k}$,
and node subsets $S_{1\dots T}\in[N]^{T\times k}$ (Figure~\ref{model}), where
$\mathcal{E}$ is the set of output edge types for
the causal discovery algorithm $f$.\footnote{E.g. ``no relationship,'' ``$X$ causes $Y$,'' ``$X$ is not a descendent of $Y$''}

We project global statistics into the model dimension
via a learned linear projection matrix $W_\rho: \RR\to\RR^d$,
and we embed edge types via a learned embedding $\text{ebd}_\mathcal{E}:\mathcal{E}\to\mathbb{R}^d$.
To collect estimates of the same edge over all subsets, we align entries of $E'_{1\dots T}$ into
$E^{' \textrm{align}}_T \in \mathcal{E}^{T\times K}$
\begin{equation}
    E^{' \textrm{align}}_{t,e=(i,j)} = \begin{cases}
        E'_{t,i,j} & \textrm{ if } i \in S_t, j \in S_t\\
        0 & \textrm{ otherwise}
    \end{cases}
\end{equation}
where $t$ indexes into the subsets, $e=1\dots K$ indexes into the set of unique edges, i.e. the union of pairs $(i,j)$ over all $E'_t$.
We add learned 1D positional embeddings along both dimensions of each input,
\begin{align*}
    \textrm{pos-ebd}(\rho_{i,j}) =~& \textrm{ebd}_\text{node}(i') + \textrm{ebd}_\text{node}(j') \\
    \textrm{pos-ebd}(E^{' \textrm{align}}_{t,e}) =~&
        \textrm{ebd}_\text{time}(t)
        + \textrm{FFN}([\textrm{ebd}_\text{node}(i'), \textrm{ebd}_\text{node}(j')]) 
\end{align*}
where $i',j'$ index into a random permutation on $V$
for invariance to node permutation and graph size.\footnote{
The random permutation $i' = \sigma(V)_i$
allows us to avoid updating positional embeddings
of lower order positions more than higher order ones,
due to the mixing of graph sizes during training.}
Due to the (a)symmetries of their inputs,
$\textrm{pos-ebd}(\rho_{i,j})$ is symmetric, while $\textrm{pos-ebd}(E^{' \textrm{align}}_{t,e})$ considers
the node ordering.
In summary, the inputs to our axial attention blocks are
\begin{align}
    h^\rho_{i,j} &=
    (W_\rho \rho)_{i,j}
    + \textrm{pos-ebd}(\rho_{i,j}) \\
    h^E_{t,e} &=
    \textrm{ebd}_\mathcal{E}(E^{' \textrm{align}}_{t,e})
    + \textrm{pos-ebd}(E^{' \textrm{align}}_{t,e})
\end{align}
for $i,j\in[N]^2$, $t\in[T]$, $e\in[K]$.
Note that attention is permutation invariant, so positional embeddings are \emph{required} for the model to know which edges belong to the same subset, or what each edge's endpoints endpoints are.

\paragraph{Axial attention} An axial attention block
contains two axial attention layers (marginal estimates, global statistics)
and a feed-forward network
(Figure~\ref{model}, right).
Given a 2D input,
an axial attention layer
attends first along the rows, then along the columns.
For example, on a matrix of size \texttt{(R,C,d)},
one pass of the axial attention layer is equivalent to
running standard self-attention along \texttt{C}
with batch size \texttt{R}, followed by the reverse.
For marginal estimates, \texttt{R} is the number of subsets $T$, and \texttt{C} is the number of unique edges $K$.
For global statistics, \texttt{R} and \texttt{C} are both the total number of vertices $N$.

Following~\cite{msa-transformer},
each self-attention mechanism is preceded by layer normalization
and followed by dropout, with residual connections to the input,
\begin{equation}
    x = x + \text{Dropout}(\text{Attn}(\text{LayerNorm}(x))).
\end{equation}

We pass messages between the marginal and global layers to propagate information.
Let $\phi_{E,\ell}$ be marginal layer $\ell$,
let $\phi_{\rho,\ell}$ be global layer $\ell$,
and let $h^{\cdot,\ell}$ denote the hidden representations out of layer $\ell$.
The marginal to global message $m^{E\to\rho}\in\RR^{N\times N\times d}$ contains representations of each edge averaged over subsets,
\begin{align}
    m^{E\to\rho,\ell}_{i,j} &= \begin{cases}
        \frac{1}{T_e} \sum_{t}
            h^{E,\ell}_{t,e=(i,j)} & \textrm{ if }
            \exists S_t, i,j\in S_t \\
        \epsilon & \textrm{ otherwise.}
    \end{cases}
\end{align}
where $T_e$ is the number of $S_t$ containing $e$, and missing entries are padded to learned constant $\epsilon$.
The global to marginal message $m^{\rho\to E}\in\RR^{K\times d}$ is simply the hidden representation itself,
\begin{align}
    m^{\rho\to E,\ell}_{t,e=(i,j)} &= 
        h^{\rho,\ell}_{i,j}.
\end{align}
We update representations based on these messages as follows.
\begin{align}
    h^{E,\ell} &= \phi_{E,\ell}(h^{E,\ell-1})
    && \text{\small (marginal feature)} \\
    h^{\rho,\ell-1} &\lto W^{\ell} \left[h^{\rho,\ell-1}, m^{E\to\rho,\ell}\right] 
    && \text{\small (marginal to global)}\\
    h^{\rho,\ell} &= \phi_{\rho,\ell}(h^{\rho,\ell-1})
    && \text{\small (global feature)}\\
    h^{E,\ell} &\lto h^{E,\ell} + m^{\rho\to E,\ell}
    && \text{\small (global to marginal)}
\end{align}
$W^{\ell}\in\mathbb{R}^{2d\times d}$ is a learned linear
projection, and $[\cdot]$ denotes concatenation.

\paragraph{Graph prediction}

For each pair of vertices $i\ne j\in V$,
we predict $e=0,1$, or $2$ for no edge, $i\to j$, and $j\to i$.
We do not additionally enforce that our predicted graphs are acyclic, similar in spirit to~\cite{enco}.
Given the output of the final axial attention block
$h^\rho$, we compute logits
\begin{equation}
    z_{\{i,j\}} = \text{FFN}\left(
     \left[ h^\rho_{i,j}, h^\rho_{j,i} \right] \right)
     \in \mathbb{R}^3
\end{equation}
which correspond to probabilities after softmax normalization.
The overall output $\hat E\in\{0, 1\}^{\vsq}$
is supervised by the ground truth $E$.
Our model is trained with cross entropy loss and L2 regularization.

\paragraph{Implementation details}
\label{subsec:details}

Unless otherwise noted, inverse covariance is used for the global statistic and selection score, due to its relationship to partial correlation.
We sample batches of size $b=500$ 
over $k=5$ nodes each (analysis in~\ref{sec:main_ablations}).
Our model was implemented with 4 layers with 8 attention heads and hidden dimension 64. Our model was trained using the AdamW optimizer with a learning rate of 1e-4~\citep{adamw}.
See~\ref{subsec:pretraining} for additional details about hyperparameters.

\paragraph{Complexity}
The aggregator should be be invariant to node labeling, while maintaining the order of sampled subsets, so attention-based architectures were a natural choice~\citep{attention}.
If we concatenated $\rho$ and $E'_{1\dots T}$ into a length $N^2T$ input, quadratic-scaling attention would cost $O(N^4T^2)$.
Instead, we opted for axial attention blocks, which attend along each of the three axes separately in $O(N^3 T + N^2T^2)$.
Both are parallelizable on GPU,
but the latter is more efficient, especially on larger $N$.

\subsection{Theoretical interpretation}
\label{subsec:theory}

\textbf{Marginal graph resolution~} It is well-established that estimates of causal graphs over subsets of variables can be ``merged'' into consistent graphs over their union~\citep{compatibility,ion, cdmini}.
In \ref{resolver-toy-example} and \ref{subsec:resolve-proof}, we describe a simple algorithm towards this task, based on the intuition that edges absent from the global graph should be absent from at least one marginal estimate, and that v-structures present in the global graph are present in the marginal estimates.
We then prove Theorem~\ref{thm:resolve-exists}, which states that the axial attention architecture is well-specified as a model class.
That is, there exists a setting of its weights that can map marginal estimates into global graphs.
Our construction in \ref{subsec:power-proof} follows the same reasoning steps as the simple algorithm in \ref{subsec:resolve-proof}, and it provides realistic bounds on the model size.

\begin{restatable}{theorem}{thmresolve}
\label{thm:resolve-exists}
    Let $G=(V,E)$ be a directed acyclic graph
    with maximum degree $d$.
    For $S\subseteq V$,
    let $E'_S$ denote the marginal estimate
    over $S$.
    Let $\mathcal{S}_d$ denote the superset
    that contains all subsets $S\subseteq V$
    of size at most $d$.
    Given $\{ E'_S \}_{S \in \mathcal{S}_{d+2}}$,
    a stack of $L$ axial attention blocks
    has the capacity to recover $G$'s skeleton and
    v-structures in $O(N)$ width,
    and propagate orientations on paths of $O(L)$ length.
\end{restatable}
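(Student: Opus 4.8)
The plan is a constructive existence argument in two phases that mirror the resolution algorithm referenced in the statement: first, a few blocks distill the skeleton and v-structures of $G$ out of the marginal estimates; then the remaining blocks propagate edge orientations through the message-passing structure. I would begin from the combinatorial facts that pin down why subsets of size $d+2$ suffice. In a DAG, two non-adjacent nodes $i,j$ cannot both be descendants of each other, so at least one, say $j$, is a non-descendant of $i$; by the local Markov property $i$ is d-separated from $j$ given $\mathrm{pa}(i)$, a set of size at most $d$. Hence on the subset $S=\{i,j\}\cup\mathrm{pa}(i)$, of size at most $d+2$, the discovery algorithm reports $i,j$ as non-adjacent, whereas adjacent pairs appear adjacent in every marginal containing them. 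This yields the logical rule that $\{i,j\}$ is absent from $G$ iff it is absent from at least one $E'_S$ with $i,j\in S$ — a min (AND) reduction over subsets. Crucially, the parent-based separator excludes any common child $k$ of $i$ and $j$, so the same size-$(d+2)$ subsets that separate $i,j$ witness, for every unshielded triple $i-k-j$, that $k$ lies outside a separating set; this certifies exactly the v-structures of $G$ and introduces no spurious colliders.

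Next I would exhibit weights that make the axial attention blocks carry out these reductions. The marginal-to-global message $m^{E\to\rho}_{i,j}$ already pools the marginal predictions of edge $(i,j)$ as an explicit average over all subsets containing both endpoints; with a binary edge-present encoding, this average equals one exactly when the edge is present in every marginal, so a feed-forward threshold converts it into the min/AND reduction, with the learned padding constant $\epsilon$ chosen so that absent entries do not bias the average. The resolved skeleton and detected colliders are then written into dedicated coordinates of the global cells $h^\rho_{i,j}$. The per-node bookkeeping and collider matching needed for this step consume $O(N)$ coordinates, which accounts for the claimed width. Because attention is permutation invariant, the 1D positional embeddings are what let the model identify which marginal entry corresponds to which global edge, and which triples $i-k-j$ are unshielded.

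For the orientation phase I would implement Meek's rules inside the global axial attention layers. A single global layer lets cell $(i,j)$ attend to cells sharing an endpoint, which is enough to fire one Meek-rule application — for example, orienting $k\to j$ once $i\to k$ is established and $i,j$ are non-adjacent. Since each application advances an orientation by one hop along an undirected path, I would argue by induction on layer depth that after $\ell$ orientation layers every edge reachable within $\ell$ Meek steps of a v-structure is correctly oriented; a stack of $L$ blocks therefore propagates orientations along any path of length $O(L)$, matching the stated reach.

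The main obstacle is the faithful simulation of discrete logic by smooth attention. Realizing exact min/AND reductions, collider detection, and the case-split conditionals of Meek's rules demands carefully scaled weights that push softmax into saturation and feed-forward units into hard-threshold behavior, while guaranteeing that the $\epsilon$ padding and the averaging in $m^{E\to\rho}$ do not corrupt any reduction. The crux is to verify that these saturated gadgets compose cleanly across all $L$ blocks, so that round-off does not accumulate and the width stays $O(N)$; by contrast, the identifiability facts underlying the first phase are standard consequences of d-separation under the degree bound.
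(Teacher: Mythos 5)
Your proposal follows essentially the same route as the paper's proof: an AND/min-reduction for the skeleton implemented through the marginal-to-global averaging message plus a thresholding feedforward map, v-structure information written into dedicated coordinates of the global cells with $O(N)$ per-node bookkeeping, and Meek-style orientation propagation advancing one hop per block via saturated (large-constant) attention, yielding $O(L)$-length paths at $O(N)$ width. The paper's construction supplies the gadgets you defer to the ``crux'' --- it reads v-structure orientations directly off the marginal estimates by using key/query matrices that match pairs of directed edges sharing a target node, and has the marginal layer record each edge's 1-hop neighborhood so the global layer's dot products can enforce the non-adjacency (no-triangle) condition in Meek rule 1 --- but the decomposition, the width/depth accounting, and the reliance on the resolution algorithm's correctness are the same.
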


There are two practical considerations that motivate a framework like \ours{}, instead of directly running classic reconciliation algorithms.
First, many of these algorithms rely on specific characterizations of the data-generating process, e.g. linear non-Gaussian~\citep{cdmini}.
While our proof does not constrain the causal mechanisms or exogenous noise, it assumes that the marginal estimates are correct.
These assumptions may not hold on real data.
However, the failure modes of any particular causal discovery algorithm may be similar across datasets and can be corrected using statistics that capture richer information.
For example, an algorithm that assumes linearity will make (predictably) asymmetric mistakes on non-linear data and underestimate the presence of edges.
However, we may be able to recover nonlinear relationships with statistics like distance correlation~\citep{dcor}.
By training a deep learning model to reconcile marginal estimates and interpret global statistics, we are less sensitive to artifacts of sampling and discretization (e.g. p-value thresholds, statistics $\lessgtr$ 0).
The second consideration is that checking a combinatorial number of subsets is wasteful on smaller graphs and infeasible on larger graphs.
In fact, if we only leverage marginal estimates, we must check at least $O(N^2)$ subsets to cover each edge at least once.
To this end, the classical Independence Graph algorithm~\citep{causation-prediction-search} motivates statistics such as inverse covariance to initialize the undirected skeleton and reduce the number of independence tests required.
This allows us to use marginal estimates more efficiently, towards answering orientation questions.
We verify this latter consideration in Section~\ref{sec:main_ablations}, where we empirically quantify the number of estimates a global statistic is ``worth.''


\textbf{On identifiability~} The primary goal of this paper is to develop a practical framework for causal discovery, especially when data assumptions are unknown.
Instead of focusing on the identifiability of any particular setting,
we provide these interpretations of our model's outputs, and show empirically that our model respects classic identifiability theory (Section~\ref{sec:identifiability}).
The model will always output an orientation for all edges, but the graph can be interpreted as one member of an equivalence class.
Metrics can be computed with respect to either the ground truth graph (if identifiable) or the inferred equivalence class, e.g. the implied CPDAG.
When data do not match the estimation algorithm's assumptions, performance is inherently an empirical question, and
we show empirically that our model still does well (Section~\ref{subsec:main-results}).

\section{Experimental setup}
\label{sec:experiment-setup}

Our experiments aim to address the three desiderata proposed in Section~\ref{sec:fm} -- namely, generalization, adaptability, and emergent few-shot behavior.
These experiments span both real and synthetic data.
Real experiments quantify the practical utility of this framework, while synthetic experiments allow us to probe and compare each design choice in a controlled setting.

\subsection{Datasets}
\label{data-settings}

We pretrained \ours{} models on 6,480 synthetic datasets, which constitute approximately 280 million individual observations, each of 10-100 variables.\footnote{
3 mechanisms, 3 graph sizes, 4 sparsities, 2 topologies, $1000N$ examples, 90 datasets $\to$ 280,800,000 examples.
For a sense of scale, single cell foundation models are trained on 300K~\citep{rosen2024toward} to 30M cells~\citep{cui2024scgpt}.
}
To assess generalization and robustness,
we evaluate on unseen in-distribution and out-of-distribution synthetic datasets, as well as two real biological datasets~\citep{sachs,gwps}, using the versions from~\cite{igsp,causalbench}.
To probe for emergent few-shot behavior, we down-sample both the training and testing sets.
We also include experiments on simulated mRNA datasets with unseen datasets in Appendix~\ref{subsec:sergio} \citep{sergio}.

The training datasets were constructed by 1) sampling Erd\H{o}s-R\'enyi and scale free graphs with $N=10, 20, 100$ nodes and $E=N, 2N, 3N, 4N$ expected edges; 2) sampling random instantiations of causal mechanisms (Linear, NN with additive/non-additive Gaussian noise); and 3) iteratively sampling observations in topological order (details in Appendix~\ref{subsec:causal-mechanisms}).
From every causal graphical model (steps 1-2), we generated two datasets, each with $1000N$ points: either all observational, or split equally among regimes (observational and perfect single-node interventions on all nodes).
All models that can accommodate interventions were run on the interventional datasets, with complete knowledge of the intervention target identities.
The remaining were run on the observational datasets.
We generated 90 training, 5 validation, and 5 testing datasets for each combination. For testing, we also sampled out-of-distribution datasets with 1) Sigmoid and Polynomial mechanisms with Gaussian noise; and 2) Linear with additive non-Gaussian noise.

\subsection{Metrics}

We report standard causal discovery metrics. These include both discrete and continuous metrics, as neural networks can be notoriously uncalibrated~\citep{calibration},
and arbitrary discretization thresholds may impact the robustness of findings~\citep{ng2024structure,emergence}.
For all continuous metrics, we exclude the diagonal since several baselines manually set it to zero~\citep{dcdi, dcdfg}.

\textbf{SHD:} Structural Hamming distance is the minimum number of edge edits required to match two graphs~\citep{shd} (predicted and true DAGs, or the implied CPDAGs).
Discretization thresholds are as published or default to 0.5.

\textbf{mAP:} Mean average precision computes the area under precision-recall curve per edge and averages over the graph (predicted and true DAGs, or undirected skeletons). The random guessing baseline depends on the positive rate.

\textbf{AUC:} Area under the ROC curve~\citep{auc} computed per edge (binary prediction) and averaged over the graph.
For each edge, 0.5 indicates random guessing, while 1 indicates perfect performance.

\textbf{Orientation accuracy:} We compute the accuracy of edge orientations as
\begin{equation}
    \text{OA} = \frac{
        \sum_{(i,j)\in E} \mathbbm{1} \{ P(i,j) > P(j,i) \}
    }{
        \| E \|
    }.
\end{equation}
Since OA is normalized by $\|E\|$, it is invariant to the positive rate.
In contrast to orientation F1~\citep{deci},
it is also invariant to the assignment of forward/reverse edges as 1/0.

\subsection{Baselines}

We compare against several deep learning and classical baselines.
All baselines were trained and/or run from scratch on each testing dataset using their published code and hyperparameters, except \textsc{Avici} (their recommended checkpoint was trained on their synthetic train and test sets after publication, Appendix~\ref{subsec:baselines}).

\textbf{DCDI} \citep{dcdi} extracts the causal graph as a parameter of a generative model. The \textsc{G} and \textsc{Dsf} variants use Gaussian or deep sigmoidal flow likelihoods, respectively.
\textbf{DCD-FG} \citep{dcdfg} follows \textsc{DCDI-G}, but factorizes the graph into a product of two low-rank matrices for scalability.
\textbf{\textsc{DiffAn}} \citep{diffan} uses the trained model's Hessian to obtain a topological ordering, followed by a classical pruning algorithm.
\textbf{\textsc{AVICI}} \citep{avici} uses an amortized inference approach to estimate $P(G\mid D)$ over a class of data-generating mechanisms via variational inference.
\textbf{\textsc{VarSort}} (a.k.a. ``sort and regress'')~\citep{varsort} sorts nodes by marginal variance and sparsely regresses nodes based on their predecessors.
This naive baseline is intended to reveal artifacts of synthetic data generation.
\textbf{\textsc{FCI}, \textsc{GIES}} quantify the predictive power of FCI and GIES estimates, when run over all nodes.
\textsc{VarSort}, \textsc{Fci}, and \textsc{Gies} were run using non-parametric bootstrapping~\citep{bootstrap}, with 100 estimates of the full graph (based on 1000 examples each), where the final prediction for each edge is its frequency of appearing as directed.
Since these methods are treated as oracles, the bootstrapping strategy was selected to maximize test performance. Visualizations of the bootstrapped graph can be found in Figures~\ref{fig:gies-resolve} and \ref{fig:fci-resolve}.

\section{Results}
We highlight representative results in each section, with additional experiments and analyses in Appendix \ref{sec:more-experiments}.
\begin{enumerate}
\item Section~\ref{subsec:main-results} examines the case where we have no prior knowledge about the data. Our models achieve high performance out-of-the-box, even when the data are misspecified or out-of-domain.
\item Section~\ref{subsec:finetune} focuses on the case where we do know (or can estimate) the class of causal mechanisms or exogenous noise. We show that adapting our pretrained models with this information at zero/low cost leads to substantial improvement and exceeds the best baseline trained from scratch.
\item Section~\ref{sec:identifiability} analyzes \ours{} predictions in context of classic identifiability theory.
In particular, we focus on the linear Gaussian case, and show that \ours{} approaches ``oracle'' performance (with respect to the MEC), while simply running a classic discovery algorithm cannot, on our finite datasets.
\item Section~\ref{sec:main_ablations} contains a variety of ablation studies. In particular, \ours{} exhibits impressive low-data performance, requiring only 400 samples to perform well on $N=100$ datasets.
We also ablate estimation hyperparameters and the contribution of marginal/global features.
\end{enumerate}

\begin{table*}[t]
\setlength\tabcolsep{3.5 pt}
\caption{Synthetic experiments.
Mean/std over 5 distinct Erd\H{o}s-R\'enyi graphs, with metrics relative to ground truth DAG.
\textsc{DiffAn}, \textsc{VarSort}, \textsc{Fci}, \textsc{Sea(Fci)} run on observational data only. Evaluation w.r.t. CPDAG and undirected skeleton in Tables~\ref{table:cpdag} and \ref{table:skeleton},
with additional baselines and ablations in Appendix~\ref{sec:more-experiments}.
$\dagger$ indicates o.o.d. setting.
$*$ indicates non-parametric bootstrapping.
Runtimes with 1 CPU and 1 V100 GPU.
}
\vspace{-0.15in}
\label{table:synthetic}
\begin{center}
\begin{small}
\begin{tabular}{ccl rr rr rr rr rr r}
\toprule
$N$ & $E$ & Model 
& \multicolumn{2}{c}{Linear} 
& \multicolumn{2}{c}{NN add.}
& \multicolumn{2}{c}{Sigmoid$^\dagger$}
& \multicolumn{2}{c}{Polynomial$^\dagger$} 
& Overall
\\
\cmidrule(l{\tabcolsep}){4-5}
\cmidrule(l{\tabcolsep}){6-7}
\cmidrule(l{\tabcolsep}){8-9}
\cmidrule(l{\tabcolsep}){10-11}
\cmidrule(l{\tabcolsep}){12-12}
&&& \multicolumn{1}{c}{mAP $\uparrow$} & \multicolumn{1}{c}{SHD $\downarrow$} & \multicolumn{1}{c}{mAP $\uparrow$} & \multicolumn{1}{c}{SHD $\downarrow$} & \multicolumn{1}{c}{mAP $\uparrow$} & \multicolumn{1}{c}{SHD $\downarrow$} & \multicolumn{1}{c}{mAP $\uparrow$} & \multicolumn{1}{c}{SHD $\downarrow$} & \multicolumn{1}{c}{Time(s) $\downarrow$} \\
\midrule

\multirow{9}{*}{20} & \multirow{9}{*}{20} & \textsc{Dcdi-G} & $0.59 \scriptstyle \pm .12$& $6.4 \scriptstyle \pm .9$& $0.78 \scriptstyle \pm .07$& $\textbf{3.0} \scriptstyle \pm .7$& $0.36 \scriptstyle \pm .06$& $42.7 \scriptstyle \pm .3$& $0.42 \scriptstyle \pm .08$& $10.4 \scriptstyle \pm .4$& 4735.7\\
&& \textsc{Dcdi-Dsf} & $0.66 \scriptstyle \pm .16$& $5.2 \scriptstyle \pm .3$& $0.69 \scriptstyle \pm .18$& $4.2 \scriptstyle \pm .5$& $0.37 \scriptstyle \pm .04$& $43.2 \scriptstyle \pm .4$& $0.26 \scriptstyle \pm .08$& $15.7 \scriptstyle \pm .2$& 3569.1\\
&& \textsc{DiffAn} & $0.19 \scriptstyle \pm .09$& $40.2 \scriptstyle \pm 4.4$& $0.16 \scriptstyle \pm .10$& $38.6 \scriptstyle \pm 3.1$& $0.29 \scriptstyle \pm .11$& $19.2 \scriptstyle \pm .6$& $0.09 \scriptstyle \pm .03$& $49.7 \scriptstyle \pm 4.6$& 434.3\\
&& \textsc{Avici} & $0.48 \scriptstyle \pm .17$& $17.2 \scriptstyle \pm .1$& $0.59 \scriptstyle \pm .09$& $10.8 \scriptstyle \pm .1$& $0.42 \scriptstyle \pm .13$& $17.2 \scriptstyle \pm .8$& $0.24 \scriptstyle \pm .08$& $18.4 \scriptstyle \pm .1$& 2.0\\
\cmidrule(l{\tabcolsep}){3-12}
&& \textsc{VarSort*} & $0.81 \scriptstyle \pm .08$& $10.0 \scriptstyle \pm .4$& $0.81 \scriptstyle \pm .15$& $6.6 \scriptstyle \pm .7$& $0.50 \scriptstyle \pm .13$& $16.1 \scriptstyle \pm .7$& $0.33 \scriptstyle \pm .13$& $17.1 \scriptstyle \pm .1$& 0.4\\
&& \textsc{Fci*} & $0.66 \scriptstyle \pm .07$& $19.0 \scriptstyle \pm .3$& $0.42 \scriptstyle \pm .19$& $17.4 \scriptstyle \pm .2$& $0.56 \scriptstyle \pm .08$& $18.5 \scriptstyle \pm .5$& $0.41 \scriptstyle \pm .14$& $18.9 \scriptstyle \pm .3$& 22.2\\
&& \textsc{Gies*} & $0.84 \scriptstyle \pm .08$& $7.4 \scriptstyle \pm .0$& $0.79 \scriptstyle \pm .07$& $9.0 \scriptstyle \pm .1$& $0.71 \scriptstyle \pm .10$& $12.5 \scriptstyle \pm .7$& $0.62 \scriptstyle \pm .09$& $13.7 \scriptstyle \pm .7$& 482.1\\
\cmidrule(l{\tabcolsep}){3-12}
&& \ours{} \textsc{(Fci)} & $\textbf{0.96} \scriptstyle \pm .03$& $3.2 \scriptstyle \pm .6$& $0.91 \scriptstyle \pm .04$& $5.0 \scriptstyle \pm .8$& $\textbf{0.85} \scriptstyle \pm .09$& $\textbf{6.7} \scriptstyle \pm .1$& $\textbf{0.69} \scriptstyle \pm .09$& $\textbf{9.8} \scriptstyle \pm .2$& 4.2\\
&& \ours{} \textsc{(Gies)} & $\textbf{0.97} \scriptstyle \pm .02$& $\textbf{3.0} \scriptstyle \pm .9$& $\textbf{0.94} \scriptstyle \pm .03$& $3.4 \scriptstyle \pm .4$& $0.84 \scriptstyle \pm .07$& $8.1 \scriptstyle \pm .8$& $\textbf{0.69} \scriptstyle \pm .12$& $10.1 \scriptstyle \pm .9$& 3.0\\

\midrule\midrule

\multirow{5}{*}{100} & \multirow{5}{*}{400} & \textsc{Dcd-Fg} & $0.05 \scriptstyle \pm .00$& $\scriptstyle 3068 \pm 131$& $0.07 \scriptstyle \pm .00$& $\scriptstyle 3428 \pm 154$& $0.13 \scriptstyle \pm .02$& $\scriptstyle 3601 \pm 272$& $0.12 \scriptstyle \pm .03$& $\scriptstyle 3316 \pm 698$& 1838.2\\
&& \textsc{Avici} & $0.12 \scriptstyle \pm .02$& $391 \scriptstyle \pm 8$& $0.17 \scriptstyle \pm .01$& $407 \scriptstyle \pm 19$& $0.10 \scriptstyle \pm .02$& $398 \scriptstyle \pm 11$& $0.03 \scriptstyle \pm .00$& $402 \scriptstyle \pm 19$& 9.3\\
\cmidrule(l{\tabcolsep}){3-12}
&& \textsc{VarSort*} & $0.80 \scriptstyle \pm .02$& $224 \scriptstyle \pm 10$& $0.18 \scriptstyle \pm .03$& $1139 \scriptstyle \pm 269$& $0.51 \scriptstyle \pm .05$& $350 \scriptstyle \pm 15$& $0.27 \scriptstyle \pm .04$& $380 \scriptstyle \pm 17$& 5.1\\
\cmidrule(l{\tabcolsep}){3-12}
&& \ours{} \textsc{(Fci)} & $0.84 \scriptstyle \pm .02$& $162 \scriptstyle \pm 12$& $0.04 \scriptstyle \pm .00$& $403 \scriptstyle \pm 16$& $0.63 \scriptstyle \pm .03$& $247 \scriptstyle \pm 17$& $0.34 \scriptstyle \pm .04$& $\textbf{325} \scriptstyle \pm 22$& 19.2\\
&& \ours{} \textsc{(Gies)} & $\textbf{0.91} \scriptstyle \pm .01$& $\textbf{116} \scriptstyle \pm 7$& $\textbf{0.27} \scriptstyle \pm .10$& $\textbf{364} \scriptstyle \pm 34$& $\textbf{0.69} \scriptstyle \pm .03$& $\textbf{218} \scriptstyle \pm 21$& $\textbf{0.38} \scriptstyle \pm .04$& $328 \scriptstyle \pm 22$& 3.1\\

\bottomrule
\end{tabular}
\end{small}
\end{center}
\vskip -0.2in
\end{table*}

\subsection{SEA generalizes to out-of-distribution, misspecified, and real datasets}
\label{subsec:main-results}

Table~\ref{table:synthetic} summarizes our controlled experiments on synthetic data.
\ours{} exceeds all baselines in the Linear case, which matches the models' assumptions exactly (causal discovery algorithms and inverse covariance).
In the misspecified (NN) or misspecified \emph{and} out-of-distribution settings (Sigmoid, Polynomial), \ours{} also attains the best performance in the vast majority of cases, even though \textsc{Dcdi} and \textsc{Avici} both have access to the raw data.
Furthermore, our models outperform \textsc{VarSort} in every single setting, while most baselines are unable to do so consistently.
This indicates that our models do not simply overfit to spurious features of the synthetic data generation process.

Table~\ref{table:k562} illustrates that we exceed baselines on single cell gene expression data from CausalBench~\citep{causalbench, gwps}.
Furthermore, when we increase the subset size to $b=2000$, we achieve very high precision (0.838) over 2834 predicted edges.
\ours{} runs within 5s on this dataset of 162k cells and $N=622$ genes, while the fastest naive baseline takes 5 minutes and the slowest deep learning baseline takes 9 hours (run in parallel on subsets of genes).
\subsection{SEA adapts to new data assumptions with zero to minimal finetuning}
\label{subsec:finetune}

We illustrate two strategies that allow us to use pretrained \ours{} models with different implicit assumptions.
First, if two causal discovery algorithms share the same output format, they can be used interchangeably for marginal estimation.
On observational, linear \emph{non}-Gaussian data, replacing the \textsc{Ges} algorithm with \textsc{Lingam}~\citep{lingam} is beneficial without any other change (Table~\ref{table:uniform}).
The same improvement can be observed on Polynomial and Sigmoid non-additive data, when running \textsc{Fci} with a polynomial kernel conditional independence test (\textsc{Kci}, \citet{kci}) instead of the Fisherz test, which assumes linearity (Table~\ref{table:polymix}).
In principle, different algorithms might make different mistakes, so this strategy could lead to out-of-distribution inputs for the pretrained aggregator.
While comparing across algorithms is not a primary focus of this work and requires further theoretical study,
we notice similar performance for alternate estimation algorithms with linear Gaussian assumptions (Table~\ref{table:frankenstein}), regardless of discovery strategy (GRaSP~\citep{grasp}).
The gap is larger for LiNGAM, which assumes non-Gaussianity (Table~\ref{table:frankenstein2}).

\begin{table}[t]
\begin{tabularx}{\linewidth}{XX}

\setlength\tabcolsep{3 pt}
\caption{
Results on K562 single cell data,
with STRING database (physical) as ground truth.
Baselines taken from~\cite{causalbench}.
}
\vspace{-0.1in}
\setlength\tabcolsep{3 pt}
\begin{small}\begin{center}
\label{table:k562}
\begin{tabular}[b]{l rrrr}
\toprule
Model
& P $\uparrow$ & R $\uparrow$
& F1 $\uparrow$
& Time(s) $\downarrow$\\
\midrule

\textsc{GRNboost} & 0.070 & \textbf{0.710} & 0.127 & 316 \\
\textsc{Gies} & 0.190 & 0.020 & 0.036 & 2350 \\
\midrule
\textsc{NoTears} & 0.080 & 0.620 & 0.142 & 32883 \\
\textsc{Dcdi-G} & 0.180 & 0.030 & 0.051 & 16561 \\
\textsc{Dcdi-Dsf} & 0.140 & 0.040 & 0.062 & 5709 \\
\textsc{Dcd-Fg} & 0.110 & 0.070 & 0.086 & 6368 \\
\midrule
\ours{} \textsc{(G)+Corr} & 0.491 & 0.109 & \textbf{0.179} & \textbf{4} \\
with $b=2000$ & \textbf{0.838} & 0.093 & 0.167 & 5 \\

\bottomrule
\end{tabular}
\end{center}\end{small}
&
\setlength\tabcolsep{3 pt}
\caption{
Performance on Sachs (\ref{sec:additional-real}) varies depending on implicit (\textsc{Avici} training set) and explicit (\ours{} variants) assumptions.
}
\vspace{-0.1in}
\label{table:sachs}
\setlength\tabcolsep{3 pt}
\begin{small}\begin{center}
\begin{tabular}[b]{l rrr}
\toprule
Model & mAP $\uparrow$ & AUC $\uparrow$ & SHD $\downarrow$ \\
\midrule

\textsc{Dcdi-Dsf} & $0.20$& $0.59$& $20.0$\\
\midrule
\textsc{Avici-L} & $0.35$& $0.78$& $20.0$\\
\textsc{Avici-R} & $0.29$& $0.65$& $18.0$\\
\textsc{Avici-L+R} & $\textbf{0.59}$& $\textbf{0.83}$& $14.0$\\
\midrule
\ours{} \textsc{(F)} & $0.23$& $0.54$& $24.0$\\
\textsc{+Kci} & $0.33$& $0.63$& $14.0$\\
\textsc{+Corr} & $0.41$& $0.70$& $15.0$\\
\textsc{+Kci+Corr} & $0.49$& $0.71$& $\textbf{13.0}$\\

\bottomrule
\end{tabular}
\end{center}\end{small}
\end{tabularx}
\vspace{-0.35in}
\end{table}
\begin{table}[t]
\begin{tabularx}{\linewidth}{XX}

\caption{
Adapting \ours{} to
linear
non-Gaussian (Uniform) noise.
\textsc{Lingam} run without finetuning;
\ours \textsc{(G)} finetuned for distance correlation.
}
\vspace{-0.1in}
\begin{small}\begin{center}
    
\setlength\tabcolsep{3 pt}
\begin{tabular}[b]{l rrrr}

\toprule
Model &
\multicolumn{2}{c}{N=10, E=10}&
\multicolumn{2}{c}{N=20, E=20} 
\\
\cmidrule(l{\tabcolsep}){2-3}
\cmidrule(l{\tabcolsep}){4-5}
&mAP $\uparrow$ &SHD $\downarrow$
& mAP $\uparrow$ &SHD $\downarrow$
\\
\midrule

\textsc{Dcdi-Dsf} &
$0.34$ &
$22.3$ &
$0.32$ &
$63.0$
\\
\textsc{Lingam*} & $0.34$& $7.2$& $0.30$& $18.8$\\
\midrule
\ours{} \textsc{(G)} & $0.26$& $12.7$& $0.12$& $46.6$\\
\textsc{+lingam} & $0.52$& $10.1$& $0.22$& $39.7$\\
\textsc{+dcor} & $0.44$& $8.0$& $0.21$& $33.1$\\
\textsc{+ling+dcor} & $\textbf{0.76}$& $\textbf{4.6}$& $\textbf{0.67}$& $\textbf{14.2}$\\

\bottomrule
\end{tabular}
\label{table:uniform}

\end{center}
\end{small}
&
\begin{small}
\setlength\tabcolsep{3 pt}
\caption{
Adapting \ours{} to
polynomial, sigmoid non-additive (N=10, E=10).
\textsc{Fci} run with \textsc{Kci} test;
\ours \textsc{(F)} finetuned for distance correlation.
}
\vspace{-0.1in}
\begin{center}
\begin{tabular}[b]{l rrrr}

\toprule
Model &
\multicolumn{2}{c}{Polynomial} 
& \multicolumn{2}{c}{Sigmoid}
\\
\cmidrule(l{\tabcolsep}){2-3}
\cmidrule(l{\tabcolsep}){4-5}
&mAP $\uparrow$ &SHD $\downarrow$ & mAP $\uparrow$ &SHD $\downarrow$ \\
\midrule

\textsc{Dcdi-Dsf} & $0.39$& $9.8$& $0.81$& $13.6$\\
\textsc{Fci*} & $0.12$& $10.6$& $0.53$& $8.1$\\
\midrule
\ours{} \textsc{(F)} & $0.22$& $10.6$& $0.59$& $4.8$\\
\textsc{+kci} & $0.30$& $10.6$& $0.59$& $5.5$\\
\textsc{+dcor} & $0.45$& $9.6$& $\textbf{0.90}$& $\textbf{2.1}$\\
\textsc{+kci+dcor} & $\textbf{0.52}$& $\textbf{8.2}$& $0.86$& $3.4$\\

\bottomrule
\end{tabular}
\end{center}
\label{table:polymix}
\end{small}
\end{tabularx}
\vspace{-0.35in}
\end{table}

\begin{figure}[t]
\captionof{table}{\ours{} respects identifiability theory.
Observational setting, \emph{standardized} (-std) $N=10, E=10$ linear Gaussian test datasets with $>1$ graph in Markov equivalence class (MEC).
Top: oracle performance based on true MEC (see left).
Bottom: trained \ours{} approaches oracle performance, while FCI is very noisy.
}
\label{table:identifiability}
\vspace{-0.1in}
\begin{minipage}{0.39\linewidth}
\begin{center}
\includegraphics[width=\linewidth]{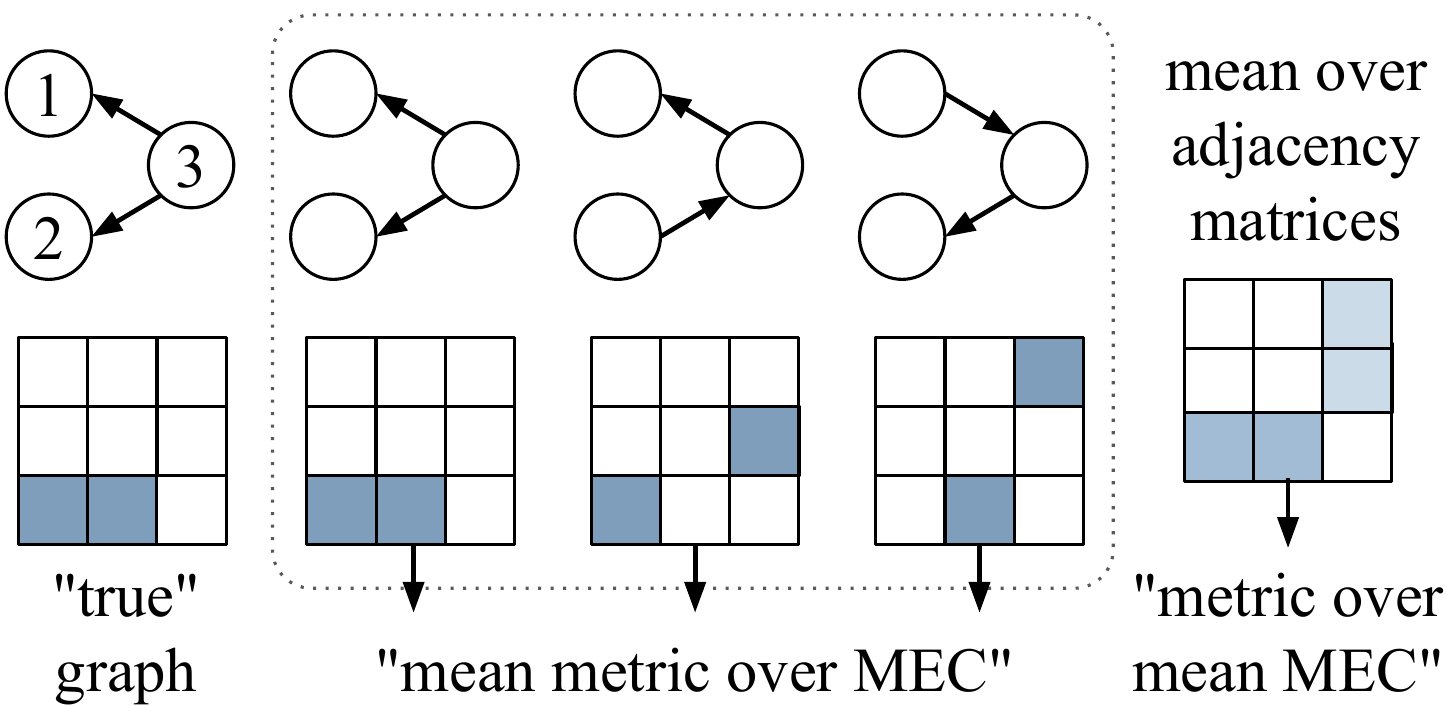}
\end{center}
\end{minipage}
\begin{minipage}{0.6\linewidth}
\setlength\tabcolsep{3.5 pt}
\begin{center}
\begin{small}
\begin{tabular}{l rrrr}
\toprule
Model & mAP($\uparrow$) & AUC($\uparrow$) & SHD($\downarrow$) & OA($\uparrow$)\\
\midrule
metric over mean MEC & $0.88 \pm \scriptstyle .10$ & $0.98 \pm \scriptstyle .03$ & $ 2.0 \pm \scriptstyle 1.0$ & $0.74 \pm \scriptstyle .22$ \\
mean metric over MEC & $0.74 \pm \scriptstyle .21$ & $0.91 \pm \scriptstyle .07$ & $ 1.2 \pm \scriptstyle .69$ & $0.84 \pm \scriptstyle .13$ \\
\midrule
\ours{}\textsc{(Fci)}-std & $0.83 \pm \scriptstyle .16$ & $0.97 \pm \scriptstyle .04$ & $ 3.3 \pm \scriptstyle 2.3$ & $0.69 \pm \scriptstyle .21$ \\
\ours{}\textsc{(Fci)+Corr}-std & $0.84 \pm \scriptstyle .14$ & $0.96 \pm \scriptstyle .03$ & $ 5.0 \pm \scriptstyle 4.5$ & $0.85 \pm \scriptstyle .14$ \\
\textsc{Fci}-std & $0.49 \pm \scriptstyle .28$ & $0.75 \pm \scriptstyle .16$ & $ 9.3 \pm \scriptstyle 2.8$ & $0.49 \pm \scriptstyle .29$ \\
\bottomrule
\end{tabular}
\end{small}
\end{center}
\end{minipage}
\vskip -0.05in
\end{figure}

\begin{figure}[t]
\centering
\begin{minipage}{0.5\linewidth}
\setlength\tabcolsep{3 pt}
\captionof{table}{\ours{} is generally insensitive to swapping estimation other algorithms with linear Gaussian assumptions, at \emph{inference} time.
Results on $N=10$ observational setting.
FCI cannot be used with \ours{}(\textsc{g}) since FCI outputs
a PAG, not a CPDAG.
}
\vspace{-0.1in}
\label{table:frankenstein}
\begin{center}
\begin{small}
\begin{tabular}{l l rr rr rr rr}
\toprule

\multirow{2}{1.5cm}{Inference estimator}
&
\multicolumn{4}{c}{\ours{} (\textsc{Fci})}
&
\multicolumn{4}{c}{\ours{} (\textsc{Gies})}
\\
& Lin. & NN & Sig. & Poly. 
& Lin. & NN & Sig. & Poly. \\
\midrule
FCI & $0.98$& $0.88$& $0.83$& $0.62$& 
--- & --- & --- & ---
\\
\midrule
PC& $0.93$& $0.85$& $0.86$& $0.64$& $0.96$& $0.89$& $0.82$& $0.58$\\
GES& $0.94$& $0.85$& $0.80$& $0.60$& $0.95$& $0.88$& $0.81$& $0.57$\\
GRaSP& $0.93$& $0.85$& $0.80$& $0.61$& $0.95$& $0.88$& $0.81$& $0.57$\\
\bottomrule
\end{tabular}
\end{small}
\end{center}
\end{minipage}
\begin{minipage}{0.49\linewidth}
\centering
\includegraphics[width=0.95\linewidth]{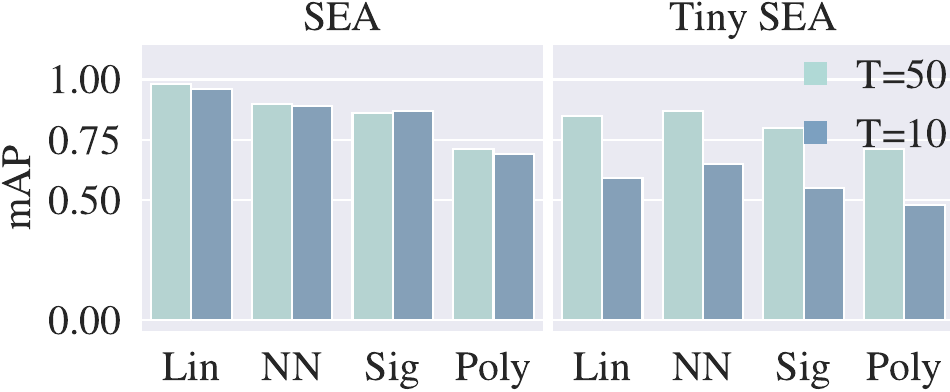}
\vspace{-0.05in}
\captionof{figure}{Few-shot learning behavior emerges as training set increases.
``Tiny'' \ours{} trained on
1/4 of the data is comparable to the full model on $N=10$ datasets when given $T=50$ batches, but is less robust with only $T=10$.
}
\label{fig:tinysea}
\end{minipage}
\vspace{-0.2in}
\end{figure}
\begin{figure}[t]
\centering
\includegraphics[width=\linewidth]{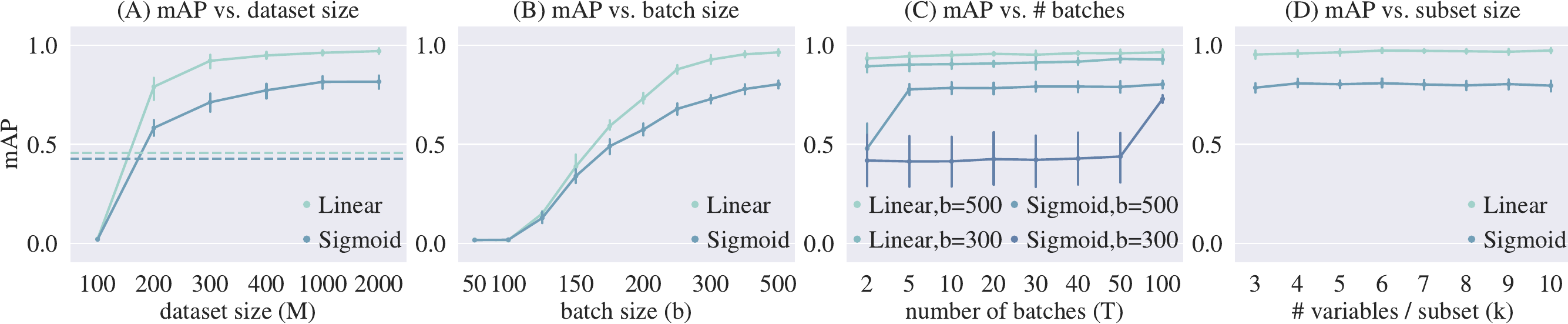}
\vspace{-0.25in}
\caption{Ablations with \ours{} (\textsc{Gies}) for estimation parameters on $N=100, E=100$.
Error bars indicate 95\% confidence interval across the 5 i.i.d. datasets of each setting.
All parameters are set to the defaults (Section~\ref{subsec:model}) unless otherwise noted.
(A) Dashed: inverse covariance at $M=500$.
(C) Variance is unusually high for Sigmoid $b=300$ until $T=100$, indicating that larger batches result in more stable results.
}
\label{fig:estimate-ablation}
\vspace{-0.1in}
\end{figure}

\begin{table*}[t]
\setlength\tabcolsep{3.5 pt}
\caption{
Ablating marginal and global features on \ours{} \textsc{(Gies)}.
Top: We set marginal and global representations to 0 (lack of $E'$/$\rho$ is out-of-distribution) and observe that the pretrained model is more robust to removing $E'$, perhaps since we sample varying $T$ during training.
Bottom: Re-train \ours{} \textsc{(Gies)} on $N=10$ datasets, with and without global features (lack of $\rho$ is in-distribution).
We observe that global features are ``worth'' $T\approx40$ estimates of $k=5$ variables each.
}
\vspace{-0.15in}
\label{table:synthetic-ablation-main}
\begin{center}
\begin{small}
\begin{tabular}{l rr rr rr rr rr}
\toprule
Model 
& \multicolumn{2}{c}{Linear} 
& \multicolumn{2}{c}{NN add.}
& \multicolumn{2}{c}{NN.}
& \multicolumn{2}{c}{Sigmoid}
& \multicolumn{2}{c}{Polynomial} \\
\cmidrule(l{\tabcolsep}){2-3}
\cmidrule(l{\tabcolsep}){4-5}
\cmidrule(l{\tabcolsep}){6-7}
\cmidrule(l{\tabcolsep}){8-9}
\cmidrule(l{\tabcolsep}){10-11}
& \multicolumn{1}{c}{mAP $\uparrow$} & \multicolumn{1}{c}{SHD $\downarrow$} & \multicolumn{1}{c}{mAP $\uparrow$} & \multicolumn{1}{c}{SHD $\downarrow$} & \multicolumn{1}{c}{mAP $\uparrow$} & \multicolumn{1}{c}{SHD $\downarrow$} & \multicolumn{1}{c}{mAP $\uparrow$} & \multicolumn{1}{c}{SHD $\downarrow$} &
\multicolumn{1}{c}{mAP $\uparrow$} & \multicolumn{1}{c}{SHD $\downarrow$}
\\
\midrule

\ours{} \textsc{(Gies)} 
& $0.99 \scriptstyle \pm .01$& $1.2 \scriptstyle \pm .7$& $0.94 \scriptstyle \pm .06$& $2.6 \scriptstyle \pm .8$& $0.91 \scriptstyle \pm .07$& $3.2 \scriptstyle \pm .3$& $0.85 \scriptstyle \pm .12$& $4.0 \scriptstyle \pm .5$& $0.70 \scriptstyle \pm .11$& $5.8 \scriptstyle \pm .6$\\
$h^\rho \leftarrow 0$ & $0.30 \scriptstyle \pm .17$& $29.2 \scriptstyle \pm .4$& $0.27 \scriptstyle \pm .18$& $29.4 \scriptstyle \pm .8$& $0.19 \scriptstyle \pm .09$& $29.0 \scriptstyle \pm .0$& $0.35 \scriptstyle \pm .17$& $27.4 \scriptstyle \pm .4$& $0.31 \scriptstyle \pm .15$& $27.1 \scriptstyle \pm .9$\\
$h^E \leftarrow 0$ & $0.85 \scriptstyle \pm .09$& $6.4 \scriptstyle \pm .7$& $0.82 \scriptstyle \pm .11$& $10.2 \scriptstyle \pm .9$& $0.78 \scriptstyle \pm .07$& $13.2 \scriptstyle \pm .2$& $0.63 \scriptstyle \pm .21$& $10.2 \scriptstyle \pm .8$& $0.55 \scriptstyle \pm .19$& $13.4 \scriptstyle \pm .6$\\

\midrule

$T=2$ & $0.20 \scriptstyle \pm .04$& $31.2 \scriptstyle \pm .5$& $0.25 \scriptstyle \pm .06$& $29.2 \scriptstyle \pm .4$& $0.33 \scriptstyle \pm .10$& $27.8 \scriptstyle \pm .1$& $0.19 \scriptstyle \pm .07$& $30.2 \scriptstyle \pm .9$& $0.24 \scriptstyle \pm .09$& $28.1 \scriptstyle \pm .9$\\
$T=10$ & $0.62 \scriptstyle \pm .16$& $8.0 \scriptstyle \pm .8$& $0.69 \scriptstyle \pm .11$& $9.6 \scriptstyle \pm .6$& $0.66 \scriptstyle \pm .13$& $11.2 \scriptstyle \pm .9$& $0.62 \scriptstyle \pm .20$& $9.2 \scriptstyle \pm .6$& $0.50 \scriptstyle \pm .22$& $8.9 \scriptstyle \pm .2$\\
$T=50$, no $\rho$
& $0.63 \scriptstyle \pm .13$& $6.8 \scriptstyle \pm .1$& $0.53 \scriptstyle \pm .07$& $6.2 \scriptstyle \pm .9$& $0.68 \scriptstyle \pm .20$& $6.0 \scriptstyle \pm .1$& $0.58 \scriptstyle \pm .15$& $7.1 \scriptstyle \pm .1$& $0.50 \scriptstyle \pm .14$& $7.1 \scriptstyle \pm .5$\\

\bottomrule
\end{tabular}
\end{small}
\end{center}
\vskip -0.15in
\end{table*}

Another strategy is to ``finetune'' the aggregator, either fully or using low-cost methods like LoRA~\citep{hu2022lora}.
Specifically, we keep the same training set and classification objective, while changing the input's featurization, e.g. a different global statistic.
Here, we show that finetuning our models for distance correlation (\textsc{Dcor}) is beneficial in both Tables~\ref{table:uniform} and \ref{table:polymix}, and the combination of strategies results in the highest performance overall, surpassing the best baseline trained from scratch (\textsc{Dcdi-Dsf}).

On real data from unknown distributions, these two strategies enable the ability to run causal discovery with different assumptions, which may be coupled with unsupervised methods for model selection~\citep{compatibility}.
Table~\ref{table:sachs} illustrates this idea using the Sachs proteomics dataset.
\ours{} can be run directly with a different estimation algorithm (FCI with polynomial kernel ``\textsc{Kci}''), or finetuned for around 4-6 hours on 1 A6000 and $<4$ GB of GPU memory (correlation ``\textsc{Corr}'').
In contrast, methods like \textsc{Avici} must simulate new datasets based on each new assumption and re-train/finetune on these data (reportedly around 4 days).

\subsection{SEA respects identifiability theory}
\label{sec:identifiability}

While the identifiability of specific causal models is not a primary focus of this work, we show that \ours{} still respects classic identifiability theory.
Specifically, while linear Gaussian models are known to be unidentifiable, Table~\ref{table:synthetic} might suggest that both \ours{} and \textsc{Dcdi} perform quite well on these data -- better than would be expected if graphs were only identifiable up to their Markov equivalence classes.
This empirical ``identifiability'' may be the consequence of two findings. Common synthetic data generation schemes tend to result in marginal variances that reflect topological order~\citep{varsort}, and in additive noise models, it has been shown that marginal variances that are the ``same or weakly monotone increasing in the [topological] ordering'' result in uniquely identifiable graphs~\citep{anm-variances}.
Data standardization can eliminate these artifacts of synthetic data generation.
In Table~\ref{sec:identifiability}, we see that after standardizing linear Gaussian data, our model performs no better than randomly selecting a graph from the Markov equivalence class (enumerated via \texttt{pcalg}~\citep{pcalg}).
The classic FCI algorithm is unable to reach this upper bound, suggesting that the amortized inference framework allows us to perform better in finite datasets.

\subsection{Ablation studies}
\label{sec:main_ablations}

In addition to high performance and flexibility, one of the hallmarks of foundation models is their ability to act as few-shot learners when scaling up~\citep{gpt}.
We first confirm that \ours{} is indeed data-efficient, requiring only around 300-400 examples for performance to converge on datasets of $N=100$ variables,
and outperforms inverse covariance (computed with 500 examples) at only 200 examples (Figure~\ref{fig:estimate-ablation}A).
To probe for how this behavior emerges, we trained a ``tiny'' version of \ours{} (\textsc{Gies}) on approximately a quarter of the training data ($N=10,20$ datasets, 64.8 million examples).
The tiny model performs nearly as well as the original on $N=10$ datasets when provided $T=50$ batches, but exhibits much poorer few-shot behavior with only $T=10$ batches (Figure~\ref{fig:tinysea}).
This demonstrates that \ours{} is able to ingest large amounts of data, leading to promising few-shot behavior.\footnote{
Due to computational limitations, we were unable to train larger models, as our existing training set requires several hundred GB in memory, and our file system does not support fast dynamic loading.}

We also ablate each parameter of the estimation step to inform best practices.
The trade-off between the number and size of batches may be relevant to estimation algorithms that scale poorly with the number of examples, e.g. kernel-based methods~\citep{kci}.
When given $T=100$ batches, \ours{} reaches reasonable performance at around 250-300 examples per batch (Figure~\ref{fig:estimate-ablation}B).
Figure~\ref{fig:estimate-ablation}C further illustrates that on the harder Sigmoid datasets, 5 batches of size $b=500$ are roughly equivalent to 100 batches of size $b=300$.
Finally, increasing the number of variables in each subset has minimal impact (Figure~\ref{fig:estimate-ablation}D), which is encouraging, as there is no need to incur the exponentially-scaling runtimes associated with larger subsets.

Finally, we analyze the impact of removing marginal estimates or global statistics (Table~\ref{table:synthetic-ablation-main}).
First, we take a fully pretrained \ours{} \textsc{(Gies)} and set the corresponding hidden representations to 0.
Performance drops more when $h^\rho$ is set to 0, indicating that our \emph{pretrained} aggregator relies more on global statistics, though a sizable gap emerges in both situations.
Then, we \emph{re-train} \ours{} \textsc{(Gies)} on the $N=10$ datasets, with and without global statistics, so that lack of $\rho$ is in-distribution for the latter model, and the training sets are comparable.
Here, we see that the ``no $\rho$'' version with $T=50$ estimates is on par with the original architecture with $T=10$ estimates,
so the global statistic is equivalent to $\sim40$ estimates.
This roughly aligns with the theory that global statistics can expedite the skeleton discovery process (Section~\ref{subsec:theory}), as the number of estimates required to discover the skeleton of a $N=10$ graph is approximately ${10 \choose 2} = 45$ (Prop.~\ref{prop:bound-skeleton}).


\section{Discussion}

Interventional experiments have formed the basis of scientific discovery throughout history,
and in recent years, advances in the life sciences have led to datasets of unprecedented scale and resolution~\citep{gwps,nadig}.
The goal of these perturbation experiments is to extract causal relationships between biological entities, such as genes or proteins.
However, the sheer size, sparsity, and noise level of these data pose significant challenges to existing causal discovery algorithms.
Moreover, these real datasets do not fit cleanly into causal frameworks that are designed around fixed sets of data assumptions, either explicit~\citep{fci} or implicit~\citep{avici}.
In this work, we approached these challenges through a causal discovery ``foundation model.''
Central to this concept were three goals.
First, this model should generalize to unseen datasets whose data-generating mechanisms are unknown, and potentially out-of-distribution.
Second, it should be easy to steer the model's predictions with inductive biases about the data.
Finally, scaling up the model should lead to data-efficiency.
We proposed \ours{}, a framework that yields causal discovery foundation models.
\ours{} was motivated by the idea that classical statistics and discovery algorithms provide powerful descriptors of data that are fast to compute and robust across datasets.
Given these statistics, we trained a deep learning model to reproduce faithful causal graphs.
Theoretically, we demonstrated that it is possible to infer causal graphs consistent with correct marginal estimates, and that our model is well-specified with respect to this task.
Empirically, we implemented two proofs of concept of \ours{} that perform well across a variety of causal discovery tasks, easily incorporate inductive biases when they are available, and exhibit excellent few-shot behavior when scaled up.

While \ours{} provides a high-level framework for supervised causal discovery,
there are several empirical limitations of the two implementations describe in this paper.
These include: 1) an arbitrary, hard-coded maximum of 1000 variables, 2) poor generalization to synthetic cyclic data, 3) erring on the side of sparsity on real data, 4) numeric instability of inverse covariance on larger graphs, and as a result, 5) training requires full precision.
The first three aspects may be addressed by modifying the architecture and/or synthetic training datasets,
while the latter two can be addressed with more numerically stable statistics, like correlation.

More broadly, the success of supervised causal discovery algorithms derives from the fidelity of the data simulation procedure.
In this paper, we present proofs of concept for the modeling framework, but we do not solve the problem of simulating realistic data.
Of the data that may exist in the real world, the training data used here represent only a small, perhaps unrealistic, fraction.
To achieve any semblance of trustworthiness on real applications, it is crucial to study the characteristics of each domain in detail -- including common graph topologies and functional forms~\citep{aguirre2024gene}; the degree and nature of missing data~\citep{hicks2018missing}; and sources of measurement error or other covariates~\citep{tran2020benchmark}.
When possible, we recommend that any insights be triangulated with other sources of knowledge.
For example, while this work does not directly provide an inference framework, the predicted structure could be used to parametrize a generative model, whose likelihood could be evaluated on held-out interventions~\citep{bacadi}.
It is also important to check whether the inferred relationships compound upon any existing biases in the data.
This is particularly important for sensitive domains like healthcare or legal applications.

This work also opens several directions for further investigation.
The framework we describe utilizes a single causal discovery algorithm and a single global statistic.
Classic causal discovery algorithms leverage diverse insights for identifying causal structure, e.g. the non-Gaussianity of noise~\citep{lingam} vs. conditional independence~\citep{fci}.
Thus, different discovery algorithms or summary statistics may reveal different aspects of the causal structure.
Learning to resolve these potentially conflicting views remains unexplored, both from experimental and theoretical perspectives.
Furthermore, this work only shows that the axial attention architecture is well-specified as a model class and probes generalization empirically.
This motivates theoretical studies into supervised causal discovery with regards to what information can be provably learned, e.g. in the style of of PAC (probably approximately correct) learning~\citep{allen2019learning}.

In summary, we hope that this work will inspire a new avenue of research into causal discovery algorithms that are applicable to and informed by real applications.

\section*{Acknowledgements}

We thank Bowen Jing, Felix Faltings, Sean Murphy, and Wenxian Shi for helpful discussions regarding the writing; as well as Jiaqi Zhang, Romain Lopez, Caroline Uhler, and Stephen Bates for helpful feedback regarding the framing of this project.
Finally, we thank our action editor Bryon Aragam and our anonymous reviewers for their invaluable suggestions towards improving this paper during the review process.

This material is based upon work supported by the National Science Foundation Graduate Research Fellowship under Grant No. 1745302. We would like to acknowledge support from the NSF Expeditions grant (award 1918839: Collaborative Research: Understanding the World Through Code), Machine Learning for Pharmaceutical Discovery and Synthesis (MLPDS) consortium, and the Abdul Latif Jameel Clinic for Machine Learning in Health.

\newpage
\bibliography{references}
\bibliographystyle{tmlr}

\newpage
\appendix
\section{Theoretical motivations}
\label{sec:proofs}

Our theoretical contributions focus on two primary directions.

\begin{enumerate}
    \item We formalize the notion of marginal estimates used in this paper, and prove that given sufficient marginal estimates, it is possible to recover a pattern faithful to the global causal graph. We provide lower bounds on the number of marginal estimates required for such a task, and motivate global statistics as an efficient means to reduce this bound.
    \item We show that our proposed axial attention has the capacity to recapitulate the reasoning required for marginal estimate resolution.
    We provide realistic, finite bounds on the width and depth required for this task.
\end{enumerate}
Before these formal discussions, we start with a toy example to provide intuition regarding marginal estimates and constraint-based causal discovery algorithms.

\subsection{Toy example: Resolving marginal graphs}
\label{resolver-toy-example}

Consider the Y-shaped graph with four nodes in Figure \ref{toy-resolution}.
Suppose we run the PC algorithm on all subsets of three nodes, and we would like to recover the result of the PC algorithm on the full graph.
We illustrate how one might resolve the marginal graph estimates.
The PC algorithm consists of the following steps~\citep{causation-prediction-search}.
\begin{enumerate}
    \item Start from the fully connected, undirected graph
    on $N$ nodes.
    \item Remove all edges $(i,j)$ where $X_i \indep X_j$.
    \item For each edge $(i,j)$
    and subsets $S \subseteq [N]\setminus\{i,j\}$ of increasing
    size $n=1,2,\dots,d$,
    where $d$ is the maximum degree in $G$,
    and all $k\in S$
    are connected to either $i$ or $j$: if $X_i \indep X_j \mid S$,
    remove edge $(i,j)$.
    \item For each triplet
    $(i,j,k)$, such that only edges
    $(i,k)$ and $(j,k)$ remain,
    if $k$ was not in the set $S$
    that eliminated edge $(i,j)$,
    then orient the ``v-structure''
    as $i \rightarrow k \leftarrow j$.
    \item (Orientation propagation)
    If $i\to j$, edge $(j,k)$ remains,
    and edge $(i,k)$ has been removed,
    orient $j \to k$.
    If there is a directed path
    $i\rightsquigarrow j$
    and an undirected edge $(i,j)$,
    then orient $i\to j$.
\end{enumerate}
\begin{figure*}[ht]
\begin{center}
\centerline{\includegraphics[width=\textwidth]{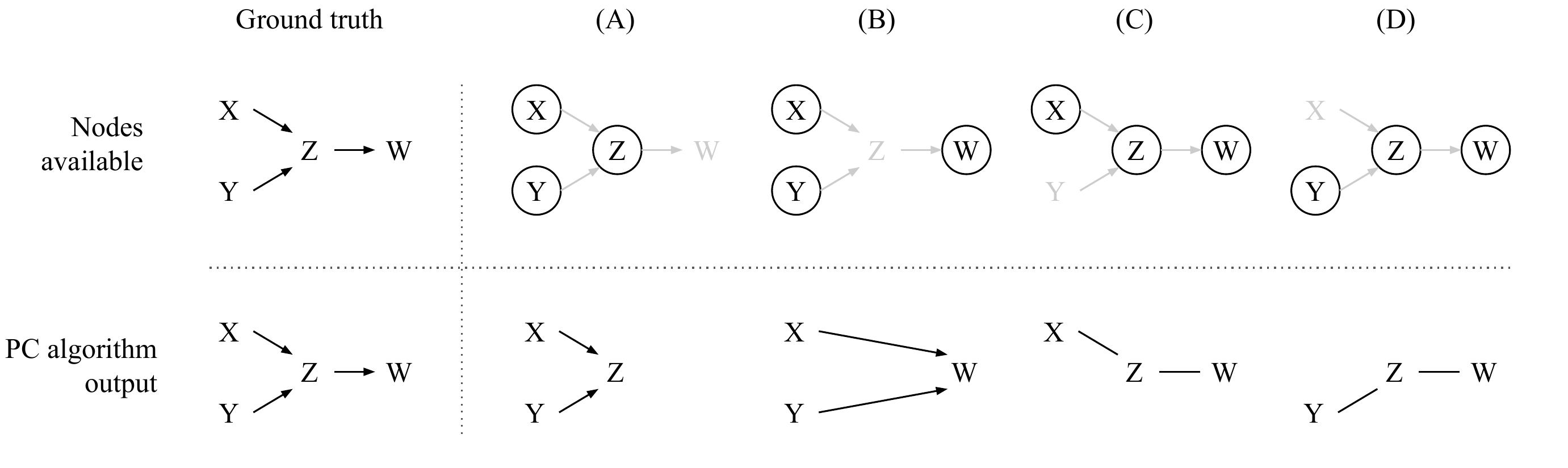}}
\caption{Resolving marginal graphs.
Subsets of nodes revealed to the PC algorithm
(circled in row 1)
and its outputs (row 2).}
\label{toy-resolution}
\end{center}
\vskip -0.2in
\end{figure*}
In each of the four cases,
the PC algorithm estimates the respective graphs as follows.
\begin{enumerate}[label=(\Alph*)]
    \item We remove edge $(X,Y)$ via (2)
    and orient the v-structure.
    \item We remove edge $(X,Y)$ via (2)
    and orient the v-structure.
    \item We remove edge $(X,W)$ via (3)
    by conditioning on $Z$.
    There are no v-structures, so the edges
    remain undirected.
    \item We remove edge $(Y,W)$ via (3)
    by conditioning on $Z$.
    There are no v-structures, so the edges
    remain undirected.
\end{enumerate}
The outputs (A-D) admit the full PC algorithm output
as the only consistent graph on four nodes.
\begin{itemize}
    \item $X$ and $Y$ are unconditionally
    independent, so no subset will reveal an edge between $(X,Y)$.
    \item There are no edges between $(X,W)$ and $(Y,W)$.
    Otherwise, (C) and (D) would yield the undirected triangle.
    \item $X,Y,Z$ must be oriented as
    $X \rightarrow Z \leftarrow Y$.
    Paths $X \to Z \to Y$ and
    $X \leftarrow Z \leftarrow Y$ would induce
    an $(X,Y)$ edge in (B).
    Reversing orientations
    $X \leftarrow Z \rightarrow Y$
    would contradict (A).
    \item $(Y,Z)$ must be oriented as
    $Y\to Z$. Otherwise, (A) would remain unoriented.
\end{itemize}

\subsection{Resolving marginal estimates into global graphs}
\label{subsec:resolve-proof}


Classical results have characterized the Markov equivalency class of directed acyclic graphs.
Two graphs are observationally equivalent if they have the same skeleton and v-structures~\citep{verma1990equivalence}.
Thus, a pattern $P$ is \emph{faithful} to a graph $G$ if and only if they share the same skeletons and v-structures~\citep{sgs}.

\begin{definition}
    Let $G=(V,E)$ be a directed acyclic graph. 
    A \emph{pattern} $P$ is a set of directed and undirected edges over $V$.
\end{definition}

\begin{definition}[Theorem 3.4 from \citet{causation-prediction-search}]
\label{thm:faithfulness}
    If pattern $P$ is \emph{faithful} to some directed acyclic graph, then $P$ is faithful to $G$ if and only if
    \begin{enumerate}
        \item for all vertices $X,Y$ of $G$,
        $X$ and $Y$ are adjacent if and only if $X$ and $Y$ are dependent conditional on every set of vertices of $G$ that does not include $X$ or $Y$; and
        \item for all vertices $X, Y, Z$, such that $X$ is adjacent to $Y$ and $Y$ is adjacent to $Z$ and $X$ and $Z$ are not adjacent, $X\to Y\lto Z$ is a subgraph of $G$ if and only if $X, Z$ are dependent conditional on every set containing $Y$ but not $X$ or $Z$.
    \end{enumerate}
\end{definition}

Given data faithful to $G$, a number of classical constraint-based algorithms produce patterns that are faithful to $G$. We denote this set of algorithms as $\mathcal{F}$.

\begin{theorem}[Theorem 5.1 from \cite{causation-prediction-search}]
\label{thm:pc-and-friends}
    If the input to the PC, SGS, PC-1, PC-2, PC$^*$, or IG algorithms faithful to directed acyclic graph $G$, the output is a pattern that represents the faithful indistinguishability class of $G$.
\end{theorem}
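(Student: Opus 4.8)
The plan is to reduce the correctness of all the listed algorithms to the characterization already stated in Definition~\ref{thm:faithfulness}, using faithfulness as the single bridge between the distribution and the graph. Because the input is faithful to $G$, a conditional independence $X\indep Y\mid S$ holds in $P$ exactly when the corresponding d-separation holds in $G$; this is the only property linking the statistical tests the algorithms perform to the graphical object we wish to recover. By the equivalence characterization of \citet{verma1990equivalence}, a pattern represents the faithful indistinguishability class of $G$ precisely when it shares $G$'s skeleton and $G$'s v-structures, so it suffices to show that each algorithm outputs exactly these two features. I would first establish the claim for the brute-force SGS algorithm, and then argue that PC, PC-1, PC-2, PC$^*$, and IG produce the identical output, differing from SGS only in which conditioning sets they examine and in what order.

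For SGS, the skeleton step deletes edge $(X,Y)$ whenever some $S\subseteq V\setminus\{X,Y\}$ gives $X\indep Y\mid S$. Translating through faithfulness, this deletes exactly the pairs that are d-separated by some set. The key graphical fact is that in a DAG, $X$ and $Y$ are non-adjacent if and only if they are d-separated by some subset of $V\setminus\{X,Y\}$ (concretely by $\mathrm{Pa}(X)$ when $Y$ is a non-descendant of $X$, or symmetrically), so SGS retains an edge exactly when $X,Y$ are adjacent in $G$, matching condition (1) of Definition~\ref{thm:faithfulness}. For v-structures, SGS inspects each unshielded triple $X - Y - Z$ with $X,Z$ non-adjacent and orients $X\to Y\lto Z$ iff $Y$ lies in no set that separated $X$ and $Z$; a standard d-separation argument shows $Y$ is a collider on that path iff it is excluded from every separating set of $X,Z$, which is exactly condition (2). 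Finally, the orientation-propagation (Meek) rules only compel edges whose direction is forced in every member of the class, leaving the remaining edges undirected, so the SGS output is a valid pattern representing the faithful indistinguishability class.

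The main obstacle is the step showing that the efficient variants agree with SGS despite restricting their conditioning sets to adjacencies in the current working graph. The crucial lemma is that if $X,Y$ are non-adjacent in $G$, then they are d-separated by some subset of $\mathrm{Adj}_G(X)\setminus\{Y\}$ (and symmetrically of $\mathrm{Adj}_G(Y)$), which lets PC succeed without ever conditioning on non-neighbors. I would pair this with the invariant that PC never deletes a true edge: by condition (1), adjacent pairs cannot be separated by any set, so throughout execution the working graph's adjacency set contains $\mathrm{Adj}_G$. Consequently the restricted search over working-graph neighbors always contains a separating subset for every genuinely non-adjacent pair, and PC removes precisely the same edges as SGS; the ordering refinements PC-1, PC-2, PC$^*$ and the independence-graph initialization of IG only reorder or prune these equivalent tests while sharing the identical orientation phase, so all yield the same pattern. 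The subtlety to handle carefully is that the lemma concerns the \emph{true} adjacencies whereas PC queries the \emph{current} (possibly larger) ones, which is exactly why the never-delete-a-true-edge invariant is indispensable for transferring the lemma to the running algorithm.
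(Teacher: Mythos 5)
You should first note what the paper actually does with this statement: it does not prove it. It is imported verbatim as Theorem 5.1 of \citet{causation-prediction-search} and used as a black box (e.g., inside the proof of Lemma~\ref{prop:orientation}), so there is no internal proof to compare against; what you have reconstructed is essentially the original textbook argument, and it is sound in its main lines. The reduction via \citet{verma1990equivalence} to skeleton plus v-structures, the local-Markov separator ($\mathrm{Pa}(X)$ when $Y$ is a non-descendant of $X$, with acyclicity guaranteeing one of the two symmetric cases applies), the unshielded-triple dichotomy ($Y$ a non-collider implies $Y$ lies in \emph{every} separating set of $X,Z$, a collider in \emph{none} --- which is also why the algorithms may check only the single recorded separating set rather than all of them), the never-delete-a-true-edge invariant, and the observation that PC's restricted search still contains $\mathrm{Pa}(X)\subseteq\mathrm{Adj}_G(X)\subseteq$ the working adjacencies, are exactly the right ingredients. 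Note also that the theorem only requires the propagation rules to be \emph{sound} (every forced orientation holds in all members of the class), not complete, so your brief treatment of that step is adequate, though calling them ``Meek rules'' is anachronistic relative to the rules in \citet{causation-prediction-search}.

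The one genuine soft spot is your final sentence disposing of PC$^*$ and IG as merely ``reordering or pruning equivalent tests.'' For IG your own invariant does the work: faithfulness condition (1) of Definition~\ref{thm:faithfulness} puts every true edge into the independence graph, so the initialization is a supergraph of the skeleton and the rest of your argument applies unchanged --- you should say this explicitly rather than fold it into ``pruning.'' But PC$^*$ is \emph{not} a reordering: it restricts candidate conditioning sets to vertices lying on undirected paths between $X$ and $Y$ in the working graph, and your adjacency lemma does not imply that such a restricted family contains a separator. You need a further graphical lemma (present in the cited source) that non-adjacent $X,Y$ are d-separated by a set whose members all lie on undirected paths between $X$ and $Y$; without it, the claim of the theorem for PC$^*$ is unsupported by your proof as written. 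Everything else stands.
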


The algorithms in $\mathcal{F}$ are sound and complete \emph{if} there are no unobserved confounders.


Let $P_V$ be a probability
distribution that is Markov, minimal, and faithful to $G$.
Let $D\in\RR^{M\times N}\sim P_V$ be a dataset of $M$ observations over all $N=|V|$ nodes.

Consider a subset $S\subseteq V$. Let $D[S]$ denote the subset of $D$ over $S$,
\begin{equation}
    D[S] = \{ x_{i,v} : v\in S \}_{i=1}^N,
\end{equation}
and let $G[S]$ denote the subgraph of $G$
induced by $S$
\begin{equation}
    G[S] = (S, \{ (i,j) : i, j\in S, (i,j)\in E\}.
\end{equation}
If we apply any $f\in\mathcal{F}$ to $D[S]$, the results are \emph{not} necessarily faithful to $G[S]$, as now there may be latent confounders in $V\setminus S$ (by construction).
We introduce the term \emph{marginal estimate} to denote the resultant pattern that, while not faithful to $G[S]$, is still informative.

\begin{definition}[Marginal estimate]
    A pattern $E'$ is a \emph{marginal estimate} of $G[S]$ if and only if
    \begin{enumerate}
        \item for all vertices $X,Y$ of $S$,
        $X$ and $Y$ are adjacent if and only if $X$ and $Y$ are dependent conditional on every set of vertices of $S$ that does not include $X$ or $Y$; and
        \item for all vertices $X, Y, Z$, such that $X$ is adjacent to $Y$ and $Y$ is adjacent to $Z$ and $X$ and $Z$ are not adjacent, $X\to Y\lto Z$ is a subgraph of $S$ if and only if $X, Z$ are dependent conditional on every set containing $Y$ but not $X$ or $Z$.
    \end{enumerate}
\end{definition}


\begin{algorithm}[t]
   \caption{Resolve marginal estimates of $f\in\mathcal{F}$}
   \label{alg:resolve-pc}
\begin{algorithmic}[1]
   \STATE {\bfseries Input:} Data $\mathcal{D}_G$
   faithful to $G$
   \STATE Initialize $E' \leftarrow K_N$
   as the complete undirected graph
   on $N$ nodes.
   \FOR{$S \in \mathcal{S}_{d+2}$}
   \STATE Compute $E'_S = f(\mathcal{D}_{G[S]})$
   \FOR{$(i,j) \not\in E'_S$}
   \STATE Remove $(i,j)$ from $E'$
   \ENDFOR
   \ENDFOR
   \FOR{$E'_S \in \{E'_S\}_{\mathcal{S}_{d+2}}$}
   \FOR{v-structure $i \rightarrow j \leftarrow k$
   in $E'_S$}
   \IF{$\{i,j\}, \{j,k\}\in E'$ and $\{i,k\}\not\in E'$}
   \STATE Assign orientation
   $i \rightarrow j \leftarrow k$ in $E'$
   \ENDIF
   \ENDFOR
   \ENDFOR
   \STATE Propagate orientations in $E'$ (optional).
\end{algorithmic}
\end{algorithm}

\begin{proposition}
    Let $G=(V,E)$ be a directed acyclic graph
    with maximum degree $d$.
    For $S\subseteq V$,
    let $E'_S$ denote the marginal estimate
    over $S$.
    Let $\mathcal{S}_d$ denote the superset
    that contains all subsets $S\subseteq V$
    of size at most $d$.
    Algorithm~\ref{alg:resolve-pc} maps
    $\{ E'_S \}_{S \in \mathcal{S}_{d+2}}$ to a pattern $E'$ faithful to $G$.
   \label{prop:resolve-new}
\end{proposition}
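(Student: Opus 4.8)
The plan is to prove faithfulness by establishing the two defining properties of Definition~\ref{thm:faithfulness} separately: that $E'$ has exactly the skeleton of $G$, and exactly the v-structures of $G$. Throughout I would use that $P_V$ is Markov and faithful to $G$, so that d-separation in $G$ is equivalent to conditional independence in $P_V$, and I would invoke the marginal-estimate definition directly: a pair is adjacent in $E'_S$ iff it is dependent given \emph{every} subset of $S\setminus\{i,j\}$, and a triple forms a collider in $E'_S$ iff the endpoints stay dependent given \emph{every} subset of $S\setminus\{i,k\}$ containing the center.

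For the skeleton I would argue both inclusions. If $(i,j)\in E$, then by faithfulness $X_i$ and $X_j$ are dependent given every conditioning set, hence given every subset of $S\setminus\{i,j\}$ for any $S\ni i,j$; so $(i,j)\in E'_S$ for all such $S$ and the edge is never deleted in Algorithm~\ref{alg:resolve-pc}. Conversely, if $(i,j)\notin E$, then since $G$ is acyclic at least one endpoint is a non-descendant of the other; say $i$ is a non-descendant of $j$, so the local Markov property gives $X_i\indep X_j\mid X_{\pi_j}$. Because $\pi_j$ is disjoint from $\{i,j\}$ and $|\pi_j|\le d$, the witness $S=\{i,j\}\cup\pi_j$ has size at most $d+2$, i.e. $S\in\mathcal{S}_{d+2}$, and $\pi_j\subseteq S\setminus\{i,j\}$ separates the pair, so $(i,j)\notin E'_S$ and the edge is removed. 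Thus the skeletons coincide.

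For v-structures I would again prove both directions. Given a true v-structure $i\to j\leftarrow k$, one endpoint (say $k$) is a non-descendant of the other, so $X_i\indep X_k\mid X_{\pi_i}$, with $j\notin\pi_i$ since $i\to j$ forbids $j$ from being a parent of $i$. The \emph{key counting point} is that the collider edge $i\to j$ already occupies one of $i$'s at most $d$ incident edges, so $|\pi_i|\le d-1$; hence $S=\{i,j,k\}\cup\pi_i$ has size at most $d+2$ and lies in $\mathcal{S}_{d+2}$. Inside $S$, the edges $i-j$ and $j-k$ survive, $\pi_i$ witnesses the non-adjacency of $i,k$, and because the length-two path $i\to j\leftarrow k$ is d-connecting under \emph{any} conditioning set containing the collider $j$, the endpoints remain dependent given every such set; so $E'_S$ reports the collider and the orientation loop of Algorithm~\ref{alg:resolve-pc} orients it in $E'$. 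For the converse, suppose $E'_S$ reports $i\to j\leftarrow k$ with the skeleton conditions met. The non-adjacency clause yields a separating subset $Z_0\subseteq S\setminus\{i,k\}$, and the collider clause forces $j\notin Z_0$; by faithfulness $Z_0$ d-separates $i$ and $k$ in $G$. Since $i-j-k$ is a path in $G$ (the skeletons agree) and a non-collider center would keep that path active whenever it is not conditioned, $j\notin Z_0$ forces $j$ to be a collider, so the orientation is genuine.

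The main obstacle I anticipate is the degree bookkeeping that keeps every witness subset inside $\mathcal{S}_{d+2}$ rather than $\mathcal{S}_{d+3}$; the crucial observation is the $|\pi_i|\le d-1$ bound in the v-structure case, together with verifying that each witness set is disjoint from its triple of endpoints. The remaining ingredient is a careful statement of the standard collider lemma --- a length-two path $i-j-k$ is d-connecting under a conditioning set iff its center is a collider contained in that set (or has a descendant in it), and iff its center is a non-collider excluded from it. Once the skeleton and the v-structures coincide, $E'$ is faithful to $G$ by Definition~\ref{thm:faithfulness}, and the optional propagation step only orients further compelled edges without disturbing either invariant.
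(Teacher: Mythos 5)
Your proposal is correct, and it follows the same two-part decomposition as the paper --- first skeletons, then v-structures, then conclude faithfulness from the graphical characterization --- but the two halves are executed by a genuinely different route. The paper never compares $E'$ to $G$ directly: Lemmas~\ref{prop:skeleton} and~\ref{prop:orientation} compare $E'$ to $E^*$, the pattern obtained by running a sound constraint-based algorithm on all of $V$, leaning on Theorem~\ref{thm:pc-and-friends} for the faithfulness of $E^*$; the skeleton step asserts the existence of a size-at-most-$d$ separating set (justified only informally, in the Remark following Algorithm~\ref{alg:resolve-pc}), and the v-structure lemma proceeds by contradiction with a chain/fork case analysis (your ``a non-collider center keeps the path active'' observation is the same fact, stated once rather than case by case). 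You instead argue directly against $G$ via the local Markov property, constructing explicit witness subsets --- $S=\{i,j\}\cup\pi_j$ for a missing edge, $S=\{i,j,k\}\cup\pi_i$ for a v-structure --- together with the cardinality bookkeeping ($|\pi_j|\le d$, and $|\pi_i|\le d-1$ because the edge $i\to j$ consumes one of $i$'s degree slots) that keeps every witness inside $\mathcal{S}_{d+2}$. That bookkeeping is a real contribution of your write-up: it makes rigorous the ``why $d+2$ suffices'' point that the paper handles by Remark, and your witnesses are essentially the ones the paper only exhibits later, in Propositions~\ref{prop:bound-skeleton} and~\ref{prop:bound-v}. What the paper's route buys in exchange is brevity and a marginally stronger conclusion: by matching $E'$ against $E^*$ and observing that orientation propagation is the same deterministic procedure on both sides, it concludes $E'=E^*$ outright, whereas your argument stops (as the proposition requires) at agreement of skeleton and v-structures.
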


On a high level, lines 3-8 recover the undirected
``skeleton'' graph of $E^*$,
lines 9-15 recover the v-structures,
and line 16 references step 5 in Section \ref{resolver-toy-example}.

\begin{remark}
    In the PC algorithm (\cite{causation-prediction-search}, \ref{resolver-toy-example}), its derivatives,
    and Algorithm \ref{alg:resolve-pc},
    there is no need to consider separating sets with cardinality
    greater than maximum degree $d$, since the
    maximum number of independence tests required to separate
    any node from the rest of the graph
    is equal to number of its parents plus its children
    (due to the Markov assumption).
\end{remark}

\begin{lemma}
\label{prop:skeleton}
    The undirected skeleton of $E^*$
    is equivalent to
    the undirected skeleton of $E'$
    \begin{equation}
        C^* \coloneqq \left\{
        \{i,j\} \mid (i,j) \in E^* \textrm{ or }
        (j,i) \in E^* \right\}
        = \left\{
        \{i,j\} \mid (i,j) \in E' \textrm{ or }
        (j,i) \in E' \right\}
        \coloneqq C'.
    \end{equation}
    That is, $\{i,j\} \in C^* \iff \{i, j\} \in C'$.
\end{lemma}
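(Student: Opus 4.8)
The plan is to prove the two inclusions $C^* \subseteq C'$ and $C' \subseteq C^*$ separately, after first restating both skeletons in terms of conditional (in)dependence. Since $E^*$ is faithful to $G$, its skeleton coincides with the adjacency relation of $G$, so by Definition~\ref{thm:faithfulness}(1) we have $\{i,j\}\in C^*$ if and only if $X_i$ and $X_j$ are dependent conditional on every $Z\subseteq V\setminus\{i,j\}$. On the other side, Algorithm~\ref{alg:resolve-pc} deletes an edge from $E'$ exactly when some marginal estimate $E'_S$ with $i,j\in S$ and $S\in\mathcal{S}_{d+2}$ omits it; hence $\{i,j\}\in C'$ if and only if $i$ and $j$ remain adjacent in \emph{every} such $E'_S$. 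The first fact I would record is that conditional independence statements among variables inside $S$ agree in $P_V$ and in its marginal $P_S$, so the adjacency criterion in the marginal-estimate definition can be read directly off $P_V$.

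For $C^*\subseteq C'$, suppose $i,j$ are adjacent in $G$. Then they are dependent conditional on every $Z\subseteq V\setminus\{i,j\}$, and in particular on every $Z\subseteq S\setminus\{i,j\}$ for any $S$ containing them. By condition~(1) of the marginal-estimate definition, $i$ and $j$ are adjacent in every $E'_S$, so the edge is never deleted and $\{i,j\}\in C'$. This direction is essentially immediate and uses only that adjacency in $G$ cannot be broken by conditioning.

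The harder inclusion is $C'\subseteq C^*$, equivalently: if $i,j$ are non-adjacent in $G$, then some marginal estimate over a subset of size at most $d+2$ witnesses their separation. Since $G$ is acyclic, at least one endpoint — say $i$ — has $j$ among its non-descendants, and by the local Markov property $X_i$ is independent of its non-descendants outside $\mathrm{pa}(i)$ given $X_{\mathrm{pa}(i)}$; as $i,j$ are non-adjacent, $j\notin\mathrm{pa}(i)$, so $X_i\indep X_j\mid X_{\mathrm{pa}(i)}$ with $|\mathrm{pa}(i)|\le d$. Taking $Z=\mathrm{pa}(i)$ and $S=\{i,j\}\cup Z$ gives $|S|\le d+2$, hence $S\in\mathcal{S}_{d+2}$, and $Z\subseteq S\setminus\{i,j\}$ realizes the independence. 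By condition~(1) of the marginal-estimate definition, $i$ and $j$ are then non-adjacent in $E'_S$, so the algorithm removes the edge and $\{i,j\}\notin C'$.

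The main obstacle is pinning down this last combinatorial bound: I must ensure the separating set actually fits inside a subset of size $d+2$, which is precisely why the superset $\mathcal{S}_{d+2}$ (the two endpoints plus up to $d$ parents) is the right object to iterate over. A companion subtlety to handle carefully is the marginalization argument — the latent confounders in $V\setminus S$ can only introduce spurious dependencies, never erase the genuine separation realized by $Z\subseteq S$ — which is exactly what guarantees that the witnessing independence survives in $P_S$. Combining both inclusions yields $C^*=C'$.
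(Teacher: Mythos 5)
Your proposal is correct and follows essentially the same route as the paper's proof: both directions reduce to the existence and transfer of separating sets, with your $C'\subseteq C^*$ inclusion being the contrapositive of the paper's forward direction and your $C^*\subseteq C'$ inclusion the contrapositive of its backward direction. The only difference is cosmetic — you explicitly construct the size-$\le d$ separating set as the parent set of one endpoint via the local Markov property, a detail the paper delegates to its remark on separating-set cardinality.
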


\begin{proof}
It is equivalent to show that
$\{i,j\} \not\in C^* \iff \{i, j\} \not\in C'$

$\Rightarrow$
If $\{ i,j \} \not\in C*$, then there must exist
a separating set $S$ in $G$ of at most size $d$
such that $i \indep j \mid S$.
Then $S \cup \{i,j\}$ is a set of at most size $d+2$,
where $\{i,j\} \not\in C'_{S \cup \{i,j\}}$.
Thus, $\{i,j\}$ would have been removed from $C'$
in line 6 of Algorithm \ref{alg:resolve-pc}.

$\Leftarrow$
If $\{ i,j \} \not\in C'$,
let $S$ be a separating set in $\mathcal{S}_{d+2}$ such that
$\{i,j\} \not\in C'_{S \cup \{i,j\}}$
and $i \indep j \mid S$.
$S$ is also a separating set in $G$,
and conditioning on $S$ removes $\{i,j\}$ from $C^*$.
\end{proof}

\begin{lemma}
\label{prop:orientation}
    A v-structure $i\rightarrow j\leftarrow k$
    exists in $E^*$ if and only if there exists
    the same v-structure in $E'$.
\end{lemma}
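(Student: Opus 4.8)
The plan is to prove both directions of the biconditional by leveraging Lemma~\ref{prop:skeleton}, which already guarantees that $E^*$ and $E'$ share the same skeleton, so the \emph{unshielded} triples $i - j - k$ (with $\{i,j\}$ and $\{j,k\}$ edges but $\{i,k\}$ a non-edge) coincide in the two patterns. It then suffices to show that such a triple is oriented as a collider $i\to j\leftarrow k$ in $E^*$ if and only if it is oriented as a collider in $E'$. Throughout I would use faithfulness of $P_V$ to $G$, which lets me translate freely between the conditional-independence statements underlying the marginal-estimate definition and d-separation statements in $G$.

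For the forward direction, suppose $i\to j\leftarrow k$ is a v-structure in $E^*$ (equivalently in $G$). I need to exhibit a single subset $S\in\mathcal{S}_{d+2}$ whose marginal estimate $E'_S$ records this collider, since then lines 9--15 of Algorithm~\ref{alg:resolve-pc} will orient it in $E'$ (the adjacency guard $\{i,j\},\{j,k\}\in E'$, $\{i,k\}\notin E'$ is met by Lemma~\ref{prop:skeleton}). The idea is to choose a separating set $Z$ of $i,k$ that excludes the collider $j$, and set $S=\{i,j,k\}\cup Z$. Inside $S$ one checks the two clauses of the marginal-estimate definition: $i$ and $k$ are non-adjacent because $Z\subseteq S$ separates them; $i,j$ and $j,k$ remain adjacent because adjacency in $G$ means dependence under every conditioning set; and the collider clause holds because the length-two path $i\to j\leftarrow k$ is \emph{activated} by conditioning on $j$, so $i$ and $k$ stay dependent given any subset of $S$ containing $j$.

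The crux --- and the main obstacle --- is controlling the size of $S$ so that $|S|\le d+2$ rather than the naive $d+3$. Here I would use that in the v-structure $j$ is a \emph{child} of both $i$ and $k$, so each of $i,k$ spends at least one of its $\le d$ incident edges on an outgoing arc; hence $|\mathrm{Pa}(i)|\le d-1$ and $|\mathrm{Pa}(k)|\le d-1$, and neither parent set contains $j$. Taking a topological order and applying the local Markov property, $i\indep k\mid\mathrm{Pa}(k)$ if $i$ precedes $k$, and $i\indep k\mid\mathrm{Pa}(i)$ otherwise; in either case the chosen $Z$ is a valid separating set disjoint from $\{i,j,k\}$ with $|Z|\le d-1$. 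Therefore $|S| = 3+|Z| \le d+2$, so $S\in\mathcal{S}_{d+2}$, completing the forward direction.

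For the backward direction, suppose the algorithm oriented $i\to j\leftarrow k$ in $E'$. This orientation was triggered by some computed marginal estimate $E'_S$ that recorded the collider, which means there is a separating set $Z'\subseteq S\setminus\{i,k\}$ with $i\indep k\mid Z'$ and, by the collider clause, $j\notin Z'$. By faithfulness $Z'$ d-separates $i$ and $k$ in $G$, so in particular the path $i - j - k$ must be blocked by $Z'$; since $j\notin Z'$, the only way this length-two path can be blocked is for $j$ to be a collider on it, i.e. $i\to j\leftarrow k$ in $G=E^*$. Note this direction needs only faithfulness and the definition of the marginal estimate, not the degree bound. Combining both directions, and recalling that the skeleton already agrees, the colliders of $E^*$ and $E'$ coincide, which is the claim.
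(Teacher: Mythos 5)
Your proof is correct, but it is not the paper's argument: the paper proves both directions by contradiction, whereas your forward direction is constructive. For $\Rightarrow$, the paper assumes the v-structure is in $E^*$ but absent from $E'$ and splits into two cases --- either no $E'_S$ contains the unshielded path $i-j-k$ (ruled out via Lemma~\ref{prop:skeleton}), or every such $E'_S$ separated $i,k$ by a set containing $j$ (ruled out because conditioning on a common child cannot render $i \indep k$ under faithfulness). You instead exhibit an explicit witnessing subset $S=\{i,j,k\}\cup \mathrm{Pa}(k)$ (or $\mathrm{Pa}(i)$), check the marginal-estimate clauses on it, and --- crucially --- verify $|S|\le d+2$ via the observation that $j$ being a child of both $i$ and $k$ forces $|\mathrm{Pa}(i)|,|\mathrm{Pa}(k)|\le d-1$. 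That size bound is a genuine strengthening of the lemma's own write-up: the paper's contradiction argument never explicitly confirms that a separating set excluding $j$ fits inside a subset of $\mathcal{S}_{d+2}$, and defers this kind of counting to Proposition~\ref{prop:bound-v}, where it appears in rougher form; your construction makes Lemma~\ref{prop:orientation} self-contained and simultaneously yields the $O(\nu)$ bound of that proposition. For $\Leftarrow$, the two arguments are contrapositives of one another: the paper enumerates the chain/fork alternatives for the triple and shows each would force $j$ into every separating set of $i,k$, while you argue directly that a separating set excluding $j$ can only block the two-edge path $i-j-k$ if $j$ is a collider on it --- the same d-separation fact, packaged without case analysis. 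The one thing to keep explicit (which you do, but tersely) is that the edges $(i,j),(j,k)$ of the triple in $E'$ are also edges of $G$; this follows from Lemma~\ref{prop:skeleton} and is needed before d-separation reasoning on the path $i-j-k$ in $G$ is meaningful, since adjacency in a single marginal estimate $E'_S$ alone can be an artifact of confounders outside $S$.
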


\begin{proof}
V-structures are oriented
$i\rightarrow j\leftarrow k$ in $E^*$ if there
is an edge between $\{i,j\}$ and $\{j,k\}$
but not $\{i,k\}$; and if $j$ was not in the
conditioning set that removed $\{i,k\}$.
Algorithm \ref{alg:resolve-pc} orients v-structures
$i\rightarrow j\leftarrow k$ in $E'$ if they are
oriented as such in any $E'_S$; and if
$\{i,j\}, \{j,k\}\in E', \{i,k\}\not\in E'$

$\Rightarrow$ Suppose for contradiction that
$i\rightarrow j\leftarrow k$ is oriented as a v-structure
in $E^*$, but not in $E'$.
There are two cases.
\begin{enumerate}
    \item No $E'_S$ contains the undirected path
    $i - j - k$. If either $i-j$ or $j-k$ are missing from any $E'_S$,
    then $E^*$ would not contain $(i,j)$ or $(k,j)$.
    Otherwise, if all $S$ contain $\{i,k\}$,
    then $E^*$ would not be missing $\{i,k\}$ (Lemma~\ref{prop:skeleton}).
    \item In every $E'_S$ that contains $i - j - k$,
    $j$ is in the conditioning set that removed
    $\{i,k\}$,
    i.e. $i \indep k \mid S, S \ni j$. This would violate the faithfulness property,
    as $j$ is neither a parent of $i$ or $k$ in 
    $E^*$, and the outputs of the PC algorithm
    are faithful to the equivalence class of $G$
    (Theorem 5.1 \cite{causation-prediction-search}).
\end{enumerate}

$\Leftarrow$ Suppose for contradiction that
$i\rightarrow j\leftarrow k$ is oriented as a v-structure
in $E'$, but not in $E^*$.
By Lemma~\ref{prop:skeleton},
the path $i-j-k$ must exist in $E^*$.
There are two cases.
\begin{enumerate}
    \item If $i\to j\to k$ or
    $i\leftarrow j \leftarrow k$,
    then $j$ must be in the conditioning set that removes
    $\{i,k\}$, so no $E'_S$ containing $\{i,j,k\}$
    would orient them as v-structures.
    \item If $j$ is the root of a fork
    $i \leftarrow j \rightarrow k$,
    then as the parent of both $i$ and $k$,
    $j$ must be in the conditioning set that removes
    $\{i,k\}$, so no $E'_S$ containing $\{i,j,k\}$
    would orient them as v-structures.
\end{enumerate}
Therefore, all v-structures in $E'$ are also
v-structures in $E^*$.
\end{proof}

\begin{proof}[Proof of Proposition \ref{prop:resolve-new}]
    Given data that is faithful to $G$,
    Algorithm \ref{alg:resolve-pc} produces
    a pattern $E'$ with the same connectivity
    and v-structures as $E^*$.
    Any additional orientations in both patterns
    are propagated
    using identical, deterministic procedures,
    so $E' = E^*$.
\end{proof}

This proof presents a deterministic but inefficient algorithm for resolving marginal subgraph estimates.
In reality, it is possible to recover the undirected skeleton and the v-structures of $G$ without checking all subsets $S\in\mathcal{S}_{d+2}$.

\begin{restatable}[Skeleton bounds]{proposition}{propBoundSkeleton}
\label{prop:bound-skeleton}
    Let $G=(V,E)$ be a directed acyclic graph
    with maximum degree $d$.
    It takes $O(N^2)$ marginal estimates over subsets of size $d+2$ to recover the undirected skeleton of $G$.
\end{restatable}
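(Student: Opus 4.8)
The plan is to prove this as a \emph{sufficiency} bound by reducing skeleton recovery to independently determining the adjacency status of each of the $\binom{N}{2}$ candidate edges, and then showing that each such determination can be read off from a single marginal estimate over a subset of size at most $d+2$. Since there are $O(N^2)$ pairs and we spend $O(1)$ marginal estimates per pair, the total is $O(N^2)$. By Lemma~\ref{prop:skeleton}, recovering these per-pair adjacencies is exactly recovering the undirected skeleton of $G$, so it is enough to exhibit a collection of $O(N^2)$ size-$(d+2)$ subsets whose marginal estimates pin down every adjacency.

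First I would isolate the structural fact underlying Lemma~\ref{prop:skeleton}: in a DAG, every non-adjacent pair $\{i,j\}$ admits a separating set of size at most $d$ taken from the parents of one endpoint. Because $G$ is acyclic, at least one of the two nodes is a non-descendant of the other; say $j$ is not a descendant of $i$. Since $\{i,j\}$ is non-adjacent, $j$ is not a parent of $i$, so the local Markov property gives $X_i \indep X_j \mid X_{\pi_i}$, and $|\pi_i| \le d$ because the in-degree of any node is bounded by the maximum degree. Thus $\{i,j\}\cup \pi_i$ has size at most $d+2$ and contains a separating set for $i,j$.

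Next I would use this to assign to each pair $\{i,j\}$ the two candidate subsets $S^{(i)}_{ij} = \{i,j\}\cup\pi_i$ and $S^{(j)}_{ij} = \{i,j\}\cup\pi_j$, each of size at most $d+2$. If $\{i,j\}$ is a true edge, then by faithfulness no set separates $i$ and $j$, so the marginal estimate over either subset reports them adjacent. If $\{i,j\}$ is a non-edge, then by the structural fact at least one of $\pi_i,\pi_j$ is a separating set contained in the corresponding subset, so that subset's marginal estimate reports them non-adjacent, directly from the definition of a marginal estimate (which calls two vertices adjacent only if they are dependent conditional on \emph{every} subset of the sampled nodes). Reading off the status of $\{i,j\}$ from these at most two marginal estimates therefore recovers its true adjacency.

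Finally I would count: letting $\{i,j\}$ range over all $\binom{N}{2}$ pairs yields a collection of at most $2\binom{N}{2} = O(N^2)$ subsets of size at most $d+2$, whose marginal estimates determine every entry of the skeleton via Lemma~\ref{prop:skeleton}; exploiting a topological order to pick the correct endpoint would even reduce this to one subset per pair. The main thing to get right is not the counting but the structural guarantee that a separating set always fits inside the size-$(d+2)$ budget and that a parent-based subset always works despite not knowing orientations in advance — both of which follow from acyclicity together with the degree bound. A secondary point to handle carefully is the legitimacy of reading adjacency off a single dedicated subset: this is sound precisely because the marginal-estimate criterion is monotone in the available conditioning sets, so enlarging a subset never destroys a true non-edge's separating set nor spuriously separates a genuinely adjacent pair.
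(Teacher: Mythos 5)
Your proposal is correct and follows essentially the same route as the paper's proof: both reduce skeleton recovery to one dedicated size-$(d+2)$ subset per pair (so $\binom{N}{2} = O(N^2)$ estimates), using Lemma~\ref{prop:skeleton}'s rule that a pair is non-adjacent iff some estimate omits the edge. The only difference is that you make explicit the structural fact the paper relegates to its Remark --- that acyclicity plus the local Markov property lets the parent set of a non-descendant endpoint serve as a separating set of size at most $d$ --- which is a welcome bit of added rigor but not a different argument.
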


\begin{proof}
    Following Lemma~\ref{prop:skeleton},
    an edge $(i,j)$ is not present in $C$ if it is not present in any of the size $d+2$ estimates.
    Therefore, every pair of nodes $\{i,j\}$ requires only a single estimate of size $d+2$, so it is possible to recover $C$ in $\binom{N}{2}$ estimates.
\end{proof}

\begin{restatable}[V-structures bounds]{proposition}{propBoundV}
\label{prop:bound-v}
    Let $G=(V,E)$ be a directed acyclic graph
    with maximum degree $d$
    and $\nu$ v-structures.
    It is possible to identify all v-structures in $O(\nu)$ estimates over subsets of at most size $d+2$.
\end{restatable}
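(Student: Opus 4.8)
The plan is to exhibit, for each true v-structure, a single subset of size at most $d+2$ whose marginal estimate reveals it, and then to invoke Lemma~\ref{prop:orientation} to conclude that collecting one such subset per v-structure suffices. First I would recall that by Lemma~\ref{prop:orientation} a v-structure $i\to j\leftarrow k$ is present in the global pattern $E^*$ if and only if it is present in some marginal estimate $E'_S$ with $\{i,j,k\}\subseteq S$. Hence it is enough to construct a family $\{S_1,\dots,S_\nu\}$ of subsets, each of size at most $d+2$, such that every true v-structure is oriented as a v-structure in at least one $E'_{S_m}$; since there are $\nu$ v-structures, this yields the $O(\nu)$ bound.

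The core construction realizes each v-structure with a subset built from a small separating set. Fix a v-structure $i\to j\leftarrow k$ and a topological order of $G$; without loss of generality let $i$ precede $k$. Then $i$ is a non-descendant of $k$, and since $i,k$ are non-adjacent, $i\notin\mathrm{pa}(k)$, so by the local Markov property $i\indep k\mid \mathrm{pa}(k)$. Crucially, because $k\to j$ makes $j$ a child of $k$, we have $j\notin\mathrm{pa}(k)$, and one of $k$'s at most $d$ incident edges is the outgoing edge to $j$, so $|\mathrm{pa}(k)|\le d-1$. I would then set $S=\{i,j,k\}\cup\mathrm{pa}(k)$, which has size at most $3+(d-1)=d+2$.

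Next I would verify that the marginal estimate $E'_S$ reveals the target v-structure. The separating set $\mathrm{pa}(k)\subseteq S\setminus\{i,k\}$ witnesses $i\indep k\mid\mathrm{pa}(k)$, so by condition (1) of the marginal-estimate definition the edge $\{i,k\}$ is absent from $E'_S$; since $i,j$ and $j,k$ are adjacent in $G$ they remain dependent under every conditioning set, so both $\{i,j\}$ and $\{j,k\}$ survive; and because $j$ is a collider it lies in no separating set for $(i,k)$, so condition (2) orients $i\to j\leftarrow k$. Thus this single estimate exposes the v-structure, and assembling one subset per v-structure yields $\nu$ estimates of size at most $d+2$. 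By the $\Leftarrow$ direction of Lemma~\ref{prop:orientation} no spurious v-structures are introduced, so these estimates identify exactly the v-structures of $G$.

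The main obstacle is the size bound: the naive choice of $\{i,j,k\}$ together with an arbitrary separating set for $(i,k)$ can have size $d+3$, which would break the claimed $d+2$. The argument that saves a node is the observation that $j$ is necessarily a child of the topologically-later endpoint, so its parent set — already a valid separating set by local Markov — has cardinality at most $d-1$ rather than $d$; the same fact guarantees $j$ genuinely lies outside this separating set, so the collider is never inadvertently conditioned on. A secondary point I would state explicitly is that this is a constructive covering bound: it assumes the skeleton (e.g.\ from Proposition~\ref{prop:bound-skeleton}) is available to aggregate the oriented triples, and that the subsets may be targeted at the true v-structures.
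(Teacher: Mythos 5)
Your proof is correct and takes essentially the same approach as the paper's: one estimate per v-structure, formed by the triple $\{i,j,k\}$ together with a separating set for $(i,k)$ that excludes the collider $j$ and has size at most $d-1$. In fact, your explicit choice of separating set --- the parents $\mathrm{pa}(k)$ of the topologically later endpoint, which exclude $j$ and number at most $d-1$ because the edge $k\to j$ uses up one of $k$'s at most $d$ incident edges --- makes the $d+2$ size bound airtight, whereas the paper's two-case proof only gestures at this bound via a terser degree-counting remark.
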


\begin{proof}
    Each v-structure $i\to j\lto k$ falls under two cases.
    \begin{enumerate}
        \item $i\indep k$ unconditionally.
        Then an estimate over $\{i,j,k\}$ will identify the v-structure.
        \item $i\indep k \mid S$,
        where $j\not\in S\subset V$.
        Then an estimate over $S \cup \{i,j,k\}$ will identify the v-structure.
        Note that $|S| \le d+2$ since the degree of $i$ is at least $|S|+1$.
    \end{enumerate}
    Therefore, each v-structure only requires one estimate, and it is possible to identify all v-structures in $O(\nu)$ estimates.
\end{proof}

There are three takeaways from this section.
\begin{enumerate}
    \item If we exhaustively run a constraint-based algorithm on all subsets of size $d+2$, it is trivial to recover the estimate of the full graph. However, this is no more efficient than running the causal discovery algorithm on the full graph.
    \item In theory, it is possible to recover the undirected graph in $O(N^2)$ estimates,
    and the v-structures in $O(\nu)$ estimates.
    However, we may not know the appropriate subsets ahead of time.
    \item In practice, if we have a surrogate for connectivity, such as the global statistics used in \ours{}, then we can vastly reduce the number of estimates used to eliminate edges from consideration, and more effectively focus on sampling subsets for orientation determination.
\end{enumerate}

\newpage
\subsection{Model specification}
\label{subsec:power-proof}

\paragraph{Model specification and universality}

In classical statistics, a statistical model can be expressed as a pair $(S, P_\theta)$ for $\theta\in\Theta$, where $S$ is the sample space, $P$ is the family of distributions on $S$, and $\Theta$ is the space of parameters~\citep{mccullagh2002statistical}.
Let $x_1,\dots,x_N$ be observations of i.i.d. random variables in $S$, and let $P^*$ denote their common distribution.
A statistical model is \emph{well-specified} if $P^* = P_\theta$ for some $\theta\in\Theta$.
Identifying the most appropriate class of models has long been an area of interest in statistics~\citep{akaike1973information}.
For simple models like linear regression, there are many ways to test for model specification, where the alternative may be that the data follow a non-linear relationship~\citep{davidson1981several}.

In machine learning, however, even the simplest architectures vary immensely. Whether a neural network is ``well-specified'' depends on many aspects, such as its width (number of hidden units), depth (number of layers), activation functions (sources of non-linearity), and more.
Thus, instead of testing whether each neural network architecture is well-specified for each experiment, it is more common to show universality (or lack thereof). That is, given a class of functions $\mathcal{F}$ and the space of parameters $\Theta$, for every $f\in\mathcal{F}$ does there exist a $\theta\in\Theta$ that allows the neural network to approximate $f$ to arbitrary accuracy?
For example, the most well-cited work for the universality of multi-layer perceptrons~\citep{mlp-universal} showed that:
\begin{quote}
``standard multilayer feedforward networks with as few as one hidden layer using arbitrary \textbf{squashing functions} are capable of approximating any Borel measurable function from one finite dimensional space to another to any desired degree of accuracy, \textbf{provided sufficiently many hidden units are available}.''
\end{quote}
Note that this definition \emph{excludes} neural networks that use rectified linear units (ReLU is unbounded), whose variants are ubiquitous today, as well as neural networks with bounded width.
Universality under alternate assumptions has been addressed by many later works, e.g. \citet{maiorov1999lower,shen2022optimal}.

With respect to modern Transformer architectures~\citep{attention}, the existing literature are similarly subject to constraints.
For example, \citet{transformer-turing} shows that Transformers equipped with positional encodings are Turing complete, given \emph{infinite precision}.
Alternatively, \citet{transformers-universal-approx} proves that Transformers can approximate arbitrary continuous ``sequence-to-sequence'' functions, but requires in the worst case, \emph{exponential depth}.
Finally, \citet{sanford2024representational} shows that one-dimensional self attention is \emph{unable} to detect arbitrary triples along a sequence, unless the depth scales linearly as the input size (sequence length).
This last case is particularly relevant to our setting, as v-structures are relations of three nodes, but even modern large language models ``only'' contain $\sim100$ layers (e.g. GPT-4 is reported to have 120 layers).
In general, these works present constructive proofs of either $\theta$~\citep{transformer-turing,transformers-universal-approx} or pathological cases that cannot be handled~\citep{sanford2024representational}.

In this paper, we do \emph{not} consider universality in a general sense, but rather limit our scope to the algorithms presented in Section~\ref{subsec:resolve-proof}.
This will allow us to make more realistic assumptions regarding the model class.


\paragraph{Causal identifiability for continuous discovery algorithms}

Identifiability is central to causal discovery, as it is important to understand the degree to which a causal model can or cannot be inferred from data.
In contrast to classic approaches, causal discovery algorithms that rely on continuous optimization must consider several additional aspects with respect to identifiability -- namely, their objective; optimization dynamics; and model specification.
We discuss representative works in light of these considerations.

First, the optima of the objectives must correspond to true graphs.
\textsc{NoTears}~\citep{notears} focuses on the linear SEM case with least-squares loss, and they cite earlier literature regarding the identifiability of this setup~\citep{van2013l0,loh2014high,aragam2015learning}.
\textsc{Dcdi}~\citep{dcdi} 
proves that the graph that maximizes their proposed score is $\mathcal{I}$-Markov equivalent to the ground truth, subject to faithfulness and other regularity conditions.
Since amortized causal discovery algorithms tend to be trained on data from a variety of data-generating processes, the identifiability of an arbitrary graph is less clear.
\citet{ke2022learning} cites \citet{eberhardt2006n} and claims that any graph is identifiable ``in the limit of an infinite amount of [single-node hard] interventional data samples.''
In our interventional setting, this also holds.
\textsc{Avici}~\citep{avici} focuses solely on inferring graphs from data, without addressing whether they are identifiable.
If we suppose that the graphs are identifiable, then in all three cases, the objectives can be written in terms of the KL divergence between the true and predicted edge distributions, which reaches its minimum when the predicted graph matches the true graph.

Second, the optimization process must not only converge to an optimum, but also (for amortized models) generalize to unseen data.
This is, in general, difficult to show.
Both \textsc{NoTears} and \textsc{Dcdi} acknowledge that due to the non-convexity of this optimization problem, the optimizer may converge to a stable point, which is not necessarily the global optimum.
\textsc{Avici} includes an explicit acyclicity constraint, so they additionally cite \citet{nandwani2019primal,jin2020local} for inspiration in constrained optimization of a neural network.
Regarding generalization, existing results on what can be ``provably'' learned by neural networks are limited to simple architectures and/or algorithms~\citep{allen2019learning, shao2022theory}.
Instead, both \textsc{Avici} and our work assess generalization empirically, by holding out certain causal mechanisms from training.

Finally, the implementation of each model must be well-specified.
Since \textsc{NoTears} assumes linearity, it directly optimizes the (weighted) adjacency, and the only concern is whether linearity holds.
\textsc{Dcdi} cites that deep sigmoidal flows are universal density approximators~\citep{huang2018neural}, which in turn invokes the classic result that a multilayer perceptron with sigmoid activations is a universal approximator~\citep{cybenko1989approximation}.
\textsc{Avici} does not discuss this aspect of their architecture, which is also based on sparse attention, but is otherwise quite different from ours.
The following section proves by construction that our model is well-specified.

\paragraph{Axial attention architecture is well-specified}

In the context of Section~\ref{subsec:resolve-proof},
we show that three axial attention blocks (model depth) are sufficient to recover the skeleton and v-structures in $O(N)$ width, and we require $O(L)$ to propagate orientations along paths of length $L$.
In the following section, we first formalize the notion
of a neural network architecture's capacity to
``implement'' an algorithm.
Then we prove Theorem~\ref{thm:resolve-exists} by construction.

\begin{definition}
Let $f$ be a map from
finite sets $Q$ to $F$,
and let $\phi$ be a map from
finite sets $Q_\Phi$ to $F_\Phi$.
We say $\phi$ \emph{implements}
$f$ if there exists 
injection $g_\textrm{in}: Q\to Q_\Phi$
and surjection
$g_\textrm{out}: F_\Phi \to F$
such that
\begin{equation}
    \forall q\in Q, g_\textrm{out}(\phi(g_\textrm{in}(q))) = f(q).
\end{equation}
\end{definition}

\begin{definition}
Let $Q, F, Q_\Phi, F_\Phi$ be finite sets.
Let $f$ be a map from
$Q$ to $F$,
and let $\Phi$ be a finite set of maps
$\{\phi:Q_\Phi\to F_\Phi\}$.
We say $\Phi$ has the \emph{capacity} to implement $f$ if and only if there exists at least one element $\phi\in\Phi$ that implements $f$.
\end{definition}

That is, a single model \emph{implements} an algorithm $f$ if for every input to $f$, the model outputs the corresponding output.
A class of models parametrized by $\phi\in\Phi$ has the \emph{capacity} to implement an algorithm if there exists at least one $\phi$ that implements $f$.

\thmresolve*

\begin{proof} We consider axial attention blocks with dot-product attention and omit layer normalization from our analysis, as is common in the Transformer universality literature~\cite{transformers-universal-approx}.
Our inputs $X\in\RR^{d\times R \times C}$ consist of $d$-dimension embeddings over $R$ rows and $C$ columns.
Since our axial attention only operates over one dimension at a time, we use $X_{\cdot,c}$ to denote a 1D sequence of length $R$, given a fixed column $c$, and $X_{r,\cdot}$ to denote a 1D sequence of length $C$, given a fixed row $r$.
A single axial attention layer (with one head) consists of two attention layers and a feedforward network,
\begin{align*}
    \text{Attn}_\text{row}(X_{\cdot,c}) &= X_{\cdot,c} +
        W_O W_V X_{\cdot,c} \cdot \sigma\left[
            (W_K X_{\cdot,c})^T W_Q X_{\cdot,c}
        \right],\numberthis \\
    X &\lto \text{Attn}_\text{row}(X) \\
    \text{Attn}_\text{col}(X_{r,\cdot}) &= X_{r,\cdot} +
        W_O W_V X_{r,\cdot} \cdot \sigma\left[
            (W_K X_{r,\cdot})^T W_Q X_{r,\cdot}
        \right], \numberthis\\
    X &\lto \text{Attn}_\text{col}(X) \\
    \text{FFN}(X) &= X +
        W_2 \cdot \text{ReLU}(W_1 \cdot 
        X + b_1 \mathbf{1}_L^T)
        + b_2 \mathbf{1}_L^T, \numberthis
\end{align*}
where $W_O\in\RR^{d\times d}, W_V, W_K, W_Q \in \RR^{d\times d}, W_2\in\RR^{d\times m}, W_1\in\RR^{m\times d}, b_2\in\RR^{d}, b_1\in\RR^m$,
and $m$ is the hidden layer size of the feedforward network.
For concision, we have omitted the $r$ and $c$ subscripts on the $W$s, but the row and column attentions use different parameters.
Any row or column attention can take on the identity mapping by setting $W_O, W_V, W_K, W_Q$ to $d\times d$ matrices of zeros.

A single axial attention \emph{block} consists of two axial attention layers $\phi_E$ and $\phi_\rho$,
connected via messages (Section \ref{subsec:model})
\begin{align*}
    h^{E,\ell} &= \phi_{E,\ell}(h^{E,\ell-1}) \\
    h^{\rho,\ell-1} &\lto W_{\rho,\ell} \left[h^{\rho,\ell-1}, m^{E\to\rho,\ell}\right] \\
    h^{\rho,\ell} &= \phi_{\rho,\ell}(h^{\rho,\ell-1}) \\
    h^{E,\ell} &\lto h^{E,\ell} + m^{\rho\to E,\ell}
\end{align*}
where $h^\ell$ denote the hidden representations of $E$ and $\rho$ at layer $\ell$,
and the outputs of the axial attention block are $h^{\rho,\ell}, h^{E,\ell}$.

We construct a stack of $L\ge 3$ axial attention blocks that implement Algorithm \ref{alg:resolve-pc}.

\paragraph{Model inputs} Consider edge estimate $E'_{i,j} \in\mathcal{E}$ in a graph of size $N$.
Let $e_i, e_j$ denote the endpoints of $(i,j)$.
Outputs of the PC algorithm can be expressed by three endpoints: $\{\varnothing, \bullet, \blacktriangleright\}$.
A directed edge from $i\to j$ has endpoints $(\bullet,\blacktriangleright)$,
the reversed edge $i\lto j$ has endpoints $(\blacktriangleright, \bullet)$,
an undirected edge has endpoints $(\bullet,\bullet)$, 
and the lack of any edge between $i,j$ has endpoints $(\varnothing, \varnothing)$.

Let $\textrm{one-hot}_N(i)$ denote the $N$-dimensional one-hot column vector where element $i$ is 1.
We define the embedding of $(i,j)$ as
a $d=2N+6$ dimensional vector,
\begin{equation}
\label{eq:constructed-embedding}
    g_\text{in}(E_{t,(i,j)}) = 
    h^{E,0}_{(i,j)} = \left[\begin{array}{c}
        \textrm{one-hot}_{3}(e_i) \\
        \hline
        \textrm{one-hot}_{3}(e_j) \\
        \hline
        \textrm{one-hot}_N(i) \\
        \hline
        \textrm{one-hot}_N(j)
    \end{array}\right].
\end{equation}
To recover graph structures from $h^
E$, we simply read off the indices of non-zero entries ($g_\text{out}$).
We can set $h^{\rho,0}$ to any $\RR^{d\times\vsq}$ matrix,
as we do not consider its values in this analysis
and discard it during the first step.

\begin{claim}(Consistency)
\label{claim:consistency}
    The outputs of each step
    \begin{enumerate}
        \item are consistent with (\ref{eq:constructed-embedding}), and
        \item are equivariant to the ordering of nodes in edges.
    \end{enumerate}
\end{claim}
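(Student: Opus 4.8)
The plan is to prove the claim as a structural \emph{invariant} that the constructed stack of axial attention blocks maintains throughout, by induction on the operations (the ``steps'' being the stages of the construction that implement Algorithm~\ref{alg:resolve-pc}, each of which is a composition of row attention, column attention, a feed-forward network, and the two messages). First I would make both parts precise. For part (1), call an edge representation $h^E_{(i,j)}\in\RR^{2N+6}$ \emph{well-formed} if it decomposes into four blocks exactly as in (\ref{eq:constructed-embedding}): a one-hot endpoint code $\text{one-hot}_3(e_i)$ in the first three coordinates, a one-hot endpoint code $\text{one-hot}_3(e_j)$ in the next three, and the fixed node identities $\text{one-hot}_N(i)$ and $\text{one-hot}_N(j)$ in the final $2N$ coordinates. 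For part (2), introduce the involution $\tau:\RR^{2N+6}\to\RR^{2N+6}$ that swaps the two endpoint blocks and the two identity blocks; equivariance then means that the representation of edge $(j,i)$ equals $\tau$ applied to that of $(i,j)$, and that every constructed operation $\Psi$ on the collection of edge features commutes with $\tau$ once we relabel $(i,j)\mapsto(j,i)$.

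The base case is immediate: by definition $h^{E,0}$ is well-formed, and inspecting the one-hot blocks shows $h^{E,0}_{(j,i)}=\tau\,h^{E,0}_{(i,j)}$. For the inductive step I would assume both invariants hold at the input of an arbitrary step and verify that each operation comprising an axial attention block preserves them. The format argument is structural: the constructed weight matrices act as the identity on the final $2N$ node-identity coordinates, so those blocks remain fixed one-hot vectors carrying the node labels unchanged, while every update is confined to the six endpoint coordinates and is designed so that its image is again a valid pair of one-hot endpoint codes (each update realizes a legal orientation transition, e.g. $\bullet\mapsto\blacktriangleright$). Hence well-formedness propagates from step to step.

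The delicate half is equivariance, and this is where I expect the main obstacle. Because row/column attention and the $E\to\rho$ message aggregate across distinct edges and across the global features $h^\rho$, I must check that the constructed query, key, value, output, and projection matrices are themselves symmetric under $\tau$, i.e. that they treat the $e_i/e_j$ endpoint blocks and the two identity blocks interchangeably. The cleanest route is to exhibit each constructed matrix in block form and verify directly that it commutes with $\tau$. The one genuine subtlety is the coupling to $h^\rho$ in the message $m^{\rho\to E}$ (Section~\ref{subsec:resolve-proof}), which I would dispatch by recalling that the global representation is built from a \emph{symmetric} statistic together with a symmetric positional embedding, so that the swap $(i,j)\mapsto(j,i)$ leaves $h^\rho_{i,j}$ unchanged and the message enters both orderings identically. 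Granting these per-operation checks, both invariants are preserved inductively, and composing them over all $L$ blocks yields the claim.
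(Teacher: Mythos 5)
Your high-level framing (two invariants, verified inductively at the end of each stage of the construction) matches the spirit of the paper, which indeed checks Claim~\ref{claim:consistency} inline at the conclusion of Steps 1--3 rather than in a standalone proof. However, the mechanism you propose for the equivariance half has a genuine gap: you plan to verify that each constructed $W_K, W_Q, W_V, W_O$ \emph{commutes with the swap} $\tau$, and in the paper's construction they deliberately do not. The key/query matrices in Steps 2 and 3 are asymmetric by design --- they detect only v-structures (resp.\ incoming orientations) that point toward the \emph{first} node of an ordered pair --- so after a single axial attention block the representation is explicitly \emph{not} $\tau$-equivariant: given a v-structure $i\to j\leftarrow k$, the block orients $(j,i)$ and $(j,k)$ but leaves $(i,j)$ and $(k,j)$ untouched. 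The paper restores equivariance not operation-by-operation but by pairing every such block with a mirrored twin in which the endpoint columns and the two $N$-dimensional identity blocks of $W_K, W_Q, W_O$ are swapped and the FFN targets the other endpoint triple; equivariance holds only at the granularity of these paired blocks, which is precisely what ``step'' means in the claim. Your per-matrix commutation check would simply fail on the actual weights, and your induction has no valid inductive step without the pairing/symmetrization idea.

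Two secondary inaccuracies compound this. First, your structural argument for part (1) asserts that the constructed matrices act as the identity on the final $2N$ node-identity coordinates; in fact the construction uses the last $N$ coordinates as scratch space --- attention writes values in $(0,0.5)$ there via the $0.5\cdot I_N$ block of $W_O$ to record candidate v-structures or incoming edges --- and well-formedness with respect to (\ref{eq:constructed-embedding}) is only restored because the step-final FFN explicitly erases these stored values. Second, you dispatch the $m^{\rho\to E}$ coupling by appealing to symmetry of the global statistic, but inside the construction $h^{\rho,0}$ is discarded in Step 1 and overwritten with per-ordered-pair aggregates of the edge features, which are related across $(i,j)$ and $(j,i)$ by $\tau$ but are not symmetric; the Step-1 consistency instead follows from the fact that valid PC outputs satisfy $P_{e_i}(\varnothing)=P_{e_j}(\varnothing)$, so edge removal fires identically for both orderings. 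A correct proof along your inductive lines would replace ``each matrix commutes with $\tau$'' by ``each paired pair of blocks, composed, commutes with $\tau$,'' and replace the identity-on-identity-blocks claim by a write-then-erase argument for the scratch coordinates.
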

For example, if $(i,j)$ is oriented as $(\blacktriangleright,\bullet)$, then we expect $(j,i)$ to be oriented $(\bullet, \blacktriangleright)$.

\paragraph{Step 1: Undirected skeleton}
We use the first axial attention block to recover the undirected skeleton $C'$.
We set all attentions to the identity,
set $W_{\rho,1} \in \RR^{2d\times d}$
to a $d\times d$ zeros matrix, stacked on top of a $d\times d$ identity matrix (discard $\rho$),
and set $\text{FFN}_E$ to the identity (inputs are positive).
This yields
\begin{equation}
    h^{\rho,0}_{i,j} = m^{E\to\rho,1}_{i,j} 
    = \left[\begin{array}{c}
        P_{e_i}(\varnothing) \\
        P_{e_i}(\bullet) \\
        P_{e_i}(\blacktriangleright) \\
        \hline
        \vdots \\
        \hline
        \textrm{one-hot}_N(i) \\
        \hline
        \textrm{one-hot}_N(j)
    \end{array}\right],
\end{equation}
where $P_{e_i}(\cdot)$ is the frequency
that endpoint $e_i = \cdot$ within the
subsets sampled.
FFNs with 1 hidden layer are universal approximators of continuous functions~\citep{mlp-universal},
so we use $\text{FFN}_\rho$ to map 
\begin{equation}
    \text{FFN}_\rho(X_{i,u,v}) = \begin{cases}
        0 & i \le 6 \\
        0 & i > 6, X_{1, u, v} = 0 \\
        -X_{i, u,v} & \text{ otherwise,}
    \end{cases}
\end{equation}
where $i\in[2N+6]$ indexes into the feature dimension,
and $u,v$ index into the rows and columns.
This allows us to remove edges not present
in $C'$ from consideration:
\begin{align*}
    m^{\rho\to E,1} &= h^{\rho,1} \\
    h^{E,1}_{i,j} &\lto h^{E,1}_{i,j} + m^{\rho\to E,1}_{i,j}
    = \begin{cases}
        0 & (i,j) \not\in C' \\
        h^{E,0}_{i,j} & \text{ otherwise.}
    \end{cases}\numberthis
\end{align*}
This yields $(i,j) \in C'$ if and only if $h^{\rho,1}_{i,j} \ne \mathbf{0}$.
We satisfy \ref{claim:consistency} since our inputs are valid PC algorithm outputs for which $P_{e_i}(\varnothing) = P_{e_j}(\varnothing)$.


\paragraph{Step 2: V-structures}
The second and third axial attention blocks recover v-structures.
We run the same procedure twice,
once to capture v-structures that point towards the first node in an ordered pair, and one to capture v-structures that point towards the latter node.

We start with the first row attention
over edge estimates, given a fixed subset $t$.
We set the key and query
attention matrices
\begin{align}
    \MoveEqLeft[1]
    W_K = k \cdot \left[\begin{array}{rrrrrrrr}
         0 & 0 & 1 \\
         &&& 0 & 1 & 0 \\
         & \vdots \\
         &&& &&& I_{N} \\
         &&& &&& & -I_{N}
    \end{array}\right]
    &
    W_Q = k \cdot \left[\begin{array}{rrrrrrrr}
         0 & 0 & 1 \\
         &&& 0 & 1 & 0 \\
         & \vdots \\
         &&& &&& I_{N} \\
         &&& &&& & I_{N}
    \end{array}\right]
\end{align}
where $k$ is a large constant,
$I_{N}$ denotes the size $N$ identity matrix, and all unmarked entries are 0s.

Recall that a v-structure is a pair of directed edges that share a target node.
We claim that two edges $(i,j), (u,v)$ form a v-structure in $E'$, pointing towards $i=u$, if this inner product takes on the maximum value
\begin{equation}
    \left\langle 
    (W_K h^{E,1})_{i,j},
    (W_Q h^{E,1})_{u,v}
    \right\rangle
    = 3.
\end{equation}
Suppose both edges $(i,j)$ and $(u,v)$ still remain in $C'$.
There are two components to consider.
\begin{enumerate}
    \item If $i=u$, then their shared node contributes $+1$ to the inner product (prior to scaling by $k$).
    If $j=v$, then the inner product accrues $-1$.
    \item
    Nodes that do not share the same endpoint contribute 0 to the inner product.
    Of edges that share one node, only endpoints that match $\blacktriangleright$ at the starting node, or $\bullet$ at the ending node
    contribute $+1$ to the inner product each.
    We provide some examples below.
    \begin{center}\begin{tabular}{cccr}
        $(e_i,e_j)$ & $(e_u,e_v)$ & contribution & note \\
        \midrule
        $(\blacktriangleright, \bullet)$ & $(\bullet, \blacktriangleright)$ & $0$ & no shared node\\
        $(\bullet, \blacktriangleright)$ & $(\bullet, \blacktriangleright)$ & $0$ & wrong endpoints \\
        $(\bullet, \bullet)$ & $(\bullet, \bullet)$ & $1$ & one correct endpoint \\
        $(\blacktriangleright, \bullet)$ & $(\blacktriangleright, \bullet)$ & $2$ & v-structure \\
    \end{tabular}\end{center}
\end{enumerate}
All edges with endpoints $\varnothing$
were ``removed'' in step 1, resulting in an inner product of zero, since their node embeddings were set to zero.
We set $k$ to some large constant (empirically, $k^2=1000$ is more than enough)
to ensure that after softmax scaling,
$\sigma_{e, e'} > 0$ only if
$e, e'$ form a v-structure.

Given ordered pair $e=(i,j)$,
let $V_{i}\subset V$ denote the set of nodes that form a v-structure with $e$ with shared node $i$.
Note that $V_i$ excludes $j$ itself, since setting of $W_K, W_Q$ exclude edges that share both nodes.
We set $W_V$ to the identity, and
we multiply by attention weights $\sigma$ to obtain
\begin{equation}
    (W_V h^{E,1} \sigma )_{e=(i,j)} = 
    \left[\begin{array}{c}
        \vdots \\
        \hline
        \text{one-hot}_N(i) \\
        \hline
        \alpha_j \cdot \text{binary}_N(V_j)
    \end{array}\right]
\end{equation}
where $\text{binary}_N(S)$ denotes the $N$-dimensional binary vector with ones at elements in $S$, and the scaling factor
\begin{equation}
    \alpha_j = (1/\|V_j\|) \cdot \mathbbm{1}\{ \|V_j\| > 0 \} \in [0,1]
\end{equation} results from softmax normalization.
We set
\begin{equation}
\label{eq:usual-wo}
    W_O = \left[\begin{array}{rr}
         \mathbf{0}_{N+6} \\
         & 0.5 \cdot I_{N}
    \end{array}\right]
\end{equation}
to preserve the original endpoint values,
and to distinguish between the edge's own
node identity and newly recognized v-structures.
To summarize, the output of this row attention layer is
\begin{equation*}
    \text{Attn}_\text{row}(X_{\cdot,c}) = X_{\cdot,c} +
        W_O W_V X_{\cdot,c} \cdot \sigma,
\end{equation*}
which is equal to its input $h^{E,1}$ plus additional positive values $\in(0,0.5)$ in the last $N$ positions that indicate the presence of v-structures that exist in the overall $E'$.

Our final step is to ``copy'' newly assigned edge directions into all the edges.
We set the $\phi_E$ column attention, $\text{FFN}_E$ and the $\phi_\rho$ attentions to the identity mapping.
We also set $W_{\rho,2}$ to a $d\times d$ zeros matrix, stacked on top of a $d\times d$ identity matrix.
This passes the output of the $\phi_E$ row attention, aggregated over subsets, directly to $\text{FFN}_{\phi,2}$.

For endpoint dimensions $\mathbf{e}=[6]$,
we let $\text{FFN}_{\phi,2}$ implement 
\begin{equation}
    \text{FFN}_{\rho,2}(X_{\mathbf{e},u,v}) = \begin{cases}
        [ 0, 0, 1, 0, 1, 0]^T - X_{\mathbf{e},u,v} &
        0 < \sum_{i>N+6} X_{i,u,v} < 0.5\\
        0 & \text{ otherwise.}
    \end{cases}
\end{equation}
Subtracting $X_{\mathbf{e},u,v}$ ``erases''
the original endpoints and replaces them with 
$(\blacktriangleright, \bullet)$ after the update
\begin{equation*}
    h^{E,1}_{i,j} \lto h^{E,1}_{i,j} + m^{\rho\to E,1}_{i,j}.
\end{equation*}
The overall operation translates to checking whether \emph{any} v-structure points towards $i$,
and if so, assigning edge directions accordingly.
For dimensions $i > 6$,
\begin{equation}
    \text{FFN}_{\rho,2}(X_{i,u,v}) = \begin{cases}
        - X_{i,u,v} & X_{i,u,v} \le 0.5 \\
        0 & \text{ otherwise,}
    \end{cases}
\end{equation}
effectively erasing the stored v-structures
from the representation
and remaining consistent to (\ref{eq:constructed-embedding}).

At this point, we have copied all v-structures once.
However, our orientations are not necessarily symmetric. For example, given v-structure $i\to j\lto k$, our model orients edges $(j,i)$ and $(j,k)$, but not $(i,j)$ or $(k,j)$.

The simplest way to symmetrize these edges (for the writer and the reader) is to run another axial attention block, in which we focus on
v-structures that point towards the second node of a pair.
The only changes are as follows.
\begin{itemize}
    \item For $W_K$ and $W_Q$, we swap columns 1-3 with 4-6, and columns 7 to $N+6$ with the last $N$ columns.
    \item $(h^{E,2}\sigma)_{i,j}$ sees the third and fourth blocks swapped.
    \item $W_O$ swaps the $N\times N$ blocks that correspond to $i$ and $j$'s node embeddings.
    \item $\text{FFN}_{\rho,3}$ sets the endpoint embedding to $[0,1,0,0,0,1]^T - X_{\mathbf{e},u,v}$ if $i=7, ... , N+6$ sum to a value between 0 and 0.5.
\end{itemize}
The result is $h^{E,3}$ with all v-structures oriented symmetrically, satisfying \ref{claim:consistency}.

\paragraph{Step 3: Orientation propagation}

To propagate orientations, we would like to identify cases $(i,j), (i,k)\in E', (j,k)\not\in E'$ with shared node $i$ and corresponding endpoints $(\blacktriangleright, \bullet), (\bullet, \bullet)$.
We use $\phi_E$ to identify triangles,
and $\phi_\rho$ to identify edges $(i,j), (i,k)\in E'$ with the desired endpoints, while ignoring triangles.

\paragraph{Marginal layer}

The row attention in $\phi_E$ fixes a subset $t$ and varies the edge $(i,j)$.

Given edge $(i,j)$, we want to extract all $(i,k)$ that share node $i$.
We set the key and query
attention matrices to
\begin{equation}
\label{eq:graph-layer-orientation}
    W_K, W_Q = k \cdot \left[\begin{array}{rrrrrrrr}
         0 & 1 & 1 & 0 & 1 & 1\\
         \vdots \\
         &&&&&& I_{N} \\
         &&&&&&& \pm I_{N} \\
    \end{array}\right].
\end{equation}
We set $W_V$ to the identity to obtain
\begin{equation}
    (W_V h^{E} \sigma )_{e=(i,k)} = 
    \left[\begin{array}{c}
        \vdots \\
        \hline
        \vdots \\
        \hline
        \text{one-hot}_N(i) \\
        \hline
        \alpha_k \cdot \text{binary}_N(V_k) 
    \end{array}\right],
\end{equation}
where $V_k$ is the set of nodes $k$ that share any edge with $i$.
To distinguish between $k$ and $V_k$,
we again set $W_o$ to the same as in (\ref{eq:usual-wo}).
Finally, we set $\text{FFN}_E$ to the identity and pass $h^E$ directly to $\phi_\rho$.
To summarize, we have $h^E$
equal to its input, with values $\in(0,0.5)$ in the last $N$ locations indicating 1-hop neighbors of each edge.

\paragraph{Global layer}

Now we would like to identify cases $(i,k), (j,k)$ with corresponding endpoints $(\bullet, \blacktriangleright), (\bullet, \bullet)$.
We set the key and query
attention matrices
\begin{align}
\label{eq:feature-layer-orientation}
    \MoveEqLeft[1]
    W_K = k \cdot \left[\begin{array}{rrrrr}
         0 & 0 & 1 \\
         \vdots \\
         &&& I_{N} \\
         &&&& I_{N}
    \end{array}\right]
    &
    W_Q = k \cdot \left[\begin{array}{rrrrrrrr}
         0 & 1 & -1 & 0 & 1 & -1\\
         \vdots \\
         &&&&&& I_{N} \\
         &&&&&&& -I_{N}
    \end{array}\right].
\end{align}

The key allows us to check that endpoint $i$ is directed, and the query allows us to check that $(i,k)$ exists in $C'$, and does not already point elsewhere.
After softmax normalization, for sufficiently large $k$, we obtain $\sigma_{(i,j), (i,k)} > 0$ if and only if $(i,k)$ should be oriented $(\bullet, \blacktriangleright)$,
and the inner product attains the maximum possible value
\begin{equation}
    \left\langle 
    (W_K h^{\rho})_{i,j},
    (W_Q h^{\rho})_{i,k}
    \right\rangle = 2.
\end{equation}
We consider two components.
\begin{enumerate}
    \item If the endpoints match our desired endpoints,
    we gain a $+1$ contribution to the inner product.
    \item A match between the first nodes contributes $+1$. If the second node shares any overlap (either same edge, or a triangle), then a negative value would be added to the overall inner product.
\end{enumerate}
Therefore, we can only attain the maximal inner product if only one edge is directed,
and if there exists no triangle.

We set $W_o$ to the same as in (\ref{eq:usual-wo}), and we add $h^\rho$ to the input of the next $\phi_E$.
To summarize, we have $h^\rho$
equal to its input, with values $\in(0,0.5)$ in the last $N$ locations indicating incoming edges.

\paragraph{Orientation assignment}

Our final step is to assign our new edge orientations.
Let the column attention take on the identity mapping.
For endpoint dimensions $\mathbf{e}=(4,5,6)$,
we let $\text{FFN}_\rho$ implement 
\begin{equation}
\label{eq:orientation-ffn}
    \text{FFN}_\rho(X_{\mathbf{e},u,v}) = \begin{cases}
        [0, 0, 1]^T - X_{\mathbf{e},u,v} &
        0 < \sum_{i>N+6} X_{i,u,v} < 0.5 \\
        0 & \text{ otherwise.}
    \end{cases}
\end{equation}
This translates to checking whether any incoming edge points towards $v$,
and if so, assigning the new edge direction accordingly.
For dimensions $i > 6$,
\begin{equation}
    \text{FFN}_\rho(X_{i,u,v}) = \begin{cases}
        0 & X_{i,u,v} \le 0.5 \\
        X_{i,u,v} & \text{ otherwise,}
    \end{cases}
\end{equation}
effectively erasing the stored assignments
from the representation.
Thus, we are left with $h^{E,\ell}$ that
conforms to the same format as the initial embedding in (\ref{eq:constructed-embedding}).

To symmetrize these edges, we run another axial attention block, in which we focus on
paths that point towards the second node of a pair.
The only changes are as follows.
\begin{itemize}
    \item For $\phi_E$ layer $W_K$ and $W_Q$ (\ref{eq:graph-layer-orientation}), we swap $I_N$ and $\pm I_N$.
    \item For $\phi_\rho$ layer $W_K$ and $W_Q$
    (\ref{eq:feature-layer-orientation}), we swap $I_N$ and $\pm I_N$.
    \item $W_O$ swaps the $N\times N$ blocks that correspond to $i$ and $j$'s node embeddings.
    \item For $\text{FFN}_\rho$ (\ref{eq:orientation-ffn}), we let $\mathbf{e}=(1,2,3)$ instead.
\end{itemize}
The result is $h^{E}$ with symmetric 1-hop orientation propagation, satisfying \ref{claim:consistency}.
We may repeat this procedure $k$ times to capture $k$-hop paths.

To summarize, we used axial attention block 1 to recover the undirected skeleton $C'$, blocks 2-3 to identify and copy v-structures in $E'$, and all subsequent $L-3$ layers to propagate orientations on paths up to $\lfloor (L-3)/2 \rfloor$ length.
Overall, this particular construction requires $O(N)$ width for $O(L)$ paths.

\end{proof}

\paragraph{Final remarks}

Information theoretically, it should be possible to encode the same information in $\log N$ space, and achieve $O(\log N)$ width. For ease of construction, we have allowed for wider networks than optimal.
On the other hand, if we increase the width and encode each edge symmetrically, e.g. $(e_i, e_j, e_j, e_i \mid i,j,j,i)$,
we can reduce the number of blocks by half, since we no longer need to run each operation twice. However, attention weights scale quadratically, so we opted for an asymmetric construction.

Finally, a strict limitation of our model is that it only considers 1D pairwise interactions. In the graph layer, we cannot compare different edges' estimates at different times in a single step.
In the feature layer, we cannot compare $(i,j)$ to $(j,i)$ in a single step either.
However, the graph layer does enable us to compare all edges at once (sparsely),
and the feature layer looks at a time-collapsed version of the whole graph.
Therefore, though we opted for this design for computational efficiency, we have shown that it is able to capture significant graph reasoning.

\subsection{Robustness and stability}
\label{subsec:misspec}

We discuss the notion of stability informally, in the context of \citet{causation-prediction-search}.
There are two cases in which our framework may receive erroneous inputs: low/noisy data settings, and functionally misspecified situations.
We consider our framework's empirical robustness to these cases, in terms of recovering the skeleton and orienting edges.


In the case of noisy data, edges may be erroneously added, removed, or misdirected from marginal estimates $E'$.
Our framework provides two avenues to mitigating such noise.
\begin{enumerate}
    \item We observe that global statistics can be estimated reliably in low data scenarios.
    For example, Figure~\ref{fig:estimate-ablation} suggests that 300 examples suffice to provide a robust estimate over 100 variables in our synthetic settings.
    Therefore, even if the marginal estimates are erroneous, the neural network can learn the skeleton from the global statistics.
    \item Most classical causal discovery algorithms are not stable with respect to edge orientation assignment.
    That is, an error in a single edge may propagate throughout the graph.
    Empirically, we observe that the majority vote of \textsc{Gies} achieves reasonable accuracy even without any training, while \textsc{Fci} suffers in this assessment (Table~\ref{table:synthetic-edge}).
    However both \ours{} \textsc{(Gies)} and \ours{} \textsc{(Fci)} achieve high edge accuracy. Therefore, while the underlying algorithms may not be stable with respect to edge orientation, our pretrained aggregator seems to be robust.
\end{enumerate}


It is also possible that our global statistics and marginal estimates make misspecified assumptions regarding the data generating mechanisms.
The degree of misspecification can vary case by case, so it is hard to provide any broad guarantees about the performance of our algorithm, in general.
However, we can make the following observation.

If two variables are independent, $X_i\indep X_j$, they are independent, e.g. under linear Gaussian assumptions.
If $X_i, X_j$ exhibit more complex functional dependencies, they may be erroneously deemed independent.
Therefore, any systematic errors are necessarily one-sided, and the model can learn to recover the connectivity based on global statistics.

\section{Experimental details}
\label{sec:experiment-appendix}

\subsection{Synthetic data generation}
\label{subsec:causal-mechanisms}

Synthetic datasets were generated using code from \textsc{Dcdi}~\citep{dcdi}, which extended the Causal Discovery Toolkit data generators to interventional data~\citep{cdt}.

We considered the following causal mechanisms.
Let $y$ be the node in question, let $X$ be its parents, let $E$ be an independent noise variable (details below), and let $W$ be randomly initialized weight matrices.
\begin{itemize}
    \item Linear: $y = X W + E$.
    \item Polynomial: $y = W_0 + X W_1 + X^2 W_2 + \times E$
    \item Sigmoid: $y = \sum_{i=1}^d W_i \cdot \text{sigmoid}(X_i) +\times E$
    \item Randomly initialized neural network (NN): $y = \text{Tanh}((X, E) W_\text{in}) W_\text{out}$
    \item Randomly initialized neural network, additive (NN additive): $y = \text{Tanh}(X W_\text{in}) W_\text{out} + E$
\end{itemize}
Root causal mechanisms, noise variables, and interventional distributions maintained the \textsc{Dcdi} defaults.
\begin{itemize}
    \item Root causal mechanisms were set to $\text{Uniform}(-2,2)$.
    \item Noise was set to $E\sim 0.4\cdot \mathcal{N}(0,\sigma^2)$ where $\sigma^2\sim\text{Uniform}(1,2)$.
    \item Interventions were applied to all nodes (one at a time) by setting their causal mechanisms to $\mathcal{N}(0,1)$.
\end{itemize}

Ablation datasets with $N>100$ nodes contained 100,000 points each (same as $N=100$).
We set random seeds for each dataset using the hash of the output filename.

\subsection{Related work and baselines}
\label{subsec:baselines}

We considered the following baselines.
All baselines were run using official implementations
published by the authors.

\textbf{\textsc{AVICI}} \citep{avici}
is the most similar method to this work, though there are significant differences in both the causal discovery strategy and the implementation.
Both works simulate diverse datasets for training, which differ in graph topology, causal mechanism, and type of exogenous noise; and both are attention-based architectures~\citep{attention}.
The primary difference is that \textsc{Avici} operates over raw data, while we operate over summary statistics.
While access to raw data may allow for richer modeling of relationships, it also increases the computational cost on large datasets.
In fact, \textsc{Avici}'s model complexity scales quadratically as the number of samples \emph{and} quadratically as the number of nodes. Our model does not explicitly depend on the number of data samples, but scales cubically as the number of nodes, as we attend over richer, pairwise features.

Both \textsc{Avici} and \ours{} are equivariant to the ordering of nodes.
\textsc{Avici} is additionally invariant to the ordering of samples. 
Due to the two-track design of our aggregator, we also include message passing operations between the two input types (global statistics, marginal features), while \textsc{Avici} directly stacks Transformer layers.
Finally, \textsc{Avici} explicitly regularizes for acyclicity.
We follow the ENCO~\citep{enco} formulation (edges $i\to j$ and $j\to i$ cannot co-exist), since it is easier to optimize.
Empirically, our predictions are still 99\% acyclic (Table~\ref{tab:acyclic}).

\textsc{Avici} was run on all test datasets using the authors' pretrained \texttt{scm-v0} model, recommended for ``arbitrary real-valued data.''
Note that this model is different from the models described in their paper (denoted \textsc{Avici-L} and \textsc{Avici-R}), as it was trained on \emph{all} of their synthetic data, including test sets.
We sampled 1000 observations per dataset uniformly at random, with their respective interventions (the maximum number of synthetic samples used in their original paper), except for Sachs, which used the entire dataset (as in their paper).
Though the authors provided separate weights for synthetic mRNA data, we were unable to use it since we did not preserve the raw gene counts in our simulated mRNA datasets.

\textbf{\textsc{Dcdi}} \citep{dcdi} was trained
on each of the $N=10, 20$ datasets using their published hyperparameters.
We denote the Gaussian and Deep Sigmoidal Flow versions as
\textsc{DCDI-G} and \textsc{DCDI-DSF} respectively.
\textsc{DCDI} could not scale to graphs with $N=100$
due to memory constraints (did not fit on a 32GB V100 GPU).

\textbf{\textsc{DCD-FG}} \citep{dcdfg} was trained
on all of the test datasets using their published hyperparameters.
We set the number of factors to $5, 10, 20$ for each of $N=10, 20, 100$, based on their ablation studies.
Due to numerical instability on $N=100$,
we clamped augmented Lagrangian multipliers
$\mu$ and $\gamma$ to 10
and stopped training if elements of the 
learned adjacency matrix reached \texttt{NaN} values.
After discussion with the authors, we also tried adjusting the $\mu$ multiplier from 2 to 1.1, but the model did not converge within 48 hours.

\textbf{\textsc{DECI}} \citep{deci} was trained on
all of the test datasets using their published hyperparameters.
However, on all $N=100$ cases, the model failed to produce any meaningful results (adjacency matrices nearly all remained 0s with AUCs of 0.5). Thus, we only report results on $N=10, 20$.

\textbf{\textsc{DiffAN}} \citep{diffan} was trained on
the each of the $N=10, 20$ datasets using their published hyperparameters.
The authors write that ``most hyperparameters are hard-coded into
[the] constructor of the \textsc{DiffAN} class and we verified they work across
a wide set of datasets.''
We used the original, non-approximation version of their algorithm
by maintaining \texttt{residue=True} in their codebase.
We were unable to consistently run \textsc{DiffAN}
with both R and GPU support within a Docker container,
and the authors did not respond to questions regarding reproducibility,
so all models were trained on the CPU only.
We observed approximately a 10x speedup in the $<5$ cases that
were able to complete running on the GPU.

\textbf{\textsc{InvCov}} computes inverse
covariance over 1000 examples. This does \emph{not} orient edges, but it is a strong connectivity baseline. We discretize based on ground truth positive rate.

\textbf{\textsc{Corr}} and \textbf{\textsc{D-Corr}} are computed similarly, using global correlation and distance correlation, respectively (See \ref{subsec:global-stat} for details).

\subsection{Metrics}
\label{subsec:metrics-full}

Throughout our evaluation, we compute metrics with respect to the ground truth graph.
This means that in the observational setting, the ``oracle'' value of each metric will vary depending on the size of the equivalence class (e.g. if multiple graphs are observationally equivalent, the expected SHD is $>0$; see Section~\ref{sec:identifiability} for more analysis).
While it is more correct to evaluate (observational) models with respect to the CPDAG that is implied by the predicted DAG, there were several reasons we also chose to evaluate with respect to the ground truth DAG.

\begin{itemize}
\item This practice is quite common among continuous causal discovery algorithms, regardless of whether interventional data are available~\citep{avici,dcdi,bacadi} or not~\citep{diffan}.
This may be primarily because their predicted graphs contain continuous probabilities, and metrics that reflect uncertainty (mAP, AUROC) are difficult to translate to the equivalence class.
\item It has been reported that common simulators produce ``identifiable'' linear Gaussian datasets, if the data are unstandardized (hence the VarSort baseline, \citet{varsort}). This would mean that the MEC is actually smaller than expected. While it is possible to normalize all the data, it is unknown whether there are additional artifacts of the data generating process that may influence the (empirical) identifiability of the graphs, and thus, the ``true'' size of the equivalence class.
\item Our models and baselines are evaluated on the same graphs, but not all can incorporate interventional information. For comparability, we evaluated all predictions against the same ground truth, while noting that the equivalence class is larger in the observational case.
\item \textsc{Fci}, \textsc{Gies}, and \textsc{VarSort} don't necessarily produce acyclic graphs, due to bootsrapping. In Appendix~\ref{sec:more-experiments}, we also include the symmetric summary statistics as baselines, to quantify the ``cost'' of not directing edges, and of not using marginal estimates.
In these cases, we cannot compute a corresponding CPDAG.
\item Practically, it was somewhat expensive to compute the CPDAG, especially for larger dense graphs ($N=100, E=400$ and beyond).
\end{itemize}

\subsection{Training and hardware details}
\label{subsec:pretraining}

Hyperparameters and architectural choices
were selected by training the model
on 20\% of the the training and validation data
for approximately 50k steps (several hours).
We considered the following parameters in sequence.
\begin{itemize}
    \item learned positional embedding vs.
    sinusoidal positional embedding
    \item number of layers $\times$ number of heads:
    $\{4, 8\} \times \{4, 8\}$
    \item learning rate $\eta = \{1e-4, 5e-5, 1e-5\}$
\end{itemize}

For our final model, we selected learned positional embeddings, 4 layers, 8 heads, and learning rate $\eta=1e-4$.

The models were trained across 2 NVIDIA RTX A6000 GPUs
and 60 CPU cores.
We used the GPU exclusively for running the aggregator, and retained all classical algorithm execution on the CPUs (during data loading).
The total pretraining time took approximately
14 hours for the final FCI model and 16 hours for the final GIES model.

For finetuning, we used rank $r=2$ adapters on the axial attention model's key, query, and feedforward weights~\citep{hu2022lora}.
We trained until convergence on the validation set (no improvement for 100 epochs), which took 4-6 hours with 40 CPUs and around 10 hours with 20 CPUs.
We used a single NVIDIA RTX A6000 GPU, but the bottleneck was CPU availability.

For the scope of this paper, our models and datasets
are fairly small.
We did not scale further due to hardware constraints.
Our primary bottlenecks to scaling up lay in availability of CPU cores and networking speed
across nodes, rather than GPU memory or utilization.
The optimal CPU:GPU ratio for \ours{} ranges from 20:1 to 40:1.

We are able to run inference comfortably over $N=500$ graphs with $T=500$ subsets of $k=5$ nodes each, on a single 32GB V100 GPU.
For runtime analysis, we used a batch size of 1, with 1 data worker per dataset.
Our runtime could be further improved if we amortized the GPU utilization across batches.

\subsection{Choice of classical causal discovery algorithm}
\label{subsec:classical-alg}

For training, we selected FCI~\citep{fci} as the underlying discovery algorithm
in the observational setting over GES~\citep{ges}, GRaSP~\citep{grasp}, and LiNGAM~\citep{lingam}
due to its speed and superior downstream performance.
We hypothesize this may be due to its richer
output (ancestral graph) providing
more signal to the Transformer model.
We also tried Causal Additive Models~\citep{cam}, but its runtime was too slow for consistent GPU utilization.
Observational algorithm implementations were provided by the causal-learn library~\citep{causallearn}.
The code for running these alternative classical algorithms is available in our codebase.

We selected GIES as the discovery algorithm
in the interventional setting because an efficient Python implementation was readily available at \url{https://github.com/juangamella/gies}.

We tried incorporating implementations from the Causal Discovery Toolbox via a Docker image~\citep{cdt}, but there was excessive overhead associated with calling an R subroutine and reading/writing the inputs/results from disk.

Finally, we considered other independence tests for richer characterization, such as kernel-based methods. However, due to speed, we chose to remain with the default Fisherz conditional independence test for FCI, and BIC for GIES~\citep{bic}.

\subsection{Sampling procedure}
\label{subsec:sampling}

\paragraph{Selection scores:}
We consider three strategies for computing
selection scores $\alpha$.
We include an empirical comparison of these strategies in Table~\ref{table:sampling}.
\begin{enumerate}
    \item Random selection: $\alpha$ is an $\vsq$ matrix of ones.
    \item Global-statistic-based selection: $\alpha = \rho$.
    \item Uncertainty-based selection: $\alpha = \hat H(E_t$),
    where $H$ denotes the information entropy
    \begin{equation}
        \alpha_{i,j} = -\sum_{e\in\{0,1,2\}} p(e) \log p(e).
    \end{equation}
\end{enumerate}

Let $c_{i,j}^t$ be the number of times edge $(i,j)$
was selected in $S_1\dots S_{t-1}$, and let
$\alpha^t = \alpha / \sqrt{c_{i,j}^t}$.
We consider two strategies for selecting $S_t$ based on $\alpha_t$.

\paragraph{Greedy selection:} 
Throughout our experiments, we used a greedy algorithm for subset selection.
We normalize probabilities to 1 before constructing each Categorical.
Initialize
\begin{equation}
S_t \leftarrow
    \{ i : i\sim\text{Categorical}(
    \alpha_1^t \dots \alpha_N^t ) \}.
\end{equation}
where $\alpha_i^t = \sum_{j\ne i \in V} \alpha_{i,j}^t$.
While $\lvert S_t \rvert < k$, update
\begin{equation}
S_t \leftarrow S_t \cup 
    \{ j : j \sim\text{Categorical}(
        \alpha_{1,S_t}^t \dots \alpha_{N,S_t}^t )
    )
\end{equation}
where
\begin{equation}
    \alpha_{j,S_t} = \begin{cases}
        \sum_{i\in S_t} \alpha_{i,j}^t & j \not\in S_t \\
        0 & \text{otherwise}.
    \end{cases}
\end{equation}

\paragraph{Subset selection:}
We also considered the following subset-level selection procedure, and observed minor performance gain for significantly longer runtime (linear program takes around 1 second per batch).
Therefore, we opted for the greedy method instead.

We solve the following integer linear program to select a subset $S_t$ of size $k$ that maximizes $\sum_{i\in S_t} \alpha_{i,j}^t$.
Let $\nu_{i}\in\{ 0, 1 \}$ denote the selection of node $i$, and
let $\epsilon_{i,j}\in\{0,1\}$ denote the selection of edge $(i,j)$.
Our objective is to
\begin{equation*}
\begin{array}{ll@{}ll}
\text{maximize}  & \sum_{i,j} a_{i,j}^t \cdot \epsilon_{i,j} & \\
\text{subject to}& \sum_i \nu_i = k & \text{ \small subset size}\\
                 & \epsilon_{i,j} \ge \nu_i + \nu_j - 1
                 & \text{ \small node-edge consistency}\\
                 & \epsilon_{i,j} \le \nu_i \\
                 & \epsilon_{i,j} \le \nu_j, \\
                 & \nu_{i}\in\{ 0, 1 \} \\
                 & \epsilon_{i,j}\in\{ 0, 1 \}
\end{array}
\end{equation*}
for $i,j \in V\times V$, $i\in V$.
$S_t$ is the set of non-zero indices in $\nu$.

\begin{table*}[ht]
\vskip -0.1in
\setlength\tabcolsep{1.5 pt}
\caption{Comparison between heuristics-based sampler (random and inverse covariance) vs. model confidence-based sampler.
The suffix \textsc{-L} indicates the greedy confidence-based sampler.
Each setting encompasses 5 distinct Erd\H{o}s-R\'enyi graphs.
The symbol $\dagger$ indicates that \ours{}
was not pretrained on this setting.
Bold indicates best of all models considered (including baselines not pictured).}
\label{table:sampling}
\begin{center}
\begin{small}
\begin{tabular}{ccl rrr rrr rrr rrr rrr}
\toprule
$N$ & $E$ & Model 
& \multicolumn{3}{c}{Linear} 
& \multicolumn{3}{c}{NN add.}
& \multicolumn{3}{c}{NN non-add.}
& \multicolumn{3}{c}{Sigmoid$^\dagger$}
& \multicolumn{3}{c}{Polynomial$^\dagger$} \\
\cmidrule(l{\tabcolsep}){4-6}
\cmidrule(l{\tabcolsep}){7-9}
\cmidrule(l{\tabcolsep}){10-12}
\cmidrule(l{\tabcolsep}){13-15}
\cmidrule(l{\tabcolsep}){16-18}
&&& \multicolumn{1}{c}{mAP$\uparrow$} & \multicolumn{1}{c}{OA$\uparrow$} & \multicolumn{1}{c}{shd$\downarrow$} &\multicolumn{1}{c}{mAP$\uparrow$} & \multicolumn{1}{c}{OA$\uparrow$} & \multicolumn{1}{c}{shd$\downarrow$} &\multicolumn{1}{c}{mAP$\uparrow$} & \multicolumn{1}{c}{OA$\uparrow$} & \multicolumn{1}{c}{shd$\downarrow$} &\multicolumn{1}{c}{mAP$\uparrow$} & \multicolumn{1}{c}{OA$\uparrow$} & \multicolumn{1}{c}{shd$\downarrow$} &\multicolumn{1}{c}{mAP$\uparrow$} & \multicolumn{1}{c}{OA$\uparrow$} & \multicolumn{1}{c}{shd$\downarrow$} \\
\midrule
\multirow{4}{*}{10} & \multirow{4}{*}{10} & \ours{}\textsc{-f} & $0.97$& $0.92$& $1.6$& $\textbf{0.95}$& $\textbf{0.92}$& $2.4$& $\textbf{0.92}$& $0.94$& $2.8$& $0.83$& $0.76$& $3.7$& $0.69$& $0.71$& $6.7$\\
&& \ours{}\textsc{-g} & $\textbf{0.99}$& $\textbf{0.94}$& $1.2$& $0.94$& $0.88$& $2.6$& $0.91$& $0.93$& $3.2$& $0.85$& $0.84$& $4.0$& $0.70$& $0.79$& $\textbf{5.8}$\\
\cmidrule(l{\tabcolsep}){3-18}
&& \ours{}\textsc{-f-l} & $0.97$& $0.93$& $\textbf{1.0}$& $0.95$& $0.87$& $2.4$& $\textbf{0.92}$& $\textbf{0.98}$& $3.4$& $0.84$& $0.77$& $3.9$& $\textbf{0.70}$& $0.79$& $\textbf{5.8}$\\
&& \ours{}\textsc{-g-l} & $0.98$& $0.93$& $1.4$& $0.94$& $0.91$& $2.8$& $0.91$& $0.94$& $4.0$& $\textbf{0.88}$& $\textbf{0.84}$& $\textbf{3.6}$& $0.70$& $\textbf{0.80}$& $\textbf{5.8}$\\
\midrule\midrule
\multirow{4}{*}{10} & \multirow{4}{*}{40} & \ours{}\textsc{-f} & $0.90$& $0.87$& $14.4$& $0.91$& $0.94$& $11.2$& $0.87$& $0.86$& $16.0$& $\textbf{0.81}$& $0.85$& $22.7$& $0.81$& $0.92$& $33.4$\\
&& \ours{}\textsc{-g} & $\textbf{0.94}$& $\textbf{0.91}$& $\textbf{12.8}$& $0.91$& $\textbf{0.95}$& $\textbf{10.4}$& $\textbf{0.89}$& $\textbf{0.89}$& $17.2$& $0.81$& $\textbf{0.87}$& $24.5$& $0.89$& $0.93$& $29.5$\\
\cmidrule(l{\tabcolsep}){3-18}
&& \ours{}\textsc{-f-l} & $0.91$& $0.90$& $15.6$& $\textbf{0.91}$& $0.92$& $15.8$& $0.88$& $0.86$& $14.2$& $0.81$& $0.84$& $23.2$& $0.82$& $0.93$& $33.8$\\
&& \ours{}\textsc{-g-l} & $0.93$& $0.91$& $13.4$& $0.91$& $0.93$& $\textbf{10.4}$& $0.88$& $0.85$& $16.2$& $0.79$& $0.83$& $25.5$& $\textbf{0.90}$& $\textbf{0.94}$& $28.3$\\
\midrule\midrule
\multirow{4}{*}{20} & \multirow{4}{*}{20} & \ours{}\textsc{-f} & $0.97$& $0.92$& $3.2$& $0.94$& $0.97$& $3.2$& $0.84$& $0.93$& $7.2$& $0.84$& $0.85$& $7.6$& $\textbf{0.71}$& $\textbf{0.80}$& $10.2$\\
&& \ours{}\textsc{-g} & $0.97$& $0.89$& $3.0$& $\textbf{0.94}$& $0.95$& $3.4$& $0.83$& $0.94$& $7.8$& $0.84$& $0.83$& $8.1$& $0.69$& $0.78$& $10.1$\\
\cmidrule(l{\tabcolsep}){3-18}
&& \ours{}\textsc{-f-l} & $0.97$& $\textbf{0.92}$& $2.8$& $0.93$& $0.95$& $3.8$& $\textbf{0.85}$& $0.94$& $6.8$& $\textbf{0.85}$& $0.85$& $\textbf{7.5}$& $0.67$& $0.78$& $\textbf{9.9}$\\
&& \ours{}\textsc{-g-l} & $\textbf{0.97}$& $0.90$& $\textbf{2.6}$& $\textbf{0.94}$& $\textbf{0.98}$& $3.4$& $0.83$& $\textbf{0.97}$& $7.0$& $0.84$& $0.84$& $7.9$& $0.67$& $0.79$& $10.6$\\
\midrule\midrule
\multirow{4}{*}{20} & \multirow{4}{*}{80} & \ours{}\textsc{-f} & $0.86$& $\textbf{0.93}$& $29.6$& $0.55$& $0.90$& $73.6$& $0.72$& $\textbf{0.93}$& $51.8$& $\textbf{0.77}$& $0.85$& $42.8$& $0.61$& $0.89$& $61.8$\\
&& \ours{}\textsc{-g} & $\textbf{0.89}$& $0.92$& $\textbf{26.8}$& $\textbf{0.58}$& $0.88$& $71.4$& $0.73$& $0.92$& $50.6$& $0.76$& $0.84$& $45.0$& $\textbf{0.65}$& $\textbf{0.89}$& $60.1$\\
\cmidrule(l{\tabcolsep}){3-18}
&& \ours{}\textsc{-f-l} & $0.86$& $0.92$& $32.0$& $0.55$& $\textbf{0.90}$& $74.0$& $0.74$& $0.93$& $49.2$& $0.76$& $\textbf{0.87}$& $\textbf{41.8}$& $0.59$& $0.88$& $62.3$\\
&& \ours{}\textsc{-g-l} & $\textbf{0.89}$& $0.92$& $28.4$& $0.58$& $0.89$& $71.6$& $0.75$& $0.92$& $49.4$& $0.75$& $0.85$& $45.7$& $\textbf{0.65}$& $0.88$& $60.6$\\
\bottomrule
\end{tabular}
\end{small}
\end{center}
\vskip -0.2in
\end{table*}

The final algorithm used the greedy selection strategy, with the first half of batches sampled according to global statistics, and the latter half sampled randomly, with visit counts shared.
This strategy was selected heuristically, and we did not observe significant improvements or drops in performance when switching to other strategies (e.g. all greedy statistics-based, greedy uncertainty-based, linear program uncertainty-based, etc.)

Table~\ref{table:sampling} compares the heuristics-based greedy sampler (inverse covariance + random) with the model uncertainty-based greedy sampler.
Runtimes are plotted in Figure~\ref{fig:runtime-l}.
The latter was run on CPU only, since it was non-trivial to access the GPU within a PyTorch data loader.
We ran a forward pass to obtain an updated selection score every 10 batches, so this accrued over 10 times the number of forward passes, all on CPU.
With proper engineering, this model-based sampler is expected to be much more efficient than reported.
Still, it is faster than nearly all baselines.

\begin{figure*}[hb]
    \centering
    \includegraphics[height=2in]{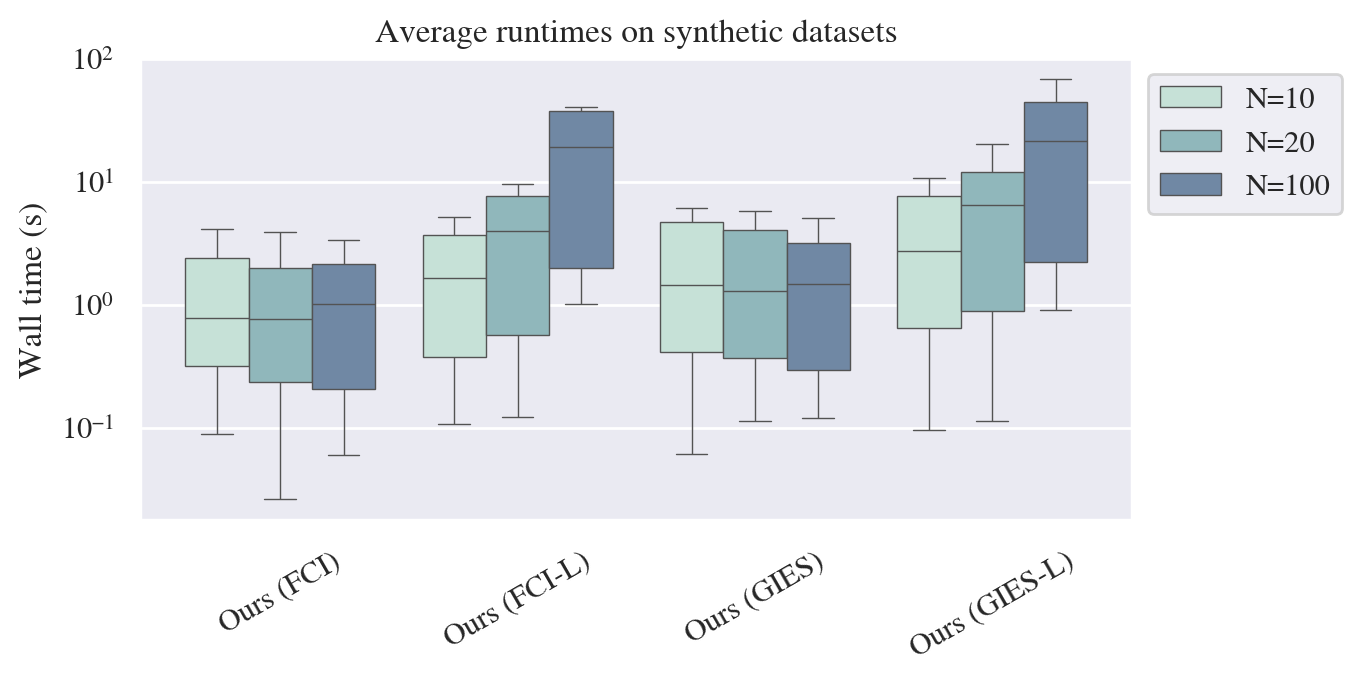}
    \caption{Runtime for heuristics-based greedy sampler vs. model uncertainty-based greedy sampler (suffix \textsc{-l}). For sampling, the model was run on CPU only, due to the difficulty of invoking GPU in the PyTorch data sampler.}
    \label{fig:runtime-l}
\end{figure*}



\section{Additional analyses}
\label{sec:more-experiments}

\subsection{Choice of global statistic}
\label{subsec:global-stat}

We selected inverse covariance as our global feature due to its ease of computation and its relationship to partial correlation.
For context, we also provide the performance analysis of several alternatives.
Tables \ref{table:synthetic-naiveC} and \ref{table:synthetic-naiveD} compare the results of different graph-level statistics on our synthetic datasets.
Discretization thresholds for SHD were obtained by computing the $p^\text{th}$ quantile of the computed values,
where $p = 1-(E/N)$.
This is not entirely fair, as no other baseline receives the same calibration, but these ablation studies only seek to compare state-of-the-art causal discovery methods with the ``best'' possible (oracle) statistical alternatives.

\textbf{\textsc{Corr}} refers to global correlation,
\newcommand{\expect}[1]{\mathbb{E}\left(#1\right)}
\begin{equation}
    \rho_{i,j}
    = \frac{\expect{X_i X_j} - \expect{X_i}\expect{X_j}}{\sqrt{\expect{X_i^2} - \expect{X_i}^2} \cdot \sqrt{\expect{X_j^2} - \expect{X_j}^2}}.
\end{equation}

\textbf{\textsc{D-Corr}} refers to distance correlation, computed between all pairs of variables.
Distance correlation captures both linear and non-linear dependencies, and $\textsc{D-Corr}(X_i, X_j) = 0$
if and only if $X_i \indep X_j$.
Please refer to~\cite{dcor} for the full derivation.
Despite its power to capture non-linear dependencies, we opted not to use \textsc{D-Corr} because it is quite slow to compute between all pairs of variables.

\textbf{\textsc{InvCov}} refers to inverse covariance, computed globally,
\begin{equation}
    \rho
    = \expect{(X - \expect{X}) (X - \expect{X})^T}^{-1}.
\end{equation}
For graphs $N < 100$, inverse covariance was computed directly using NumPy.
For graphs $N\ge100$, inverse covariance was computed using Ledoit-Wolf shrinkage at inference time~\cite{ledoit-wolf}.
Unfortunately we only realized this after training our models, so swapping to Ledoit-Wolf leads to some distribution shift (and drop in performance) on \ours{} results for large graphs.

\subsection{Visualization of denoising statistical features}

Figure~\ref{fig:gies-resolve} illustrates the input features and our predictions for a $N=10, E=10$ linear graph.
Compared to the inputs, our method is able to produce a much cleaner graph.
\textsc{Gies} may orient edges the wrong direction in some of the bootstrap samples, but \ours{} can generally identify the right direction.
This example also illustrates how \ours{} can triangulate between \textsc{Gies} and the global statistic, to avoid naive predictions of edges wherever the global statistic has a high value.
In the example marked in teal, though the global statistic has a moderate value, the edge is absent from a substantial number of estimates (if we consider both directions).
This reflects the theory in \ref{subsec:resolve-proof} that edges absent from some marginal estimate should be absent from the final skeleton.
\ours{} was able to (rather confidently) reject the edge in the final graph.

In contrast to \textsc{Gies}, \textsc{Fci} tends to identify much fewer edges (Figure~\ref{fig:fci-resolve}).
Note that this does not hurt performance on metrics, as continuous metrics consider ``sliding'' thresholds, while discrete metrics are computed with respect to the true edge rate (for the oracle \textsc{Fci} baseline).
However, it does highlight that \textsc{Fci} has a much lower signal-to-noise ratio when used alone.

\begin{figure*}[ht]
\centering
\includegraphics[width=0.7\linewidth]{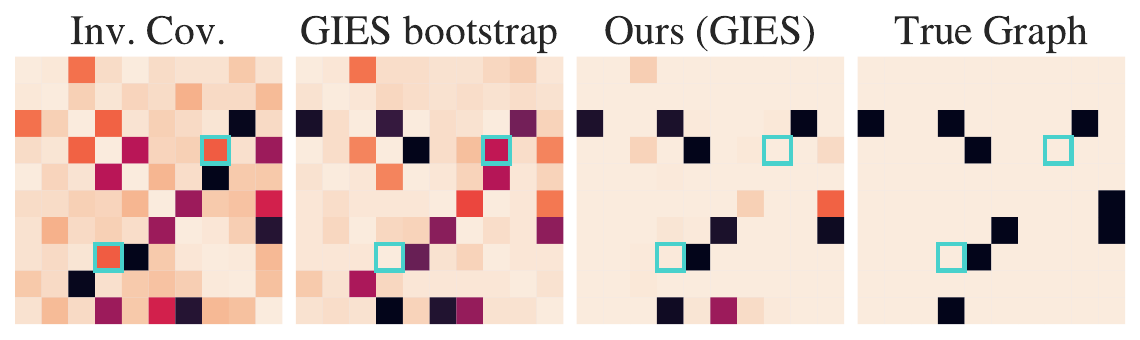}
\caption{Left to right: magnitude value of global features with diagonals zeroed; \textsc{Gies} run over all variables via non-parametric bootstrapping (frequency of edge); \ours{} predictions; and the ground truth.
\ours{} is able to denoise the input features much better than naive aggregation schemes.}
\label{fig:gies-resolve}
\end{figure*}

\begin{figure*}[ht]
\centering
\includegraphics[width=0.9\linewidth]{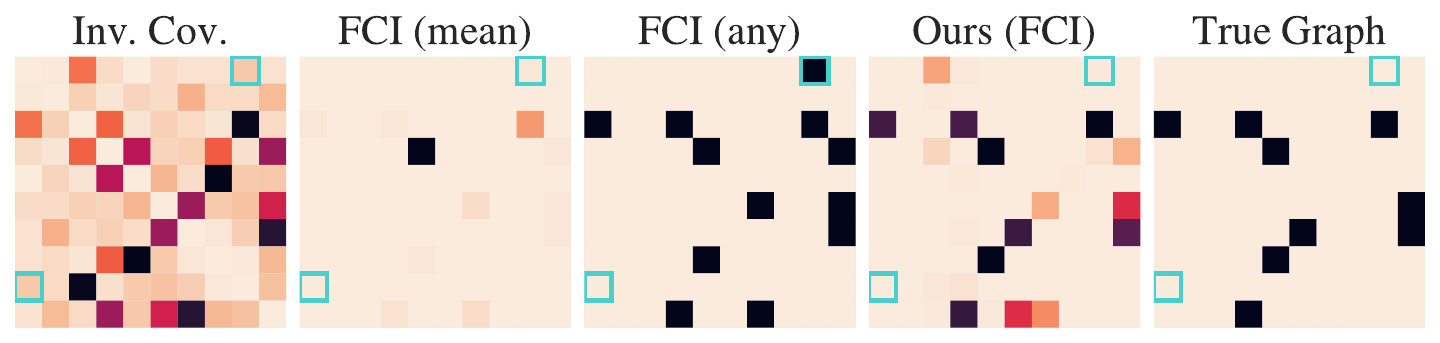}
\caption{Left to right: magnitude value of global features with diagonals zeroed; \textsc{Fci} run over all variables via non-parametric bootstrapping, aggregation via frequency of edge  vs. binarized (appear or not); \ours{} predictions; and the ground truth.
\textsc{Fci} produces sparser outputs than \textsc{Gies}, so \textsc{Fci} alone is less able to distinguish between noisy and genuine edges.}
\label{fig:fci-resolve}
\end{figure*}

\begin{table}[t]
\vskip -0.1in
\setlength\tabcolsep{3 pt}
\caption{Synthetic experiments, edge direction accuracy (higher is better).
All standard deviations were within $0.2$.
The symbol $\dagger$ indicates that \ours{}
was not pretrained on this setting.
}
\label{table:synthetic-edge}
\begin{center}
\begin{small}
\begin{tabular}{ccl r r r r r}
\toprule
$N$ & $E$ & Model 
& Linear & NN add & NN & Sig.$^\dagger$ & Poly.$^\dagger$ \\
\midrule
\multirow{10}{*}{10} & \multirow{10}{*}{10} &
\textsc{Dcdi-G} & $0.74$& $0.80$& $0.85$& $0.41$& $0.44$\\
&& \textsc{Dcdi-Dsf} & $0.79$& $0.62$& $0.68$& $0.38$& $0.39$\\
&& \textsc{Dcd-Fg} & $0.50$& $0.47$& $0.70$& $0.43$& $0.54$\\
&& \textsc{DiffAn} & $0.61$& $0.55$& $0.26$& $0.53$& $0.47$\\
&& \textsc{Deci} & $0.50$& $0.43$& $0.62$& $0.63$& $0.75$\\
&& \textsc{Avici} & $0.80$& $\textbf{0.92}$& $0.83$& $0.81$& $0.75$\\
\cmidrule(l{\tabcolsep}){3-8}
&& \textsc{Fci*} & $0.52$& $0.43$& $0.41$& $0.55$& $0.40$\\
&& \textsc{Gies*} & $0.76$& $0.49$& $0.69$& $0.67$& $0.63$\\
\cmidrule(l{\tabcolsep}){3-8}
&& \ours{} \textsc{(Fci)} & $0.92$& $\textbf{0.92}$& $\textbf{0.94}$& $0.76$& $0.71$\\
&& \ours{} \textsc{(Gies)} & $\textbf{0.94}$& $0.88$& $0.93$& $\textbf{0.84}$& $\textbf{0.79}$\\
\midrule\midrule
\multirow{10}{*}{20} & \multirow{10}{*}{80} &
\textsc{Dcdi-G} & $0.47$& $0.43$& $0.82$& $0.40$& $0.24$\\
&& \textsc{Dcdi-Dsf} & $0.50$& $0.49$& $0.78$& $0.41$& $0.28$\\
&& \textsc{Dcd-Fg} & $0.58$& $0.65$& $0.75$& $0.62$& $0.48$\\
&& \textsc{DiffAn} & $0.46$& $0.28$& $0.36$& $0.45$& $0.21$\\
&& \textsc{Deci} & $0.30$& $0.47$& $0.35$& $0.48$& $0.57$\\
&& \textsc{Avici} & $0.57$& $0.67$& $0.74$& $0.63$& $0.62$\\
\cmidrule(l{\tabcolsep}){3-8}
&& \textsc{Fci*} & $0.19$& $0.19$& $0.22$& $0.33$& $0.23$\\
&& \textsc{Gies*} & $0.56$& $0.73$& $0.59$& $0.62$& $0.61$\\
\cmidrule(l{\tabcolsep}){3-8}
&& \ours{} \textsc{(Fci)} & $\textbf{0.93}$& $\textbf{0.90}$& $\textbf{0.93}$& $\textbf{0.85}$& $\textbf{0.89}$\\
&& \ours{} \textsc{(Gies)} & $0.92$& $0.88$& $0.92$& $0.84$& $\textbf{0.89}$\\
\midrule\midrule
\multirow{4}{*}{100} & \multirow{4}{*}{400} &\textsc{Dcd-Fg} & $0.46$& $0.60$& $0.70$& $0.67$& $0.53$\\
&&\textsc{Avici} & $0.61$& $0.68$& $0.72$& $0.54$& $0.42$\\
\cmidrule(l{\tabcolsep}){3-8}
&& \ours{} \textsc{(Fci)} & $0.93$& $0.90$& $0.91$& $\textbf{0.87}$& $0.82$\\
&& \ours{} \textsc{(Gies)} & $\textbf{0.94}$& $\textbf{0.91}$& $\textbf{0.92}$& $\textbf{0.87}$& $\textbf{0.84}$\\
\bottomrule
\end{tabular}
\end{small}
\end{center}
\vskip -0.2in
\end{table}
\begin{table*}[ht]
\setlength\tabcolsep{2.5 pt}
\caption{Comparison of global statistics (continuous metrics). All standard deviations within 0.1.}
\label{table:synthetic-naiveC}
\begin{center}
\begin{small}
\begin{tabular}{ccl rr rr rr rr rr}
\toprule
$N$ & $E$ & Model 
& \multicolumn{2}{c}{Linear} 
& \multicolumn{2}{c}{NN add.}
& \multicolumn{2}{c}{NN non-add.}
& \multicolumn{2}{c}{Sigmoid}
& \multicolumn{2}{c}{Polynomial} \\
\cmidrule(l{\tabcolsep}){4-5}
\cmidrule(l{\tabcolsep}){6-7}
\cmidrule(l{\tabcolsep}){8-9}
\cmidrule(l{\tabcolsep}){10-11}
\cmidrule(l{\tabcolsep}){12-13}
&&& \multicolumn{1}{c}{mAP $\uparrow$} & \multicolumn{1}{c}{AUC $\uparrow$} & \multicolumn{1}{c}{mAP $\uparrow$} & \multicolumn{1}{c}{AUC $\uparrow$} & \multicolumn{1}{c}{mAP $\uparrow$} & \multicolumn{1}{c}{AUC $\uparrow$} & \multicolumn{1}{c}{mAP $\uparrow$} & \multicolumn{1}{c}{AUC $\uparrow$} & \multicolumn{1}{c}{mAP $\uparrow$} & \multicolumn{1}{c}{AUC $\uparrow$} \\
\midrule
\multirow{3}{*}{10} & \multirow{3}{*}{10} &
\textsc{Corr} & $0.45$& $0.87$& $0.41$& $0.86$& $0.41$& $0.85$& $0.46$& $0.86$& $0.45$& $0.85$\\
&& \textsc{D-Corr} & $0.42$& $0.86$& $0.41$& $0.87$& $0.40$& $0.87$& $0.43$& $0.86$& $0.45$& $0.89$\\
&& \textsc{InvCov} & $0.49$& $0.87$& $0.45$& $0.86$& $0.36$& $0.81$& $0.44$& $0.86$& $0.45$& $0.83$\\
\midrule
\multirow{3}{*}{10} & \multirow{3}{*}{40} &
\textsc{Corr} & $0.47$& $0.53$& $0.47$& $0.52$& $0.46$& $0.52$& $0.48$& $0.53$& $0.48$& $0.54$\\
&& \textsc{D-Corr} & $0.46$& $0.53$& $0.46$& $0.51$& $0.46$& $0.54$& $0.48$& $0.53$& $0.47$& $0.54$\\
&& \textsc{InvCov} & $0.50$& $0.57$& $0.48$& $0.52$& $0.47$& $0.53$& $0.47$& $0.50$& $0.48$& $0.52$\\
\midrule
\multirow{3}{*}{100} & \multirow{3}{*}{100} &
\textsc{Corr} & $0.42$& $0.99$& $0.25$& $0.94$& $0.25$& $0.93$& $0.42$& $0.98$& $0.35$& $0.91$\\
&& \textsc{D-Corr} & $0.41$& $0.99$& $0.25$& $0.96$& $0.26$& $0.96$& $0.41$& $0.98$& $0.37$& $0.94$\\
&& \textsc{InvCov} & $0.40$& $0.99$& $0.22$& $0.94$& $0.16$& $0.87$& $0.40$& $0.97$& $0.36$& $0.90$\\
\midrule
\multirow{3}{*}{100} & \multirow{3}{*}{400} &
\textsc{Corr} & $0.19$& $0.80$& $0.10$& $0.63$& $0.14$& $0.72$& $0.27$& $0.84$& $0.20$& $0.72$\\
&& \textsc{D-Corr} & $0.19$& $0.80$& $0.10$& $0.63$& $0.14$& $0.75$& $0.26$& $0.84$& $0.21$& $0.74$\\
&& \textsc{InvCov} & $0.25$& $0.91$& $0.09$& $0.62$& $0.14$& $0.77$& $0.27$& $0.86$& $0.20$& $0.67$\\
\bottomrule
\end{tabular}
\end{small}
\end{center}
\vskip -0.1in
\end{table*}
\begin{table*}[ht]
\setlength\tabcolsep{3 pt}
\caption{Comparison of global statistics (SHD).
Discretization thresholds for SHD were obtained by computing the $p^\text{th}$ quantile of the computed values,
where $p = 1-(E/N)$.}
\label{table:synthetic-naiveD}
\begin{center}
\begin{small}
\begin{tabular}{ccl r r r r r}
\toprule
$N$ & $E$ & Model 
& Linear & NN add. & NN non-add. & Sigmoid & Polynomial \\
\midrule
\multirow{3}{*}{10} & \multirow{3}{*}{10} &
\textsc{Corr} & $10.6 \scriptstyle \pm 2.8$& $10.2 \scriptstyle \pm 4.6$& $12.0 \scriptstyle \pm 1.9$& $11.1 \scriptstyle \pm 4.3$& $9.9 \scriptstyle \pm 2.8$\\
&& \textsc{D-Corr} & $10.4 \scriptstyle \pm 2.6$& $9.8 \scriptstyle \pm 4.7$& $12.2 \scriptstyle \pm 2.6$& $10.8 \scriptstyle \pm 3.3$& $10.2 \scriptstyle \pm 3.2$\\
&& \textsc{InvCov} & $11.0 \scriptstyle \pm 2.8$& $11.4 \scriptstyle \pm 5.5$& $13.6 \scriptstyle \pm 2.9$& $11.4 \scriptstyle \pm 4.1$& $10.9 \scriptstyle \pm 3.5$\\
\midrule
\multirow{3}{*}{10} & \multirow{3}{*}{40} &
\textsc{Corr} & $39.2 \scriptstyle \pm 2.4$& $38.0 \scriptstyle \pm 1.8$& $38.2 \scriptstyle \pm 0.7$& $38.8 \scriptstyle \pm 3.3$& $38.2 \scriptstyle \pm 2.0$\\
&& \textsc{D-Corr} & $38.8 \scriptstyle \pm 2.0$& $38.8 \scriptstyle \pm 1.5$& $37.0 \scriptstyle \pm 0.6$& $38.9 \scriptstyle \pm 3.2$& $38.0 \scriptstyle \pm 2.0$\\
&& \textsc{InvCov} & $35.8 \scriptstyle \pm 2.3$& $39.2 \scriptstyle \pm 1.5$& $37.6 \scriptstyle \pm 2.7$& $40.7 \scriptstyle \pm 2.2$& $38.4 \scriptstyle \pm 1.2$\\
\midrule
\multirow{3}{*}{100} & \multirow{3}{*}{100} &
\textsc{Corr} & $113.0 \scriptstyle \pm 4.9$& $132.2 \scriptstyle \pm 18.0$& $144.6 \scriptstyle \pm 5.2$& $106.5 \scriptstyle \pm 11.5$& $110.3 \scriptstyle \pm 6.1$\\
&& \textsc{D-Corr} & $113.8 \scriptstyle \pm 5.3$& $133.2 \scriptstyle \pm 17.9$& $144.2 \scriptstyle \pm 6.7$& $108.5 \scriptstyle \pm 11.9$& $109.5 \scriptstyle \pm 5.7$\\
&& \textsc{InvCov} & $124.4 \scriptstyle \pm 8.1$& $130.0 \scriptstyle \pm 17.2$& $158.8 \scriptstyle \pm 6.2$& $112.3 \scriptstyle \pm 14.8$& $106.3 \scriptstyle \pm 4.6$\\
\midrule
\multirow{3}{*}{100} & \multirow{3}{*}{400} &
\textsc{Corr} & $580.4 \scriptstyle \pm 24.5$& $666.0 \scriptstyle \pm 13.5$& $626.2 \scriptstyle \pm 23.4$& $516.5 \scriptstyle \pm 18.5$& $562.5 \scriptstyle \pm 20.1$\\
&& \textsc{D-Corr} & $578.2 \scriptstyle \pm 24.7$& $665.4 \scriptstyle \pm 15.4$& $626.6 \scriptstyle \pm 21.9$& $522.3 \scriptstyle \pm 17.6$& $557.2 \scriptstyle \pm 20.4$\\
&& \textsc{InvCov} & $557.0 \scriptstyle \pm 11.7$& $667.8 \scriptstyle \pm 15.4$& $639.0 \scriptstyle \pm 9.7$& $514.7 \scriptstyle \pm 23.1$& $539.4 \scriptstyle \pm 18.4$\\
\bottomrule
\end{tabular}
\end{small}
\end{center}
\vskip -0.1in
\end{table*}

\subsection{Results on simulated mRNA data}
\label{subsec:sergio}

\begin{table*}[ht]
\setlength\tabcolsep{6 pt}
\caption{Causal discovery results on simulated mRNA data. Each setting encompasses 5 distinct scale-free graphs. Data were generated via SERGIO~\cite{sergio}.}
\label{table:sergio}
\vskip 0.15in
\begin{center}
\begin{small}
\begin{tabular}{lrr rrrr rrrr}
\toprule
$N$ & $E$ & Model & mAP $\uparrow$ & AUC $\uparrow$ & SHD $\downarrow$ & OA $\uparrow$ \\
\midrule

\multirow{5}{*}{10} & \multirow{5}{*}{10} & \textsc{Dcdi-G} & $0.48 \scriptstyle \pm 0.1$& $0.73 \scriptstyle \pm 0.1$& $16.1 \scriptstyle \pm 3.3$& $0.59 \scriptstyle \pm 0.2$\\
&& \textsc{Dcdi-Dsf} & $0.63 \scriptstyle \pm 0.1$& $0.84 \scriptstyle \pm 0.1$& $18.5 \scriptstyle \pm 2.7$& $0.79 \scriptstyle \pm 0.2$\\
&& \textsc{Dcd-Fg} & $0.59 \scriptstyle \pm 0.2$& $0.82 \scriptstyle \pm 0.1$& $81.0 \scriptstyle \pm 0.0$& $0.79 \scriptstyle \pm 0.2$\\
&& \textsc{Avici} & $0.58 \scriptstyle \pm 0.2$& $0.85 \scriptstyle \pm 0.1$& $6.4 \scriptstyle \pm 4.7$& $0.72 \scriptstyle \pm 0.2$\\
\cmidrule(l{\tabcolsep}){3-7}
&& \ours{} \textsc{(Fci)} & $\textbf{0.92} \scriptstyle \pm 0.1$& $\textbf{0.98} \scriptstyle \pm 0.0$& $\textbf{1.9} \scriptstyle \pm 2.0$& $\textbf{0.92} \scriptstyle \pm 0.1$\\
\midrule\midrule

\multirow{5}{*}{10} & \multirow{5}{*}{20} & \textsc{Dcdi-G} & $0.32 \scriptstyle \pm 0.1$& $0.57 \scriptstyle \pm 0.1$& $26.2 \scriptstyle \pm 1.3$& $0.47 \scriptstyle \pm 0.2$\\
&& \textsc{Dcdi-Dsf} & $0.44 \scriptstyle \pm 0.1$& $0.64 \scriptstyle \pm 0.1$& $25.7 \scriptstyle \pm 1.3$& $0.63 \scriptstyle \pm 0.1$\\
&& \textsc{Dcd-Fg} & $0.43 \scriptstyle \pm 0.1$& $0.69 \scriptstyle \pm 0.1$& $73.0 \scriptstyle \pm 0.0$& $0.67 \scriptstyle \pm 0.2$\\
&& \textsc{Avici} & $0.22 \scriptstyle \pm 0.1$& $0.44 \scriptstyle \pm 0.2$& $16.8 \scriptstyle \pm 1.5$& $0.27 \scriptstyle \pm 0.3$\\
\cmidrule(l{\tabcolsep}){3-7}
&& \ours{} \textsc{(Fci)} & $\textbf{0.76} \scriptstyle \pm 0.1$& $\textbf{0.90} \scriptstyle \pm 0.1$& $\textbf{8.8} \scriptstyle \pm 1.5$& $\textbf{0.85} \scriptstyle \pm 0.1$\\
\midrule\midrule

\multirow{5}{*}{20} & \multirow{5}{*}{20} & \textsc{Dcdi-G} & $0.48 \scriptstyle \pm 0.1$& $0.86 \scriptstyle \pm 0.1$& $37.3 \scriptstyle \pm 2.8$& $0.65 \scriptstyle \pm 0.1$\\
&& \textsc{Dcdi-Dsf} & $0.45 \scriptstyle \pm 0.1$& $0.92 \scriptstyle \pm 0.0$& $51.9 \scriptstyle \pm 15.8$& $0.81 \scriptstyle \pm 0.1$\\
&& \textsc{Dcd-Fg} & $0.34 \scriptstyle \pm 0.2$& $0.87 \scriptstyle \pm 0.0$& $361 \scriptstyle \pm 0$& $0.66 \scriptstyle \pm 0.2$\\
&& \textsc{Avici} & $0.32 \scriptstyle \pm 0.2$& $0.78 \scriptstyle \pm 0.1$& $18.7 \scriptstyle \pm 4.9$& $0.66 \scriptstyle \pm 0.2$\\
\cmidrule(l{\tabcolsep}){3-7}
&& \ours{} \textsc{(Fci)} & $\textbf{0.54} \scriptstyle \pm 0.2$& $\textbf{0.94} \scriptstyle \pm 0.0$& $\textbf{16.6} \scriptstyle \pm 3.3$& $\textbf{0.83} \scriptstyle \pm 0.1$\\
\midrule\midrule

\multirow{5}{*}{20} & \multirow{5}{*}{40} & \textsc{Dcdi-G} & $0.31 \scriptstyle \pm 0.1$& $0.65 \scriptstyle \pm 0.1$& $54.7 \scriptstyle \pm 2.7$& $0.49 \scriptstyle \pm 0.1$\\
&& \textsc{Dcdi-Dsf} & $0.40 \scriptstyle \pm 0.1$& $0.71 \scriptstyle \pm 0.1$& $54.6 \scriptstyle \pm 4.4$& $0.63 \scriptstyle \pm 0.1$\\
&& \textsc{Dcd-Fg} & $0.36 \scriptstyle \pm 0.1$& $0.77 \scriptstyle \pm 0.1$& $343 \scriptstyle \pm 0$& $0.67 \scriptstyle \pm 0.1$\\
&& \textsc{Avici} & $0.17 \scriptstyle \pm 0.1$& $0.54 \scriptstyle \pm 0.1$& $37.1 \scriptstyle \pm 1.9$& $0.46 \scriptstyle \pm 0.1$\\
\cmidrule(l{\tabcolsep}){3-7}
&& \ours{} \textsc{(Fci)} & $\textbf{0.50} \scriptstyle \pm 0.1$& $\textbf{0.85} \scriptstyle \pm 0.1$& $\textbf{31.4} \scriptstyle \pm 4.9$& $\textbf{0.78} \scriptstyle \pm 0.1$\\

\bottomrule
\end{tabular}
\end{small}
\end{center}
\vskip -0.1in
\end{table*}

We generated mRNA data using the SERGIO simulator~\cite{sergio}.
We sampled datasets with the Hill coefficient set to $\{0.25, 0.5, 1, 2, 4 \}$ for training, and 2 for testing (2 was default).
We set the decay rate to the default 0.8, and the noise parameter to the default of 1.0.
We sampled 400 graphs for each of $N=\{10, 20\}$ and $E=\{N, 2N\}$.

These data distributions are quite different from typical synthetic datasets, as they simulate steady-state measurements and the data are lower bounded at 0 (gene counts).
Thus, we trained a separate model on these data using the \ours{} \textsc{(Fci)} architecture. Table~\ref{table:sergio} shows that \ours{} performs best across the board.

\subsection{Results and ablation studies on synthetic data}
\label{subsec:additional-results}

\begin{table*}[t]
\setlength\tabcolsep{3.5 pt}
\caption{
SHD on synthetic graphs, observational setting, between predicted vs. true DAG and inferred vs. true CPDAG.
Mean/std over 5 distinct Erd\H{o}s-R\'enyi graphs.
$\dagger$ indicates o.o.d. setting.
$*$ indicates non-parametric bootstrapping.
DAG results from Table~\ref{table:synthetic}.
}
\vspace{-0.15in}
\label{table:cpdag}
\begin{center}
\begin{small}
\begin{tabular}{ccl rr rr rr rr rr}
\toprule
$N$ & $E$ & Model 
& \multicolumn{2}{c}{Linear} 
& \multicolumn{2}{c}{NN add.}
& \multicolumn{2}{c}{Sigmoid$^\dagger$}
& \multicolumn{2}{c}{Polynomial$^\dagger$} 
\\
\cmidrule(l{\tabcolsep}){4-5}
\cmidrule(l{\tabcolsep}){6-7}
\cmidrule(l{\tabcolsep}){8-9}
\cmidrule(l{\tabcolsep}){10-11}
&&& \multicolumn{1}{c}{DAG $\downarrow$} & \multicolumn{1}{c}{CPDAG $\downarrow$} & \multicolumn{1}{c}{DAG $\downarrow$} & \multicolumn{1}{c}{CPDAG $\downarrow$} & \multicolumn{1}{c}{DAG $\downarrow$} & \multicolumn{1}{c}{CPDAG $\downarrow$} & \multicolumn{1}{c}{DAG $\downarrow$} & \multicolumn{1}{c}{CPDAG $\downarrow$}\\
\midrule

\multirow{4}{*}{10} & \multirow{4}{*}{10} & \textsc{DiffAn} & $14.0 \scriptstyle \pm .4$& $16.4 \scriptstyle \pm .4$& $13.6 \scriptstyle \pm 2.9$& $13.6 \scriptstyle \pm 3.4$& $12.0 \scriptstyle \pm .0$& $12.9 \scriptstyle \pm .3$& $15.0 \scriptstyle \pm .1$& $16.3 \scriptstyle \pm .6$\\
&& \textsc{VarSort*} & $6.0 \scriptstyle \pm .4$& $7.0 \scriptstyle \pm .3$& $4.0 \scriptstyle \pm .8$& $4.4 \scriptstyle \pm .2$& $7.6 \scriptstyle \pm .5$& $8.0 \scriptstyle \pm .1$& $9.3 \scriptstyle \pm .5$& $9.5 \scriptstyle \pm .8$\\
&& \textsc{Fci*} & $10.0 \scriptstyle \pm .3$& $11.0 \scriptstyle \pm .8$& $8.2 \scriptstyle \pm .8$& $8.4 \scriptstyle \pm .8$& $9.1 \scriptstyle \pm .5$& $10.3 \scriptstyle \pm .3$& $10.0 \scriptstyle \pm .3$& $10.1 \scriptstyle \pm .2$\\
&& \ours{} \textsc{(Fci)} & $\textbf{1.0} \scriptstyle \pm .1$& $\textbf{1.4} \scriptstyle \pm .7$& $\textbf{3.0} \scriptstyle \pm .2$& $\textbf{2.6} \scriptstyle \pm .3$& $\textbf{3.9} \scriptstyle \pm .9$& $\textbf{3.5} \scriptstyle \pm .3$& $\textbf{6.1} \scriptstyle \pm .7$& $\textbf{6.4} \scriptstyle \pm .1$\\
\midrule
\multirow{4}{*}{20} & \multirow{4}{*}{20} & \textsc{DiffAn} & $40.2 \scriptstyle \pm 4.4$& $41.8 \scriptstyle \pm 5.7$& $38.6 \scriptstyle \pm 3.1$& $39.8 \scriptstyle \pm 5.2$& $19.2 \scriptstyle \pm .6$& $21.6 \scriptstyle \pm .2$& $49.7 \scriptstyle \pm 4.6$& $52.7 \scriptstyle \pm 5.0$\\
&& \textsc{VarSort*} & $10.0 \scriptstyle \pm .4$& $11.0 \scriptstyle \pm .0$& $6.6 \scriptstyle \pm .7$& $7.2 \scriptstyle \pm .5$& $16.1 \scriptstyle \pm .7$& $16.8 \scriptstyle \pm .2$& $17.1 \scriptstyle \pm .1$& $17.7 \scriptstyle \pm .0$\\
&& \textsc{Fci*} & $19.0 \scriptstyle \pm .3$& $21.4 \scriptstyle \pm .3$& $17.4 \scriptstyle \pm .2$& $18.0 \scriptstyle \pm .6$& $18.5 \scriptstyle \pm .5$& $20.6 \scriptstyle \pm .4$& $18.9 \scriptstyle \pm .3$& $19.1 \scriptstyle \pm .2$\\
&& \ours{} \textsc{(Fci)} & $\textbf{3.2} \scriptstyle \pm .6$& $\textbf{2.6} \scriptstyle \pm .1$& $\textbf{5.0} \scriptstyle \pm .8$& $\textbf{4.8} \scriptstyle \pm .7$& $\textbf{6.7} \scriptstyle \pm .1$& $\textbf{6.2} \scriptstyle \pm .1$& $\textbf{9.8} \scriptstyle \pm .2$& $\textbf{11.0} \scriptstyle \pm .2$\\
\bottomrule
\end{tabular}
\end{small}
\end{center}
\vskip -0.2in
\end{table*}

\begin{table*}[t]
\setlength\tabcolsep{3.5 pt}
\caption{
mAP on synthetic graphs, between predicted vs. true DAG and undirected skeletons $E^*$.
Mean/std over 5 distinct Erd\H{o}s-R\'enyi graphs.
$\dagger$ indicates o.o.d. setting.
$*$ indicates non-parametric bootstrapping.
DAG results from Table~\ref{table:synthetic}.
}
\vspace{-0.15in}
\label{table:skeleton}
\begin{center}
\begin{small}
\begin{tabular}{ccl rr rr rr rr rr}
\toprule
$N$ & $E$ & Model 
& \multicolumn{2}{c}{Linear} 
& \multicolumn{2}{c}{NN add.}
& \multicolumn{2}{c}{Sigmoid$^\dagger$}
& \multicolumn{2}{c}{Polynomial$^\dagger$} 
\\
\cmidrule(l{\tabcolsep}){4-5}
\cmidrule(l{\tabcolsep}){6-7}
\cmidrule(l{\tabcolsep}){8-9}
\cmidrule(l{\tabcolsep}){10-11}
&&& \multicolumn{1}{c}{DAG $\uparrow$} & \multicolumn{1}{c}{$E^*$ $\uparrow$} & \multicolumn{1}{c}{DAG $\uparrow$} & \multicolumn{1}{c}{$E^*$ $\uparrow$} & \multicolumn{1}{c}{DAG $\uparrow$} & \multicolumn{1}{c}{$E^*$ $\uparrow$} & \multicolumn{1}{c}{DAG $\uparrow$} & \multicolumn{1}{c}{$E^*$ $\uparrow$}\\
\midrule

\multirow{9}{*}{10} & \multirow{9}{*}{10} & \textsc{Dcdi-G} & $0.74 \scriptstyle \pm .16$& $0.88 \scriptstyle \pm .10$& $0.79 \scriptstyle \pm .12$& $0.85 \scriptstyle \pm .12$& $0.46 \scriptstyle \pm .24$& $0.64 \scriptstyle \pm .20$& $0.41 \scriptstyle \pm .13$& $0.58 \scriptstyle \pm .15$\\
&& \textsc{Dcdi-Dsf} & $0.82 \scriptstyle \pm .20$& $0.93 \scriptstyle \pm .09$& $0.57 \scriptstyle \pm .24$& $0.88 \scriptstyle \pm .11$& $0.38 \scriptstyle \pm .21$& $0.63 \scriptstyle \pm .19$& $0.29 \scriptstyle \pm .13$& $0.61 \scriptstyle \pm .20$\\
&& \textsc{DiffAn} & $0.25 \scriptstyle \pm .06$& $0.54 \scriptstyle \pm .09$& $0.32 \scriptstyle \pm .16$& $0.62 \scriptstyle \pm .19$& $0.24 \scriptstyle \pm .10$& $0.63 \scriptstyle \pm .12$& $0.20 \scriptstyle \pm .08$& $0.50 \scriptstyle \pm .07$\\
&& \textsc{Avici} & $0.45 \scriptstyle \pm .14$& $0.93 \scriptstyle \pm .06$& $0.81 \scriptstyle \pm .15$& $\textbf{0.98} \scriptstyle \pm .03$& $0.52 \scriptstyle \pm .16$& $0.86 \scriptstyle \pm .13$& $0.31 \scriptstyle \pm .06$& $0.75 \scriptstyle \pm .13$\\
\cmidrule(l{\tabcolsep}){3-11}
&& \textsc{VarSort*} & $0.70 \scriptstyle \pm .13$& $0.88 \scriptstyle \pm .07$& $0.76 \scriptstyle \pm .13$& $0.90 \scriptstyle \pm .09$& $0.52 \scriptstyle \pm .24$& $0.88 \scriptstyle \pm .10$& $0.40 \scriptstyle \pm .14$& $0.80 \scriptstyle \pm .09$\\
&& \textsc{Fci*} & $0.52 \scriptstyle \pm .11$& $0.70 \scriptstyle \pm .20$& $0.38 \scriptstyle \pm .20$& $0.69 \scriptstyle \pm .22$& $0.56 \scriptstyle \pm .16$& $0.75 \scriptstyle \pm .18$& $0.41 \scriptstyle \pm .13$& $0.61 \scriptstyle \pm .14$\\
&& \textsc{Gies*} & $0.81 \scriptstyle \pm .12$& $\textbf{1.0} \scriptstyle \pm .00$& $0.61 \scriptstyle \pm .16$& $0.97 \scriptstyle \pm .06$& $0.70 \scriptstyle \pm .14$& $0.98 \scriptstyle \pm .03$& $0.61 \scriptstyle \pm .10$& $\textbf{0.89} \scriptstyle \pm .06$\\
\cmidrule(l{\tabcolsep}){3-11}
&& \ours{} \textsc{(Fci)} & $0.98 \scriptstyle \pm .02$& $\textbf{1.0} \scriptstyle \pm .0$& $0.88 \scriptstyle \pm .09$& $0.96 \scriptstyle \pm .05$& $0.83 \scriptstyle \pm .18$& $\textbf{0.99} \scriptstyle \pm .02$& $0.62 \scriptstyle \pm .09$& $0.87 \scriptstyle \pm .06$\\
&& \ours{} \textsc{(Gies)} & $\textbf{0.99} \scriptstyle \pm .01$& $\textbf{1.0} \scriptstyle \pm .0$& $\textbf{0.94} \scriptstyle \pm .06$& $0.97 \scriptstyle \pm .05$& $\textbf{0.85} \scriptstyle \pm .12$& $0.98 \scriptstyle \pm .03$& $\textbf{0.70} \scriptstyle \pm .11$& $0.86 \scriptstyle \pm .05$\\
\midrule\midrule
\multirow{9}{*}{20} & \multirow{9}{*}{20} & \textsc{Dcdi-G} & $0.59 \scriptstyle \pm .12$& $0.89 \scriptstyle \pm .04$& $0.78 \scriptstyle \pm .07$& $0.93 \scriptstyle \pm .07$& $0.36 \scriptstyle \pm .06$& $0.84 \scriptstyle \pm .05$& $0.42 \scriptstyle \pm .08$& $0.58 \scriptstyle \pm .10$\\
&& \textsc{Dcdi-Dsf} & $0.66 \scriptstyle \pm .16$& $0.91 \scriptstyle \pm .04$& $0.69 \scriptstyle \pm .18$& $0.93 \scriptstyle \pm .10$& $0.37 \scriptstyle \pm .04$& $0.86 \scriptstyle \pm .05$& $0.26 \scriptstyle \pm .08$& $0.55 \scriptstyle \pm .15$\\
&& \textsc{DiffAn} & $0.19 \scriptstyle \pm .09$& $0.47 \scriptstyle \pm .16$& $0.16 \scriptstyle \pm .10$& $0.44 \scriptstyle \pm .15$& $0.29 \scriptstyle \pm .11$& $0.67 \scriptstyle \pm .12$& $0.09 \scriptstyle \pm .03$& $0.32 \scriptstyle \pm .07$\\
&& \textsc{Avici} & $0.48 \scriptstyle \pm .17$& $0.84 \scriptstyle \pm .07$& $0.59 \scriptstyle \pm .09$& $0.85 \scriptstyle \pm .07$& $0.42 \scriptstyle \pm .13$& $0.78 \scriptstyle \pm .10$& $0.24 \scriptstyle \pm .08$& $0.62 \scriptstyle \pm .08$\\
\cmidrule(l{\tabcolsep}){3-11}
&& \textsc{VarSort*} & $0.81 \scriptstyle \pm .08$& $0.93 \scriptstyle \pm .04$& $0.81 \scriptstyle \pm .15$& $0.88 \scriptstyle \pm .14$& $0.50 \scriptstyle \pm .13$& $0.88 \scriptstyle \pm .09$& $0.33 \scriptstyle \pm .13$& $0.76 \scriptstyle \pm .11$\\
&& \textsc{Fci*} & $0.66 \scriptstyle \pm .07$& $0.79 \scriptstyle \pm .07$& $0.42 \scriptstyle \pm .19$& $0.54 \scriptstyle \pm .22$& $0.56 \scriptstyle \pm .08$& $0.67 \scriptstyle \pm .07$& $0.41 \scriptstyle \pm .14$& $0.61 \scriptstyle \pm .13$\\
&& \textsc{Gies*} & $0.84 \scriptstyle \pm .08$& $\textbf{0.99} \scriptstyle \pm .00$& $0.79 \scriptstyle \pm .07$& $0.94 \scriptstyle \pm .04$& $0.71 \scriptstyle \pm .10$& $0.95 \scriptstyle \pm .03$& $0.62 \scriptstyle \pm .09$& $\textbf{0.85} \scriptstyle \pm .09$\\
\cmidrule(l{\tabcolsep}){3-11}
&& \ours{} \textsc{(Fci)} & $\textbf{0.96} \scriptstyle \pm .03$& $\textbf{1.0} \scriptstyle \pm .00$& $0.91 \scriptstyle \pm .04$& $0.95 \scriptstyle \pm .03$& $\textbf{0.85} \scriptstyle \pm .09$& $\textbf{0.97} \scriptstyle \pm .02$& $\textbf{0.69} \scriptstyle \pm .09$& $\textbf{0.86} \scriptstyle \pm .09$\\
&& \ours{} \textsc{(Gies)} & $\textbf{0.97} \scriptstyle \pm .02$& $\textbf{1.0} \scriptstyle \pm .00$& $\textbf{0.94} \scriptstyle \pm .03$& $\textbf{0.96} \scriptstyle \pm .03$& $\textbf{0.84} \scriptstyle \pm .07$& $0.95 \scriptstyle \pm .03$& $\textbf{0.69} \scriptstyle \pm .12$& $0.82 \scriptstyle \pm .11$\\

\bottomrule
\end{tabular}
\end{small}
\end{center}
\vskip -0.2in
\end{table*}
\begin{table*}[ht]
\setlength\tabcolsep{3 pt}
\caption{Scaling to synthetic graphs, larger than those seen in training.
Each setting encompasses 5 distinct Erd\H{o}s-R\'enyi graphs.
All \ours{} runs in this table used $T=500$ subsets of nodes, with $b=500$ examples per batch.
For \textsc{Avici}, we took $M=2000$ samples per dataset (higher than maximum analyzed in their paper), since it performed better than $M=1000$.
Here, the mean AUC values are artificially high due to the high negative rates, as actual edges scale linearly as $N$, while the number of possible edges scales quadratically.}
\label{table:graph-size-scaling}
\begin{center}
\begin{small}

\end{small}
\end{center}
\vskip -0.1in
\end{table*}

For completeness, we include additional results and analysis on the synthetic datasets.
Table~\ref{table:cpdag} contains our evaluations in the observational setting, with metrics computed over the inferred CPDAG, rather than the predicted DAG.
We find that the SHD varies slightly when evaluating DAGS or converted CPDAGs, but the difference is minimal, compared to the overall gaps in performance between methods.

Tables~\ref{table:synthetic-ER-C} and \ref{table:synthetic-ER-D} compare all baselines across all metrics and graph sizes on Erd\H{o}s-R\'enyi graphs.
Tables~\ref{table:synthetic-sf} and \ref{table:synthetic-sf-D} include the same evaluation on scale-free graphs.
Tables~\ref{table:synthetic-100-C} and \ref{table:synthetic-100-D} assess $N=100$ graphs.

Table~\ref{table:synthetic-ablation} ablates the contribution of the global and marginal features by setting their hidden representations to zero.
Note that our model has never seen this type of input during training, so drops in performance may be conflated with input distributional shift.
Overall, removing the joint statistics ($h^\rho\leftarrow 0$) leads to a higher performance drop than removing the marginal estimates ($h^E\leftarrow 0$). However, the gap between these ablation studies and our final performance may be quite large in some cases, so both inputs are important to the prediction.

Table~\ref{tab:acyclic} shows that despite omitting the DAG constraint, we find that our predicted graphs (test split) are nearly all acyclic, with a naive discretization threshold of 0.5.
Unlike \cite{enco}, which also omits the acyclicity constraint during training but optionally enforces it at inference time, we do not require any post-processing to achieve high performance.
Empirically, we found existing DAG constraints to be unstable (Lagrangian) and slow to optimize~\citep{notears, dcdi}. DAG behavior would not emerge until late in training, when the regularization term is of 1e-8 scale or smaller.

Alternatively, we could quantify the raw information content provided by these two features through the \textsc{InvCov}, \textsc{Fci*}, and \textsc{Gies*} baselines (Tables~\ref{table:synthetic-ER-C}, \ref{table:synthetic-ER-D}, \ref{table:synthetic-sf}, \ref{table:synthetic-sf-D}).
Overall, \textsc{InvCov} and \textsc{Fci*} are comparable to worse-performing baselines. \textsc{Gies*} performs very well, sometimes approaching the strongest baselines. However, there remains a large gap in performance between these ablations and our method, highlighting the value of learning non-linear transformations of these inputs.

Table~\ref{table:graph-size-scaling} and Figure~\ref{fig:graph-size} show that the current implementations of \ours{} can generalize to graphs up to $4\times$ larger than those seen during training.
During training, we did not initially anticipate testing on much larger graphs. As a result, there are two minor issues with the current implementation with respect to scaling.
First, we set an insufficient maximum subset positional embedding size of 500, so it was impossible to encode more subsets.
Second, we did not sample random starting subset indices to ensure that higher-order embeddings are updated equally. Since we never sampled up to 500 subsets during training, these higher-order embeddings were essentially random.
We anticipate that increasing the limit on the number of subsets and ensuring that all embeddings are sufficiently learned will improve the generalization capacity on larger graphs.
Nonetheless, our current model already obtains reasonable performance on larger graphs, out of the box.

Finally, we note that \textsc{Avici} scales very poorly to graphs significantly beyond the scope of their training set. For example, $N=100$ is only $2\times$ their largest training graphs, but the performance already drops dramatically.

Figure \ref{fig:runtime-scaling} depicts the model runtimes. \ours{} continues to run quickly on much larger graphs, while \textsc{Avici} runtimes increase significantly with graph size.

\begin{figure*}[ht]
    \centering
    \includegraphics[height=2.5in]{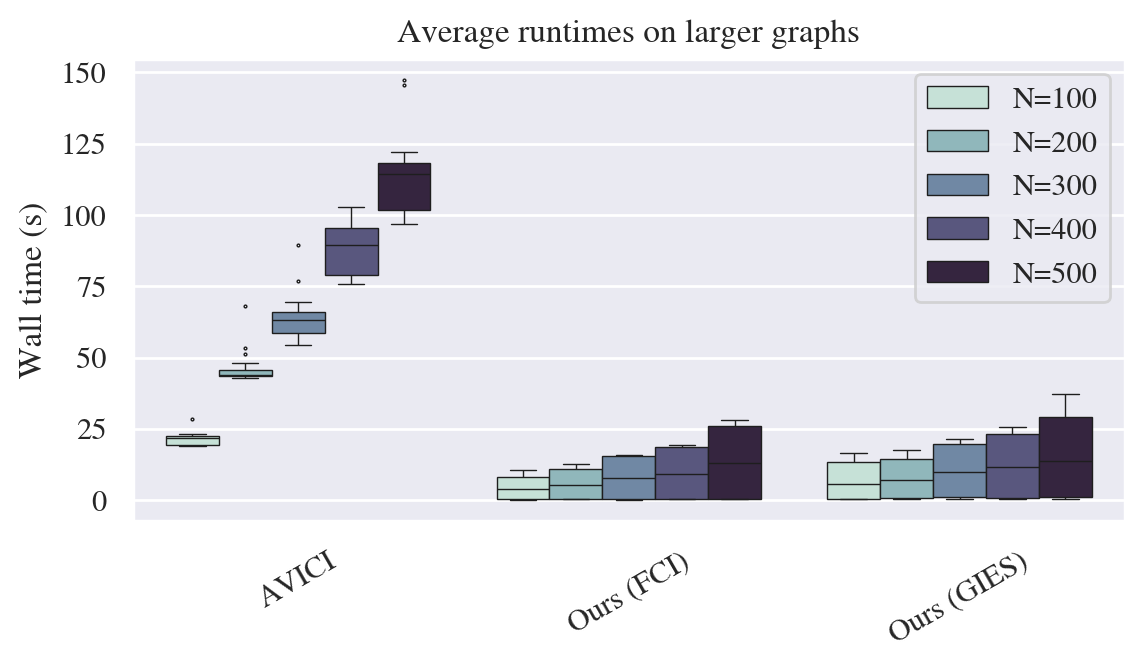}
    \caption{\ours{} scales very well in terms of runtime on much larger graphs, while \textsc{Avici} runtimes suffer as graph sizes increase.}
    \label{fig:runtime-scaling}
\end{figure*}

\textsc{Dcdi} learns a new generative model over each dataset, and its more powerful, deep sigmoidal flow variant seems to perform well in some (but not all) of these harder cases.

\begin{figure*}[ht]
    \centering
    \includegraphics[height=2in]{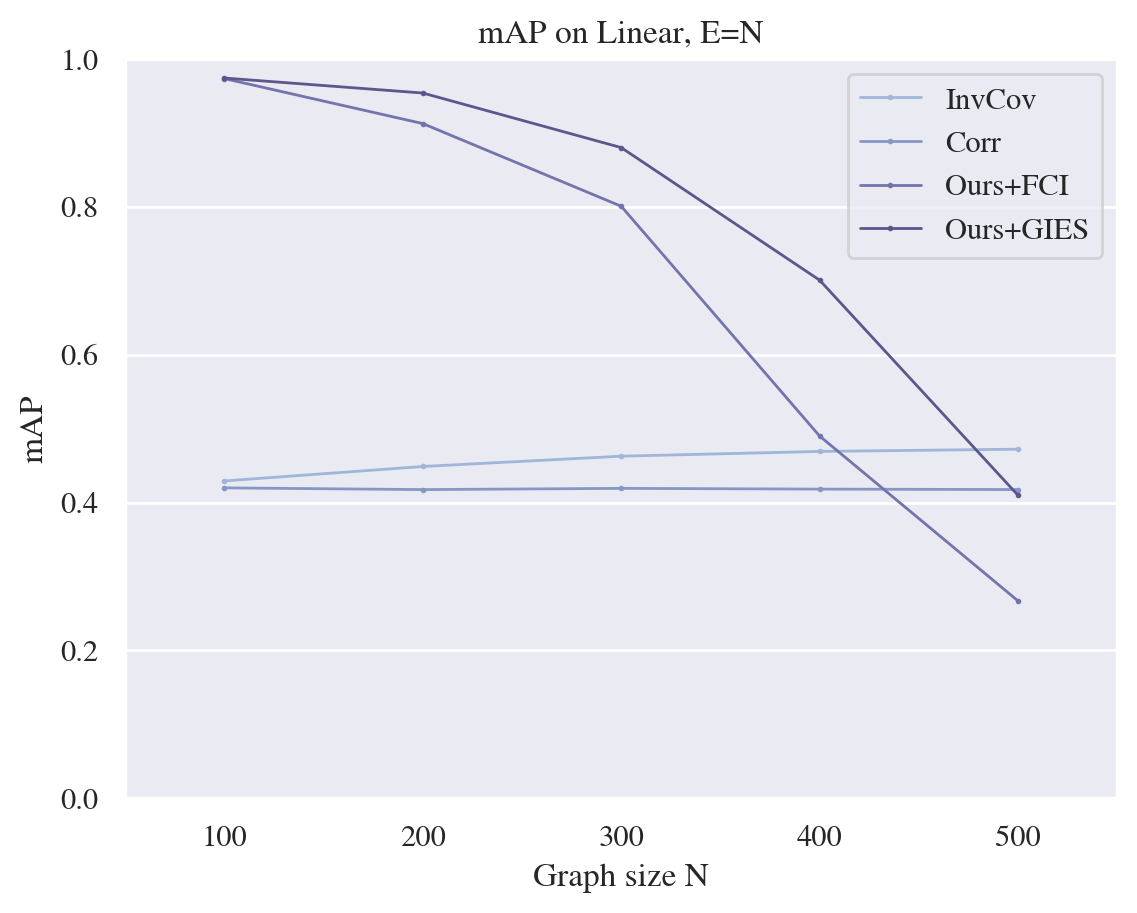}
    \includegraphics[height=2in]{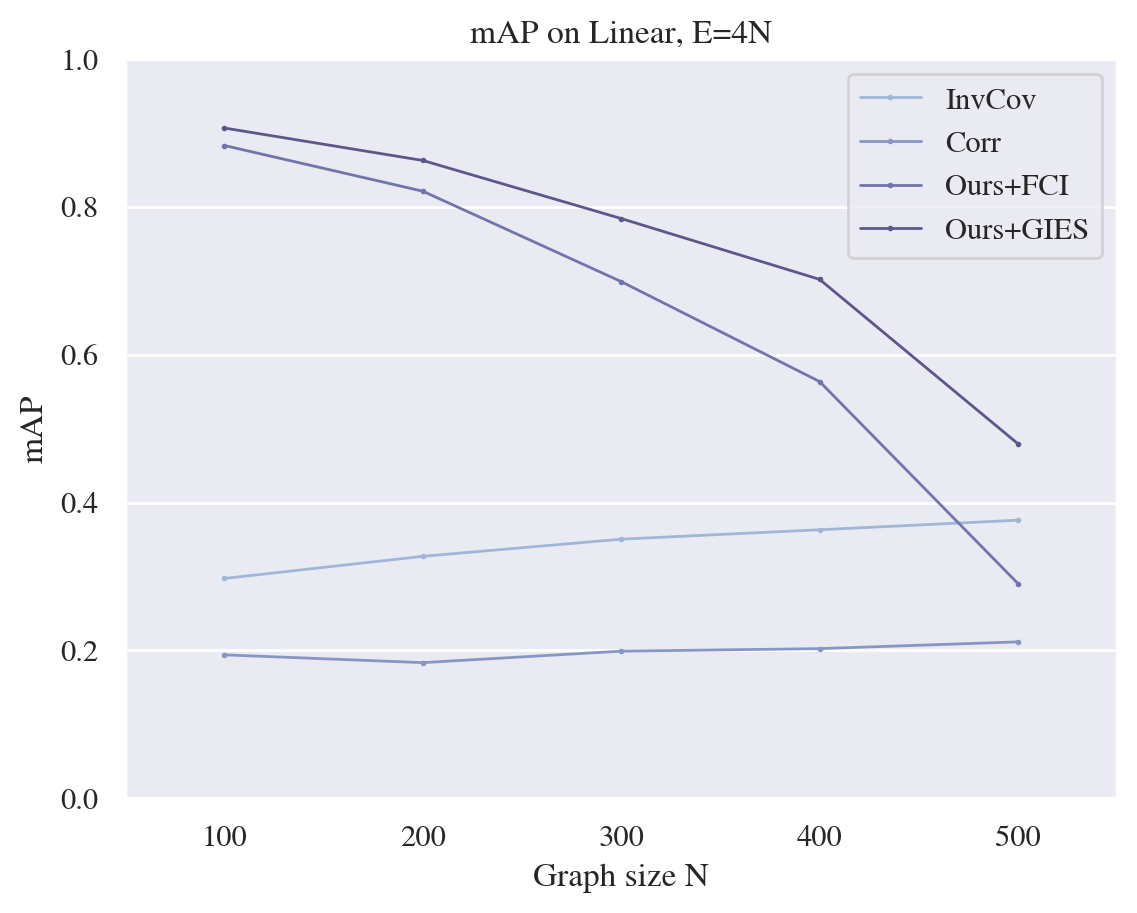}
    \caption{mAP on graphs larger than seen during training.
    During training, we only sampled a maximum of 100 subsets, so performance drop may be due to extrapolation beyond trained embeddings.
    We did not have time to finetune these embeddings for more samples.
    These values correspond to the numbers in Table~\ref{table:graph-size-scaling}.}
    \label{fig:graph-size}
\end{figure*}

\subsection{Results on real datasets}
\label{sec:additional-real}

The Sachs flow cytometry dataset~\citep{sachs} measured the expression of phosphoproteins and phospholipids at the single cell level.
We use the subset proposed by~\cite{igsp}.
The ground truth ``consensus graph'' consists of 11 nodes and 17 edges over 5,845 samples, of which 1,755 are observational and 4,091 are interventional.
The observational data were generated by a ``general perturbation'' which activated signaling pathways, and the interventional data were generated by perturbations intended to target individual proteins.
Despite the popularity of this dataset in causal discovery literature (due to lack of better alternatives), biological networks are known to be time-resolved and cyclic,
so the validity of the ground truth ``consensus'' graph has been questioned by experts~\cite{mooij2020joint}.
Nonetheless, we benchmark all methods on this dataset in Table~\ref{table:sachs-full}.

\begin{table}[ht]
\caption{Swapping to estimation algorithms with significantly different assumptions (LiNGAM) leads to a larger performance drop.
Interestingly, the GIES model seems to depend more on marginal estimates, while the FCI depends more on global statistics.
Results on $N=10, E=10$ observational setting.
}
\setlength\tabcolsep{3 pt}
\vspace{-0.1in}
\label{table:frankenstein2}
\begin{tabular}{ll rr rr rr rr rr}
\toprule

\multirow{2}{1cm}{Train}
&
\multirow{2}{1.5cm}{Inference}
& \multicolumn{2}{c}{Linear} 
& \multicolumn{2}{c}{NN add.}
& \multicolumn{2}{c}{NN non-add.}
& \multicolumn{2}{c}{Sigmoid$^\dagger$}
& \multicolumn{2}{c}{Polynomial$^\dagger$} 
\\
\cmidrule(l{\tabcolsep}){3-4}
\cmidrule(l{\tabcolsep}){5-6}
\cmidrule(l{\tabcolsep}){7-8}
\cmidrule(l{\tabcolsep}){9-10}
\cmidrule(l{\tabcolsep}){11-12}
&&\multicolumn{1}{c}{mAP $\uparrow$} & \multicolumn{1}{c}{SHD $\downarrow$} & \multicolumn{1}{c}{mAP $\uparrow$} & \multicolumn{1}{c}{SHD $\downarrow$} & \multicolumn{1}{c}{mAP $\uparrow$} & \multicolumn{1}{c}{SHD $\downarrow$} & \multicolumn{1}{c}{mAP $\uparrow$} & \multicolumn{1}{c}{SHD $\downarrow$} & \multicolumn{1}{c}{mAP $\uparrow$} & \multicolumn{1}{c}{SHD $\downarrow$} 
\\
\midrule
\multirow{5}{*}{FCI}
&
FCI
& $0.96 \scriptstyle \pm .03$& $3.2 \scriptstyle \pm .6$& $0.91 \scriptstyle \pm .04$& $5.0 \scriptstyle \pm .8$& $0.82 \scriptstyle \pm .05$& $8.8 \scriptstyle \pm .9$& $0.85 \scriptstyle \pm .09$& $6.7 \scriptstyle \pm .1$& $0.69 \scriptstyle \pm .09$& $9.8 \scriptstyle \pm .2$\\
&PC& $0.95 \scriptstyle \pm .04$& $3.6 \scriptstyle \pm .4$& $0.91 \scriptstyle \pm .05$& $4.6 \scriptstyle \pm .6$& $0.81 \scriptstyle \pm .06$& $10.0 \scriptstyle \pm .3$& $0.84 \scriptstyle \pm .07$& $7.4 \scriptstyle \pm .2$& $0.65 \scriptstyle \pm .14$& $10.8 \scriptstyle \pm .5$\\
&GES& $0.94 \scriptstyle \pm .05$& $4.4 \scriptstyle \pm .3$& $0.91 \scriptstyle \pm .05$& $4.2 \scriptstyle \pm .6$& $0.81 \scriptstyle \pm .06$& $9.6 \scriptstyle \pm .6$& $0.81 \scriptstyle \pm .10$& $8.4 \scriptstyle \pm .9$& $0.61 \scriptstyle \pm .16$& $11.5 \scriptstyle \pm .2$\\
&GRaSP& $0.94 \scriptstyle \pm .05$& $4.0 \scriptstyle \pm .0$& $0.91 \scriptstyle \pm .05$& $4.4 \scriptstyle \pm .9$& $0.81 \scriptstyle \pm .06$& $10.0 \scriptstyle \pm .0$& $0.81 \scriptstyle \pm .10$& $8.5 \scriptstyle \pm .0$& $0.61 \scriptstyle \pm .16$& $11.5 \scriptstyle \pm .2$\\
&LiNGAM& $0.88 \scriptstyle \pm .08$& $8.0 \scriptstyle \pm .5$& $0.84 \scriptstyle \pm .05$& $6.0 \scriptstyle \pm .3$& $0.74 \scriptstyle \pm .05$& $10.8 \scriptstyle \pm .3$& $0.71 \scriptstyle \pm .06$& $12.5 \scriptstyle \pm .5$& $0.59 \scriptstyle \pm .12$& $14.4 \scriptstyle \pm .6$\\

\midrule
\multirow{4}{*}{GIES}
&PC& $0.96 \scriptstyle \pm .02$& $1.8 \scriptstyle \pm .7$& $0.91 \scriptstyle \pm .05$& $2.8 \scriptstyle \pm .4$& $0.89 \scriptstyle \pm .10$& $3.2 \scriptstyle \pm .2$& $0.82 \scriptstyle \pm .14$& $4.1 \scriptstyle \pm .5$& $0.58 \scriptstyle \pm .20$& $6.7 \scriptstyle \pm .3$\\
&GES& $0.95 \scriptstyle \pm .03$& $2.0 \scriptstyle \pm .9$& $0.91 \scriptstyle \pm .05$& $2.6 \scriptstyle \pm .1$& $0.88 \scriptstyle \pm .11$& $3.4 \scriptstyle \pm .5$& $0.81 \scriptstyle \pm .15$& $4.1 \scriptstyle \pm .5$& $0.57 \scriptstyle \pm .19$& $6.8 \scriptstyle \pm .1$\\
&GRaSP& $0.95 \scriptstyle \pm .03$& $1.8 \scriptstyle \pm .7$& $0.92 \scriptstyle \pm .05$& $3.0 \scriptstyle \pm .8$& $0.88 \scriptstyle \pm .11$& $3.2 \scriptstyle \pm .2$& $0.81 \scriptstyle \pm .15$& $4.0 \scriptstyle \pm .4$& $0.57 \scriptstyle \pm .19$& $6.9 \scriptstyle \pm .0$\\
&LiNGAM& $0.60 \scriptstyle \pm .14$& $5.6 \scriptstyle \pm .8$& $0.40 \scriptstyle \pm .23$& $9.0 \scriptstyle \pm .6$& $0.40 \scriptstyle \pm .15$& $9.4 \scriptstyle \pm .2$& $0.53 \scriptstyle \pm .15$& $7.2 \scriptstyle \pm .0$& $0.51 \scriptstyle \pm .14$& $7.9 \scriptstyle \pm .0$\\

\bottomrule
\end{tabular}
\end{table}

\begin{table}[t]
\caption{
Complete results on Sachs flow cytometry dataset~\citep{sachs}, using the subset proposed by \citep{igsp}.
}
\label{table:sachs-full}
\setlength\tabcolsep{3 pt}
\begin{small}\begin{center}
\begin{tabular}[b]{l rrr}
\toprule
Model & mAP $\uparrow$ & AUC $\uparrow$ & SHD $\downarrow$ \\
\midrule

\textsc{Dcdi-G} & $0.17$& $0.55$& $21$\\
\textsc{Dcdi-Dsf} & $0.20$& $0.59$& $20$\\
\textsc{Dcd-Fg} & $0.32$& $0.59$& $27$\\
\textsc{DiffAn} & $0.14$& $0.45$& $37$\\
\textsc{Deci} & $0.21$& $0.62$& $28$\\
\textsc{Avici-L} & $0.35$& $0.78$& $20$\\
\textsc{Avici-R} & $0.29$& $0.65$& $18$\\
\textsc{Avici-L+R} & $\textbf{0.59}$& $\textbf{0.83}$& $14$\\
\midrule
\textsc{Fci*} & $0.27$& $0.59$& $18$\\
\textsc{Gies*} & $0.21$& $0.59$& $17$\\
\midrule
\ours{} \textsc{(Fci)} & $0.23$& $0.54$& $24$\\
\textsc{+Kci} & $0.33$& $0.63$& $14$\\
\textsc{+Corr} & $0.41$& $0.70$& $15$\\
\textsc{+Kci+Corr} & $0.49$& $0.71$& $\textbf{13}$\\
\ours{} \textsc{(Gies)} & $0.23$& $0.60$& $14$\\

\bottomrule
\end{tabular}
\end{center}\end{small}
\end{table}

\end{document}